\theoremstyle{plain}
\newtheorem{theorem}{Theorem}[section]
\newtheorem{proposition}[theorem]{Proposition}
\newtheorem{lemma}[theorem]{Lemma}
\theoremstyle{definition}
\newtheorem{assumption}[theorem]{Assumption}
\theoremstyle{remark}
\icmltitlerunning{Safe Offline Reinforcement Learning with Real-Time Budget Constraints}
\begin{document}

\twocolumn[
\icmltitle{Safe Offline Reinforcement Learning with Real-Time Budget Constraints}



\icmlsetsymbol{equal}{*}

\begin{icmlauthorlist}
\icmlauthor{Qian Lin}{sch}
\icmlauthor{Bo Tang}{comp,equal}
\icmlauthor{Zifan Wu}{sch,equal}
\icmlauthor{Chao Yu}{sch}
\icmlauthor{Shangqin Mao}{comp}
\icmlauthor{Qianlong Xie}{comp}
\icmlauthor{Xingxing Wang}{comp}
\icmlauthor{Dong Wang}{comp}
\end{icmlauthorlist}

\icmlaffiliation{sch}{Sun Yat-Sen University}
\icmlaffiliation{comp}{Meituan}

\icmlcorrespondingauthor{Chao Yu}{yuchao3@mail.sysu.edu.cn}

\icmlkeywords{Machine Learning, ICML}

\vskip 0.3in
]



\printAffiliationsAndNotice{\icmlEqualContribution} 

\begin{abstract}
Aiming at promoting the safe real-world deployment of Reinforcement Learning (RL), research on safe RL has made significant progress in recent years.
However, most existing works in the literature still focus on the online setting where risky violations of the safety budget are likely to be incurred during training.
Besides, in many real-world applications, the learned policy is required to respond to dynamically determined safety budgets (i.e., constraint threshold) in real time.
In this paper, we target at the above \emph{real-time budget constraint problem} under the offline setting, and propose \textbf{T}rajectory-based \textbf{RE}al-time \textbf{B}udget \textbf{I}nference (\textbf{TREBI}) as a novel solution that models this problem from the perspective of trajectory distribution and solves it through diffusion model planning.
Theoretically, we prove an error bound of the estimation on the episodic reward and cost under the offline setting and thus provide a performance guarantee for TREBI.
Empirical results on a wide range of simulation tasks and a real-world large-scale advertising application demonstrate the capability of TREBI in solving real-time budget constraint problems under offline settings.
\end{abstract}

\section{Introduction}

In recent years, Reinforcement Learning (RL)~\cite{sutton2018reinforcement} has achieved great successes in solving complex decision-making problems, such as games~\cite{silver2016mastering,vinyals2019grandmaster}, robotics~\cite{peng2018sim,hanna2021grounded} and recommendation systems~\cite{zheng2018drn}. 
However, the concern of safety still remains a major challenge preventing many real-world deployments of RL, which motivates the research of safe RL~\cite{xu2022trustworthy,gu2022review}.
Briefly speaking, safe RL aims to learn a policy that maximizes the long-term reward while satisfying certain constraints.
Many safe RL approaches have been proposed in the past few years~\cite{achiam2017constrained,zhang2020first,sootla2022saute,liu2022constrained}.
Unfortunately, most existing approaches only target at the online setting, where potentially risky constraint
violations can be incurred during interactions with the real environment.
As a kind of data-driven methods, offline RL~\cite{levine2020offline} aims to derive a policy from offline data without further real-world exploration, and thus is particularly suitable for safety-critical applications.
Despite the recent progress in the offline RL literature~\cite{fujimoto2019off,kumar2020conservative,fujimoto2021minimalist}, however, there are still limited works focusing on attaining a safe policy under the offline setting.


Moreover, many real-world scenarios require the deployed policy to respond to dynamically determined budgets (i.e., constraint threshold) in real time, rather than simply satisfy the same budget fixed throughout both the training and the deployment phase.
In such a \emph{real-time budget constraint problem}, it is often impossible to retrain a policy from scratch for each unseen budget during the deployment of the learned policy. 
Taking the advertising scenario for example, different advertisers set different budgets in real time, and the advertising system needs to make the optimal decision on the allocation of display channels under different budgets.
Also, in the autonomous driving scenario, the road conditions could change rapidly, imposing time-varying constraints on the speed, acceleration or safe distance of the vehicles, hence the policy should be able to quickly adapt to such changes without further time-consuming retraining.


In this paper, we provide a novel offline RL solution, i.e., the \textbf{T}rajectory-based \textbf{RE}al-time \textbf{B}udget \textbf{I}nference (\textbf{TREBI}), to the above real-time budget constraint problem.
The key idea of TREBI lies in approaching the constrained policy optimization problem from the perspective of trajectory distribution. 
This can not only yield the derivation of the optimal trajectory distribution w.r.t. a certain budget and a given offline dataset, but also enable strict constraint satisfaction (i.e., with probability one in theory) in the trajectory level, which is unlike the rough constraint satisfaction (i.e., expectation over trajectories) in most of previous studies.
Note that this is of great significance in real-life applications of RL, particularly for safety-critical domains where only one single constraint violation could possibly lead to catastrophic consequences.
Moreover, TREBI requires only a mild assumption regarding the quality of offline data. Thus, it is able to learn a safe policy even from a dataset collected by an unconstrained policy.


To model the optimal trajectory distribution w.r.t. a certain budget, we adopt the Diffuser~\cite{janner2022planning}, a trajectory optimization method based on the diffusion model~\cite{sohl2015deep,ho2020denoising}, as the backbone for TREBI.
Specifically, the trajectory distribution of the behavior policy is approximated using the offline dataset in the learning phase, and then adaptive responses to the real-time budgets are achieved by employing a budget-related trajectory planning in the inference phase.
Theoretically, we prove an error bound of the offline estimation on the episodic reward and cost, which in turn yields a performance guarantee for TREBI.
Empirically, we demonstrate that TREBI can solve the offline real-time budget constraint problem effectively 
compared to the existing safe offline RL baselines both in simulation and real-world tasks.
The capability to utilize datasets with various levels of quality is also verified in the experiments.

\section{Related Work}

\textbf{Offline RL}
\quad Offline RL refers to the problem of learning RL policies from static datasets. 
The main issue of offline RL is the distribution shift between the state-action pairs in the dataset and those induced by the learned policy~\cite{fujimoto2019off}. 
To address this issue, model-free algorithms generally propose to regularize the learned policy to stay close to the behavior policy~\cite{fujimoto2019off,kumar2019stabilizing,wu2019behavior}, learn conservative Q-values~\cite{kumar2020conservative,lyu2022mildly}, while many model-based algorithms adopt uncertainty estimation~\cite{yu2020mopo,kidambi2020morel} or conservative value function estimation~\cite{yu2021combo, rigter2022rambo}.

\textbf{Safe RL}
\quad One of the most commonly adopted paradigms for solving safe RL problems is alternating between optimizing the policy and updating the Lagrange dual variables, which is referred to as the primal-dual approaches~\cite{chow2017risk,achiam2017constrained,tessler2018reward,ding2020natural,stooke2020responsive,as2022constrained}.
However, this kind of approaches can be sensitive to the initialization and the learning rate of Lagrange multipliers.
On the contrary, primal approaches~\cite{xu2021crpo, sootla2022saute} sidestep this issue by excluding the dual variables from the optimization framework.
Falling into this category, TREBI also does not need to optimize any dual variables.

\textbf{Safe Offline RL}
\quad Despite the significant progress of offline RL in recent years, works focusing on the safe offline setting are still limited~\cite{le2019batch,zhang2021bcorle,xu2022constraints,lee2022coptidice}.
CBPL~\cite{le2019batch} solves the primal-dual problem from the game-theoretic perspective where the training and evaluation processes are performed alternatively and Lagrange multipliers are updated to narrow the dual gap. 
CPQ~\cite{xu2022constraints} applies pessimistic estimation of cost value functions for out-of-distribution actions, and updates the reward value function as well as the policy only w.r.t. safe actions. 
Rather than using a policy or value functions like most of previous works, COptiDICE~\cite{lee2022coptidice} directly optimizes the state-action stationary distribution and then extracts the policy from the stationary distribution by importance-weighted behavioral cloning. 
BCORLE($\lambda$) optimizes the policy under  offline BCQ~\cite{fujimoto2019off} and augments the state with Lagrangian multiplier. 
An concurrent work on safe offline RL, CDT~\cite{liu2023constrained}, also focuses on the same problem setting as ours, where the constraint threshold is dynamically determined. 
It combines the decision transformer~\cite{chen2021decision} with data augmentation and solve the problem from a perspective of multi-objective optimization.
As a safe offline method, TREBI differs from the above methods mainly in terms of solving the real-time budget constraint problem in the trajectory level with strict constraint satisfaction.
\textbf{Trajectory Optimization with Planning} \quad Model-based planning methods~\cite{camacho2013model,botev2013cross,nagabandi2020deep} have been widely developed to solve the trajectory optimization problem. 
Such methods are flexible in switching the optimization objective dynamically during planning, and thus suitable for solving the real-time budget constraint problem.
PETS~\cite{chua2018deep} incorporates uncertainty into the dynamics model by using an ensemble of bootstrapped models and compute the optimal trajectory by CEM~\cite{botev2013cross}. 
RCE~\cite{liu2020constrained}, as a model-based planning method for safe RL problems, utilizes a bootstrapped ensemble of neural networks and applies CEM to maximize the expected returns of safe action sequences.
It takes into account the model uncertainty and achieves robustness under this uncertainty.
However, using single-step dynamics to autoregressively generate trajectories may suffer from the compounding rollout errors of long-term predictions~\cite{janner2019trust}, which would further lead to tremendous estimation error of value functions. 
In the safe RL context, such an error can have a detrimental effect on the constraint satisfaction.
On the contrary, TREBI performs planning by generating the whole trajectories simultaneously and thus is able to avoid  the compounding error issue.


\section{Preliminaries}
\subsection{Problem Setting}
The problem of safe RL with a budget can be formulated as a Constrained Markov Decision Process (CMDP), which is represented by a tuple $(\mathcal S,\mathcal A,T,r,c,\gamma,b)$, where
$\mathcal S$ is the state space; $\mathcal A$ is the action space; $T:\mathcal{S}\times \mathcal{A}\times \mathcal{S}\rightarrow [0,1]$ is the transition probability function, i.e., $s_{t+1}\sim T(\cdot|s_t,a_t)$; $r:\mathcal{S}\times \mathcal{A}\rightarrow [-R_{\max},R_{\max}]$ and $c:\mathcal{S}\times \mathcal{A}\rightarrow [0,C_{\max}]$ denote the reward and cost function, bounded by $R_{\max}$ and $C_{\max}$, respectively; $\gamma\in [0,1]$ is the discount factor for future reward and cost; and $b$ is the safety budget. 
A policy $\pi$ : $\mathcal{S}\rightarrow P(\mathcal{A})$ maps the states to a probability distribution over actions. 

A trajectory $\tau$ is composed of a sequence of state-action pairs: $\tau=\{\tau_{s_0}, \tau_{a_0}, \tau_{s_1}, \tau_{a_1}, ...,\tau_{s_L}, \tau_{a_L}\}$, where $L$ is the maximum episode length. 
The cumulative reward and cumulative cost of trajectory $\tau$ are denoted as $R(\tau)=\sum_{t=0}^{L}\gamma^t r(\tau_{s_t},\tau_{a_t})$ and $C(\tau)=\sum_{t=0}^{L}\gamma^t c(\tau_{s_t},\tau_{a_t})$, respectively. 
The policy can be inferred from the distribution of trajectory $p(\tau)$ since the actions can be determined by: $\pi(a|s)\propto\int p(\tau|\tau_{s_0}=s, \tau_{a_0}=a)d\tau_{}$. 
In other words, the trajectory distribution can be seen as a special policy, since the output action $a$ w.r.t. the input state $s$ can be obtained by adopting the first action of the trajectory sampled from the conditioned distribution $p(\tau|\tau_{s_0}=s)$.
%
%
%
%
%
%

In the context of safe offline RL, we assume no access to online interactions with the environment and offline dataset $\mathcal{D}=\{(s,a,s',r,c)\}$ is the only data available for training. 
The dataset $\mathcal{D}$ is generated from a set of behavior policies jointly, which might violate the budget constraints.
Let $\pi_\beta(a|s)=\frac{N(s,a)}{N(s)}$ denote the empirical behavior policy and $\hat T(s'|s,a)=\frac{N(s,a,s')}{N(s,a)}$ the empirical probability function, where $N(\cdot)$ is the count of the corresponding tuple observed in $\mathcal{D}$. 
Let $p_{\pi_\beta}$ denote the distribution of trajectories generated by executing $\pi_\beta$ in $\hat T$. 
The goal of safe offline RL with strict constraint satisfaction is to learn a policy $\pi$ from $\mathcal{D}$ that maximizes the expected cumulative reward while satisfying the cumulative cost in the trajectory level:
\begin{align}
    &\max_\pi \mathbb{E}_{\tau\sim \pi,T}[R(\tau)] \nonumber \\  
    \text{s.t. }& \forall{\tau \sim \pi,T}\ \ \  C(\tau)\leq b.   \nonumber
\end{align} 
where $\tau \sim \pi,T$ denotes that $\tau$ is generated by executing $\pi$ in $T$.

\subsection{Real-time Budget Constraint}
\label{sec:real_time_budget_formulation}
The above safe RL problem with a fixed budget can be extended to a real-time budget constraint problem by taking dynamically determined budgets into account. 
The set of candidate budgets $b$ is denoted as $B$ which is bound and can contain finite or infinite elements.
The optimization objective for any given budget $ b\in B$ is then given as follows:
\begin{align}
    &\max_{\pi_b} \mathbb{E}_{\tau\sim \pi_b,T}[R(\tau)]\nonumber\\
    \text{s.t. }& \forall{\tau \sim \pi_b,T}\ \ \  C(\tau)\leq b.   \nonumber
\end{align}
%
where $\pi_b=\pi(\cdot|s,b)$ denotes the policy under the given safety budget $b$. 
To solve the real-time budget problem, a simple and intuitive idea is to discretize the range of budgets and train a single policy for each discrete value of budgets with the off-the-shelf fixed-budget algorithms~\cite{xu2022constraints,lee2022coptidice}. 
However, it is intractable to determine the appropriate granularity of budget division without guidance of prior knowledge. 
A small granularity leads to a large number of candidate policies to train, which is computationally expensive, while a large granularity incurs the difficulty of precisely covering all possible budgets. 
Safe RL with state augmentation~\cite{zhang2021bcorle,sootla2022saute} is another solution, where budget-related information is injected into the state definition to assist policy in dealing with different budget constraints. 
However, the growth of dataset size and state dimension caused by state augmentation may lead to scalability issues and extra computational cost during policy training.
Our solution is to introduce planning based on the trajectory optimization to provide a flexibility for dynamic switching of constrained optimization objectives during the real-time inference process.

\subsection{Diffuser}
\label{sec:diffuser}
In diffusion probalitistic models~\cite{sohl2015deep}, by iteratively adding noise to the data through an forward diffusion process $q(\tau^{i}|\tau^{i-1})$, the raw data $\tau^0$ in the offline dataset is converted into $\tau^N$ that approximately conforms to a noise distribution (e.g., the standard Gaussian).
The denoising process $p(\tau^{i-1}|\tau^{i})$, as the reverse of $q(\tau^{i}|\tau^{i-1})$, can be utilized to progressively rebuild the data distribution from the noise distribution.
The reverse process is often parameterized as Gaussian, i.e., $p(\tau^{i-1}|\tau^{i})=\mathcal{N}(\tau^{i-1};\mu_\theta(\tau^i,i),\Sigma^i)$, while the forward process is pre-determined.

Trajectory optimization in Diffuser is divided into two phases. 
During training, a denoising model $p_\theta(\tau^{i-1}|\tau^{i})$ is trained to model the trajectory distribution of behavior policy $p_{\pi_\beta}$ by minimizing a variational bound on the negative log likelihood: $\min_{\theta}-E_{\tau\sim D}[\log p_{\theta}(\tau)]$, i.e., minimizing the KL divergence between $p_\theta$ and $p_{\pi_\beta}$.
Meanwhile, a function $h(\tau)$ is learned to indicate the optimization objective for $\tau$.
For instance, to maximize the cumulative reward of generated trajectories, $h(\tau)$ can be set as the exponential of reward value estimation fitted on the offline trajectories.
During inference, the generated trajectories are guided for planning to achieve the optimization objective.
Specifically, provided that $\log h(\tau)$ is smooth enough~\cite{sohl2015deep}, 
we can sample trajectories from $\tilde p_{\theta}(\tau) = p_\theta(\tau)h(\tau)$ by modifying the original reverse process of diffusion model to:
\begin{align}
    \tilde p_{\theta}(\tau^{i-1}|\tau^i)&\propto p_{\theta}(\tau^{i-1}|\tau^i)h(\tau^i)\nonumber\\
    &\approx \mathcal{N}(\tau^{i-1};\mu_\theta(\tau^{i},i)+ \Sigma^i g,\Sigma^i),\label{eq:modified_reverse}
\end{align}
where $\tau^i$ denotes the trajectory obtained in the $i^{th}$ step of reverse process and $g=\nabla_{\tau^i} \log h(\tau^i)$. 

\section{Trajectory-based REal-time Budget Inference (TREBI)}

In this section, we first introduce a trajectory perspective for the safe offline RL in Subsection~\ref{sec:rl_to_match}, and then provide a practical algorithm to this problem in Subsection~\ref{sec:traj_opti}, followed by key theoretical analysis in Subsection~\ref{sec:theoretical}.



\subsection{A Trajectory Optimization Perspective of Safe Offline RL}
\label{sec:rl_to_match}

In essence, trajectory optimization methods~\cite{chen2021decision,janner2021offline,furuta2021generalized,janner2022planning} fall into the model-based RL category by simultaneously incorporating the dynamics and the policy into one single optimization objective.
In the context of offline RL, the model-based objective of policy $\pi$ can be written as maximizing the expected return under the empirical dynamics $\hat{T}$, i.e., $\max_{\pi} \mathbb{E}_{\tau\sim \pi,\hat T}[R(\tau)]$.
To alleviate the distribution shift issue, a common practice in offline RL is to constrain the divergence between the target policy and the behavior policy~\cite{kumar2019stabilizing,wu2019behavior,fujimoto2021minimalist}, which is referred to as the \emph{in-distribution constraint} in this paper.
From the perspective of trajectory optimization, the in-distribution constraint can be expressed in the following form:  $D_{KL}(q(\tau)||p_{\pi_\beta}(\tau))\leq \epsilon$,
where $q(\tau)$is target trajectory distribution and $\epsilon$ is an approximately chosen divergence threshold.
Then, the policy optimization problem can be converted to the form of trajectory optimization as follows:
\begin{align}
    &\max_{q(\tau)} \mathbb{E}_{\tau\sim q(\tau)}[R(\tau)], \label{eq:traj_oj}\\
    \text{s.t. } &D_{KL}(q(\tau)||p_{\pi_\beta}(\tau))\leq \epsilon.\label{eq:traj_con}
\end{align}
It is worth noting that such a conversion is equivalent only when the dynamics induced by the optimized $q$ is consistent with $\hat{T}$.
In environments with deterministic dynamics, this condition is satisfied since the inconsistency between $\hat{T}$ and the dynamics induced by the optimized $q$ would result in $p_{\pi_{\beta}}(\tau) = 0$ and $q(\tau) \neq 0$ simultaneously for some $\tau$, which would in turn lead to an infinite KL Divergence in the left hand side of Eq.~\eqref{eq:traj_con} and thus violates the in-distribution constraint.
In environments with probabilistic dynamics, such a conversion is not strictly equivalent, but the in-distribution constraint can still induce a trade-off between 
estimating some parts of the transition dynamics inaccurately (which would increase $D_{KL}(q(\tau)||p_{\pi_\beta}(\tau))$) and optimizing the expected return of the trajectory distribution (i.e., $ \mathbb{E}_{\tau\sim q(\tau)}[R(\tau)]$).
More theoretical analysis will be discussed in Subsection~\ref{sec:theoretical}. 



One of the advantages of optimizing the policy in the trajectory level is the capability of enabling strict constraint satisfaction, i.e., realizing the per-trajectory constraint with probability one in theory, which is in contrast to the constraint of the expectation over trajectories in most of previous studies.
Formally, for any given budget $b$, the per-trajectory safety constraint can be formulated as $\int_{\{\tau | C(\tau)\leq b\}}q(\tau)d\tau=1$
, where the integral is taken over all constraint-satisfied trajectories.
For brevity, we slightly abuse the notation $\int_{C(\tau)\leq b}$ to represent $\int_{\{\tau | C(\tau)\leq b\}}$ in the following.
Then, the offline constrained trajectory optimization problem can be given as follows:
\begin{align}
    &\max_{q(\tau)} \mathbb{E}_{\tau\sim q(\tau)}[R(\tau)]\label{eq:offlinerl_oj}\\
    \text{s.t. }&\int_{C(\tau)\leq b}q(\tau)d\tau=1 \label{eq:offlinerl_safe_cons}\\
    & D_{KL}(q(\tau)||p_{\pi_\beta}(\tau))\leq \epsilon.\label{eq:offlinerl_traj_cons}
\end{align}
While the 
closed-form solution satisfying both Eq.~\eqref{eq:offlinerl_safe_cons} and Eq.~\eqref{eq:offlinerl_traj_cons} does not necessarily exist, we show that the such a solution can be achieved provided a mild condition is met:

\begin{theorem}\label{thm:q^*_b}
    If the trajectory distribution of the behavior policy satisfies:
    \begin{equation}\label{eq:optimal_q_condition}
        \int_{C(\tau)\leq b}p_{\pi_\beta}(\tau)d\tau\geq e^{-\epsilon},
    \end{equation}
    then the optimal trajectory distribution for problem \eqref{eq:offlinerl_oj}\eqref{eq:offlinerl_safe_cons}\eqref{eq:offlinerl_traj_cons} exists and takes the following form: 
    \begin{equation}\label{eq:optimal_q}
            q^*_b(\tau)=
            \begin{cases} 
            p_{\pi_\beta}(\tau)\exp(\alpha R(\tau))/Z & \text{if }C(\tau)\leq b;\\
            0                           & \text{otherwise},
            \end{cases}
    \end{equation}
    where $\alpha$ is a constant depending on $\epsilon$ and $b$, and $Z=\int_{C(\tau)\leq b}p_{\pi_\beta}(\tau)\exp(\alpha R(\tau))d\tau$ is a constant normalizer to make sure that $q^*_b(\tau)$ is a valid distribution.
\end{theorem}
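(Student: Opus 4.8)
The plan is to first collapse the safety constraint into a support restriction: since $q$ is a probability density integrating to one over all trajectories, the equality \eqref{eq:offlinerl_safe_cons} forces $q$ to vanish almost everywhere outside the feasible set $\mathcal{F}:=\{\tau\mid C(\tau)\le b\}$. The problem then reduces to maximizing the linear functional $\int_{\mathcal{F}}q(\tau)R(\tau)\,d\tau$ over densities supported on $\mathcal{F}$, subject only to normalization and the divergence bound \eqref{eq:offlinerl_traj_cons}. Because the objective is linear in $q$ and $D_{KL}(q\|p_{\pi_\beta})$ is convex in $q$, the reduced program is convex, so any feasible stationary point of its Lagrangian will be a global maximizer.

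Next I would settle feasibility, which is exactly where condition \eqref{eq:optimal_q_condition} enters. Writing $P:=\int_{\mathcal{F}}p_{\pi_\beta}(\tau)\,d\tau$ and letting $q_0:=p_{\pi_\beta}\mathbf{1}_{\mathcal{F}}/P$ be the renormalized restriction of the behavior distribution to $\mathcal{F}$, the decomposition $D_{KL}(q\|p_{\pi_\beta})=D_{KL}(q\|q_0)-\log P$, valid for every $q$ supported on $\mathcal{F}$, shows that the smallest attainable divergence equals $-\log P$ and is achieved uniquely at $q_0$. Hence the feasible set is nonempty iff $-\log P\le\epsilon$, i.e. iff $P\ge e^{-\epsilon}$, which is precisely the hypothesis \eqref{eq:optimal_q_condition}; this simultaneously guarantees a feasible point and explains the form of the threshold.

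I would then derive the closed form and pin down $\alpha$. Introducing multipliers $\lambda\ge0$ for the divergence bound and $\nu$ for normalization, the Lagrangian is
\[
\mathcal{L}=\int_{\mathcal{F}}qR\,d\tau-\lambda\big(D_{KL}(q\|p_{\pi_\beta})-\epsilon\big)-\nu\big(\textstyle\int_{\mathcal{F}}q\,d\tau-1\big),
\]
and setting the variational derivative $R(\tau)-\lambda(\log\frac{q(\tau)}{p_{\pi_\beta}(\tau)}+1)-\nu=0$ to zero gives $q(\tau)\propto p_{\pi_\beta}(\tau)\exp(R(\tau)/\lambda)$ on $\mathcal{F}$; writing $\alpha=1/\lambda$ and folding the leftover constants into the normalizer $Z$ reproduces \eqref{eq:optimal_q}. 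To fix $\alpha$, I would restrict to the exponential family $q_\alpha\propto p_{\pi_\beta}e^{\alpha R}\mathbf{1}_{\mathcal{F}}$ and use the log-partition identities $\frac{d}{d\alpha}\mathbb{E}_{q_\alpha}[R]=\mathrm{Var}_{q_\alpha}[R]\ge0$ and $\frac{d}{d\alpha}D_{KL}(q_\alpha\|p_{\pi_\beta})=\alpha\,\mathrm{Var}_{q_\alpha}[R]\ge0$: the expected reward is nondecreasing in $\alpha$, while the divergence rises continuously from $-\log P\le\epsilon$ at $\alpha=0$, so by the intermediate value theorem some $\alpha\ge0$ makes the constraint bind at $\epsilon$, and since increasing $\alpha$ strictly improves the objective this binding $\alpha$ is optimal.

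The step I expect to be the main obstacle is making the variational/KKT argument rigorous in the infinite-dimensional space of trajectory densities: handling the nonnegativity constraint $q\ge0$, justifying differentiation under the integral, and invoking the Slater-type strict feasibility from the previous step to obtain strong duality, so that the stationary density is certified as the global optimum rather than a mere critical point. A secondary subtlety is the degenerate regime in which reward-tilting never exhausts the divergence budget (for instance when $R$ is constant on $\mathcal{F}$, or $\sup_{\alpha}D_{KL}(q_\alpha\|p_{\pi_\beta})<\epsilon$): there the constraint does not bind, $q_0$ is already optimal, and \eqref{eq:optimal_q} still holds with $\alpha=0$, so I would treat this case separately.
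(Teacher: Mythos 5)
Your main argument follows essentially the same route as the paper's proof: collapse the safety constraint into a support restriction, obtain the threshold condition $\int_{C(\tau)\le b}p_{\pi_\beta}(\tau)d\tau\ge e^{-\epsilon}$ by identifying the minimum attainable divergence among feasibly supported distributions, and then derive the exponential tilt from stationarity of a Lagrangian under Slater's condition. Two of your ingredients are cleaner than the paper's. First, the identity $D_{KL}(q\Vert p_{\pi_\beta})=D_{KL}(q\Vert q_0)-\log P$ for $q$ supported on $\mathcal F=\{\tau\mid C(\tau)\le b\}$ gives the feasibility threshold directly from nonnegativity of KL, whereas the paper re-derives the minimizer via a calculus-of-variations lemma (Lemma~\ref{lemma:lemma1}) and an explicit $P_c\to 1$ limiting argument. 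Second, your exponential-family computation $\frac{d}{d\alpha}D_{KL}(q_\alpha\Vert p_{\pi_\beta})=\alpha\,\mathrm{Var}_{q_\alpha}[R]$ combined with the intermediate value theorem supplies an actual existence argument for the binding $\alpha$; the paper instead asserts $\mu\neq 0$ from non-constancy of $R$, invokes complementary slackness, and leaves $\alpha$ defined only implicitly by the system $Z=\int_{C(\tau)\le b}p_{\pi_\beta}\exp(\alpha R)d\tau$ and $\alpha\int_{C(\tau)\le b}q^*R-\log Z=\epsilon$. These are refinements rather than a different method, and they buy a more self-contained existence claim for $\alpha$.

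One concrete error: your resolution of the degenerate regime is wrong in the sub-case where $\sup_\alpha D_{KL}(q_\alpha\Vert p_{\pi_\beta})<\epsilon$ but $R$ is \emph{not} constant on $\mathcal F$. There the constraint indeed never binds, but $q_0$ (i.e., $\alpha=0$) is not optimal: the objective is linear and strictly increasing along the exponential family, so the optimum is the $\alpha\to\infty$ limit, namely the renormalized restriction of $p_{\pi_\beta}$ to the maximal-reward subset of $\mathcal F$ (this regime requires that subset to carry positive mass, otherwise the supremum of the divergence is infinite and your IVT argument applies). That limiting distribution has strictly larger expected return than $q_0$ and is not of the form \eqref{eq:optimal_q} for any finite $\alpha$; correspondingly, the first-order conditions you write down, which ignore the multiplier for the positivity constraint $q\ge 0$, have no solution because the optimum sits on the boundary of that constraint. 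This blind spot is shared by the paper's own proof (which only excludes constant $R$ and implicitly assumes the KL constraint binds), so it does not place you behind the paper, but the correct statement for that sub-case is that no finite binding $\alpha$ exists, not that $\alpha=0$ works.
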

\begin{proof}
    Please refer to Appendix \ref{app:The Proof of Theorem 2}, Theorem~\ref{thm:q^*_b_2}.
\end{proof}
Note that condition \eqref{eq:optimal_q_condition} in Theorem~\ref{thm:q^*_b} states that the probability of the constraint-satisfied trajectories generated by the behavior policy should not be too small, which implies that there exist a sufficient number of constraint-satisfied trajectories in dataset. 
This condition can be easily satisfied for most of possible budgets since the behavior policy usually 
consists of multiple policies which generate the trajectories with diverse cost. 
Then, Theorem~\ref{thm:q^*_b} shows that the constrained optimal trajectory distribution $q^*_b$ can be simply obtained by zeroing the probability of unsafe trajectories and redistributing the probability of the remaining trajectories based on their returns.




\textbf{A Probabilistic Inference Perspective}
\quad The result similar to Eq.~\eqref{eq:optimal_q} can be also derived from the perspective of probabilistic inference~\cite{levine2018reinforcement}.
Specifically, an optimal variable $O$ is introduced to denote whether a given trajectory is optimal. 
Following the probabilistic graphical models in~\cite{levine2018reinforcement}, the likelihood of being optimal given trajectory $\tau$ can be represented as the exponential of the discounted cumulative reward: $P(O=1|\tau)\propto \exp(\alpha' R(\tau))$, where $\alpha'$ is a temperature parameter. 
Since the posterior we are concerned with (i.e., $p_{\pi_\beta}(\tau|O=1)$) is intractable, an auxiliary trajectory distribution $q(\tau)$ is introduced to serve as an approximation, and the discrepancy between $q(\tau)$ and $p_{\pi_\beta}(\tau|O=1)$ can be derived as (see Appendix~\ref{app:Probabilistic Inference Perspective} for the proof):
\begin{equation}
    \begin{aligned}\nonumber
        &D_{KL}(q(\tau)||p_{\pi_\beta}(\tau|O=1))\\
        =&\log P_{\pi_\beta}(O=1)-J(q(\tau)),
    \end{aligned}
\end{equation}
where $J(q(\tau))=-D_{KL}(q(\tau)||p_1(\tau))$
is the evidence lower bound (ELBO) and $p_1(\tau)\propto p_{\pi_\beta}(\tau)\exp(\alpha' R(\tau))$.
Note that $\log P_{\pi_\beta}(O=1)$ is independent of $q(\tau)$, hence the discrepancy between $q(\tau)$ and $p_{\pi_\beta}(\tau|O=1)$ can be minimized by optimizing the ELBO.

Constraining $q(\tau)$ within a feasible distribution family to satisfy the safety budget with probability one,
the optimization problem becomes the following form:
\begin{align}
        &\max_{q(\tau)\in \Pi_b}\ \ -D_{KL}(q(\tau)||p_1(\tau)) \label{eq:vinference_obj}\\
        &\Pi_b = \{q(\tau):\int_{C(\tau)\leq b}q(\tau)d\tau=1\}.\label{eq:vinference_con}
\end{align}
The solution of Eq.~\eqref{eq:vinference_obj}\eqref{eq:vinference_con} has the same form of  \eqref{eq:optimal_q} (see Appendix~\ref{app:Probabilistic Inference Perspective} for the detailed derivation), except that here $\alpha'$ (corresponding to $\alpha$ in Eq.~\eqref{eq:optimal_q}) is a tunable hyper-parameter and does not vary w.r.t.  budget $b$ and  divergence threshold $\epsilon$.
The explanation of the difference between $\alpha$ and $\alpha'$ is also provided in Appendix~\ref{app:Probabilistic Inference Perspective}.

\subsection{A Practical Algorithm}
\label{sec:traj_opti}
To obtain the optimal trajectory distribution in Theorem~\ref{thm:q^*_b}, we adopt Diffuser~\cite{janner2022planning}, a recently proposed trajectory optimization framework that has demonstrated significant scalability to long-horizon trajectories, as the backbone of our algorithm.
The Diffuser model first approximates the trajectory distribution in the dateset, and then performs optimization over the approximate distribution via a guided planning process.
Recall that this planning process requires the target optimal distribution to be in the form of $p_\theta(\tau)h(\tau)$. Thus we rewrite $q^*_b(\tau)$ in Eq.~\eqref{eq:optimal_q} as:
\begin{equation}
    \begin{aligned}
        q^*_b(\tau)=p_{\pi_\beta}(\tau)h_b(\tau),
    \end{aligned}\label{eq:p=ph}
\end{equation}
where $h_b(\tau)\propto \exp(\alpha\tilde R_b(\tau))$ and 
\begin{equation}
    \begin{aligned}
        \tilde R_b(\tau)=
        \begin{cases} 
            R(\tau)                     & \text{if }C(\tau)\leq b;\\
            -\infty                            & \text{otherwise}.
        \end{cases}
    \end{aligned}
\end{equation}
The form of distribution product in Eq.~\eqref{eq:p=ph} enables the Diffuser-like two-phase trajectory optimization, which consists of first approximating $p_{\pi_\beta}(\tau)$ and then performing guided planning over the approximate distribution via $h_b(\tau)$.
Note that an infinite $\tilde R_b(\tau)$ would lead to value mutations of $h_b(\tau)$ and may thus do harm to the guiding process of the diffusion model, as mentioned in Subsection~\ref{sec:diffuser}.
Therefore, we introduce an alternative trajectory distribution to $q_b^*(\tau)$:
\begin{equation}
    \begin{aligned}
        q^*_{b,n}(\tau)&=p_{\pi_\beta}(\tau)h_{b,n}(\tau),
    \end{aligned}\label{eq:pn=phn}
\end{equation}
where $h_{b,n}(\tau)\propto\exp(\alpha\tilde R_{b,n}(\tau))$, and
\begin{equation}
    \begin{aligned}\label{eq:approx_p=pn}
        \tilde R_{b,n}(\tau)=R(\tau)-n\cdot \mathbb{I}(C(\tau)>b)\cdot(C(\tau)-b).
    \end{aligned}
\end{equation}
Intuitively, the hyper-parameter $n$ controls the ``conservativeness'' of the planning process,
meaning that a larger $n$ induces a smaller probability of constraint violation.
Formally,
\begin{equation}
    \begin{aligned}
        \lim_{n\rightarrow \infty}\int_{C(\tau)> b}q^*_{b,n}(\tau)d\tau = 0. 
    \end{aligned}
\end{equation}
%
\begin{algorithm}[tb]
    \caption{\textbf{T}rajectory-based \textbf{RE}al-time \textbf{B}udget \textbf{I}nference}
    \label{alg:algorithm1}
    \textbf{Require}: Diffuser model $\mu_\theta$, parameter $\alpha$, covariances~$\Sigma^i$, budget $b$,  hyper-parameter $n$; 
    \begin{algorithmic}[1] 
        \STATE $z_0=b$;
        \FOR{$t = 0,...,T$}
        \STATE Observe state s; initialize trajectories $\tau^N\sim \mathcal{N}(0,\mathcal{I})$;
        \FOR{$i = N,...,1$}
        \IF {$C(\tau^i)\leq z_t$}
        \STATE $g=\alpha\nabla R(\tau^i)$;
        \ELSE
        \STATE $g=\alpha(\nabla R(\tau^i)-n\cdot\nabla C(\tau^i))$;
        \ENDIF
        \STATE $\mu \leftarrow \mu_\theta(\tau^i)$;
        \STATE $\tau^{i-1}\sim \mathcal{N}(\mu+ \Sigma^i g, \Sigma^i)$;
        \STATE $\tau^{i-1}\leftarrow s$;
        \ENDFOR
        \STATE Execute the first action $\tau_{a_0}$ and get cost~$c_t$; 
        \STATE $z_{t+1}=(z_t-c_t)/\gamma$.
        \ENDFOR
      \end{algorithmic}
\end{algorithm}

To sample from the approximate optimal distribution $q^*_{b,n}$, we adapt the guided planning of Diffuser (i.e., the modified reverse process~\eqref{eq:modified_reverse}) to the constrained setting by replacing $h(\tau)$ and $g$ respectively with $h_{b,n}(\tau)$ and $g_b$, where
\begin{align}
        g_b&=\nabla_{\tau^i} \log h_{b,n}({\tau^i})\nonumber\\
        &=
        \begin{cases} 
            \alpha \nabla_{\tau^i} R({\tau^i})                    & \text{if }C({\tau^i})\leq b;\\
            \alpha(\nabla_{\tau^i} R({\tau^i})-n\nabla_{\tau^i} C({\tau^i}))                            & \text{otherwise}.
        \end{cases}\label{eq:gb}
\end{align}
%
    
By assigning different budget values to $b$ in Eq.~\eqref{eq:gb} during inference, the trajectories satisfying the corresponding budget constraint can be sampled from the trained model without time-consuming retraining, which is the major advantage of our approach in addressing the real-time budget constraint problem.
In addition, in practice the reward and the cost value function (i.e., $R$ and $C$ in Eq.~\eqref{eq:gb}) are often unknown and need to be approximated.
To reduce the long-horizon approximation error, the trajectory optimization are re-performed at some predetermined time intervals~\footnote{The ablation study of the time interval is shown in Appendix~\ref{app:control_freq}.} during a long-horizon environment episode, based on the current state and the remaining budget.
Specifically, at the $t^\text{th}$ time step, the remaining trajectory $\tau$ is regenerated by taking $s_t$ as the  first state $\tau_{s_0}$ and $z_t=(b-\sum_{k=0}^{t-1}\gamma^k c(s_k,a_k))/\gamma^t$ as a scaled version of the remaining safety budget.
Formally, the following time-varying constraint should be satisfied during each trajectory optimization:
\begin{equation}
    \begin{aligned}
    C(\tau)&\leq z_t\\
    \text{where\qquad}\tau_{s_0}&=s_t,\nonumber\\
    z_0&=b,\nonumber\\
    z_t=(z_{t-1}&-c(s_{t-1},a_{t-1 }))/\gamma.\nonumber
    \end{aligned}
\end{equation}
Keeping aware of the remaining budget, TREBI can adjust its ``conservativeness'' flexibly with the time steps to better ensure constraint satisfaction.
The pseudo code of TREBI is presented in Algorithm \ref{alg:algorithm1}.

\subsection{Theoretical Analysis}
\label{sec:theoretical}
In this subsection, we derive a performance guarantee for the reward maximization and constraint satisfaction of our optimal trajectory distribution in Proposition~\ref{thm:q^*_b}, and provide analysis and interpretations of the theoretical results.
To begin with, we present an error bound of the offline estimation on the episodic reward and cost, which serves as a useful lemma in the derivation of our conclusion:

\begin{lemma}\label{lemma:bound}
    Let $\pi$ be a policy derived from $q(\tau)$.
    Denote $J_{T_q}(\pi)=\mathbb E_{\tau\sim q(\tau)}[R(\tau)]$ as the expected return of $q(\tau)$ and $J_T(\pi)$ as the expected return of $\pi$ under the real environment $T$. Let $C(\tau)$ and $\hat C(\tau)$ be the episodic cost of trajectory $\tau$ under the $T$ and empirical dynamics $\hat T$, respectively.
    With probability at least $1-\delta$, the gap between $J_{T_q}(\pi)$ and $J_T(\pi)$ is bounded by:
    \begin{equation}
        \begin{aligned}\label{eq:bound_return}
            |J_{T_q}(\pi)- J_{T}(\pi)|\leq & \frac{2R_{m}}{1-\gamma}\bigg[(1-\gamma^{L+1})\sqrt{\frac{L\epsilon}{2}}+\\
            &\mathbb E_{\pi, \hat T}\left[\sum_{t=0}^L\frac{\gamma^t C_{T,\delta}}{\sqrt{{N}(s_t,a_t)}}\right]\bigg],
        \end{aligned}
    \end{equation}
    where the first term $(1-\gamma^{L+1})\sqrt{\frac{L\epsilon}{2}}$ vanishes in deterministic environment,
    and for each feasible trajectory $\tau$,
    \vskip -0.2in
    \begin{equation}
        \begin{aligned}\label{eq:bound_cost}
            |C(\tau)-\hat C(\tau)|\leq \sum_{t=0}^L\frac{\gamma^t C_{c,\delta}}{\sqrt{{N}(\tau_{s_t},\tau_{a_t})}},
        \end{aligned}
    \end{equation}
   where $C_{T,\delta},C_{c,\delta}$ are constants depending on the concentration properties of $T(s'|s,a),c(s,a)$, respectively, and $\delta\in(0,1)$. $N(s,a)$ is the counts for each state-action pair $(s,a)$ in the offline dataset.
\end{lemma}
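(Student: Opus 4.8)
The plan is to establish the two displays \eqref{eq:bound_return} and \eqref{eq:bound_cost} separately, and to obtain the return bound by inserting the intermediate quantity $J_{\hat T}(\pi)=\mathbb E_{\tau\sim\pi,\hat T}[R(\tau)]$, the expected return of the policy $\pi$ executed under the empirical dynamics $\hat T$, and applying the triangle inequality
\[
|J_{T_q}(\pi)-J_T(\pi)|\le |J_{T_q}(\pi)-J_{\hat T}(\pi)|+|J_{\hat T}(\pi)-J_T(\pi)|.
\]
The first summand captures the distribution-shift cost controlled by the in-distribution constraint $D_{KL}(q(\tau)\|p_{\pi_\beta}(\tau))\le\epsilon$, and the second captures the dynamics-estimation error between $\hat T$ and $T$; these two pieces will produce the two terms inside the bracket of \eqref{eq:bound_return}, each inheriting the common prefactor $2R_m/(1-\gamma)$.

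For the first summand I would write both returns as expectations of the bounded functional $R(\tau)$, noting $|R(\tau)|\le R_m(1-\gamma^{L+1})/(1-\gamma)$, and bound the difference by $2\|R\|_\infty$ times the total-variation distance between $q(\tau)$ and the trajectory law $p_{\pi,\hat T}$ of $\pi$ under $\hat T$. I would then control this total-variation distance through the chain rule of KL: decomposing $D_{KL}$ over the $L$ transitions leaves only the per-step dynamics contributions (since $q$ and $p_{\pi,\hat T}$ share the same state-action conditionals), whose sum is bounded by the overall budget $\epsilon$. Applying Pinsker's inequality per step and aggregating the per-step total-variation terms by a telescoping coupling followed by Cauchy--Schwarz yields the $\sqrt{L\epsilon/2}$ factor together with the prefactor $2R_m(1-\gamma^{L+1})/(1-\gamma)$. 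This construction also explains why the term vanishes in deterministic environments: the per-step dynamics KL terms are identically zero once $q$'s induced dynamics coincides with $\hat T$, exactly as argued around Eq.~\eqref{eq:traj_con}.

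For the second summand I would invoke a simulation-lemma-style telescoping argument: expanding $J_{\hat T}(\pi)-J_T(\pi)$ step by step, each transition contributes the total-variation distance between $\hat T(\cdot\mid s,a)$ and $T(\cdot\mid s,a)$, weighted by a value factor of order $R_m/(1-\gamma)$ and the discount $\gamma^t$, with the whole expression taken in expectation under $\pi,\hat T$. I would then bound each per-step transition error by a concentration inequality on the empirical counts, giving $\|\hat T(\cdot\mid s,a)-T(\cdot\mid s,a)\|_{\mathrm{TV}}\le C_{T,\delta}/\sqrt{N(s,a)}$ with probability at least $1-\delta$, the constant $C_{T,\delta}$ absorbing the logarithmic union-bound factors over the observed pairs. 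Combining the two summands yields \eqref{eq:bound_return}.

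The cost bound \eqref{eq:bound_cost} is more elementary: for a fixed feasible $\tau$ the episodic costs differ only through the estimated per-step cost means, so $|C(\tau)-\hat C(\tau)|\le\sum_{t=0}^L\gamma^t|c(\tau_{s_t},\tau_{a_t})-\hat c(\tau_{s_t},\tau_{a_t})|$, and a Hoeffding bound on the $N(\tau_{s_t},\tau_{a_t})$ observed cost samples at each pair gives the per-step term $C_{c,\delta}/\sqrt{N(\tau_{s_t},\tau_{a_t})}$. I expect the main obstacle to lie in the first summand of the return bound: rigorously justifying that the trajectory-level KL budget feeds only the per-step dynamics discrepancy between $q$ and $p_{\pi,\hat T}$ (so that it vanishes in the deterministic case) while still producing the stated $\sqrt{L}$ aggregation requires care, since $q$ need not factor as a Markov policy-times-dynamics in the probabilistic setting and the coupling between $q$ and the executed law must be set up explicitly before the chain rule and Cauchy--Schwarz can be applied.
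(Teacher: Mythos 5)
Your proposal follows essentially the same route as the paper's proof: the same triangle-inequality split through the intermediate quantity $J_{\hat T}(\pi)$, the same per-step KL decomposition with Pinsker and Cauchy--Schwarz to produce the $(1-\gamma^{L+1})\sqrt{L\epsilon/2}$ term (the paper packages this as two auxiliary lemmas, a modified simulation lemma from Janner et al.\ and a KL-to-summed-TV lemma, rather than deriving it inline), the same simulation-lemma-plus-concentration treatment of the $\hat T$-versus-$T$ gap, and the same per-step concentration argument for the cost bound. The Markovianization subtlety you flag at the end --- that $q$ need not factor as a Markov policy times dynamics, so identifying $\mathbb E_{\tau\sim q}[R(\tau)]$ with the return of $\pi$ under an induced dynamics $T_q$ requires justification --- is glossed over by the paper as well, which simply defines $\pi$ and $T_q$ as conditionals of $q$ and applies its lemmas without further comment.
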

\begin{proof}
    Please refer to Appendix~\ref{app:proof2}, Lemma~\ref{lemma:bound_2}. 
\end{proof}
According to Eq.~\eqref{eq:bound_return}, the estimation error of the expected return is controlled mainly by two terms: $(1-\gamma^{L+1})\sqrt{\frac{L\epsilon}{2}}$ and $\mathbb E_{\pi, \hat T}\left[\sum_{t=0}^L\frac{\gamma^t C_{T,\delta}}{\sqrt{{N}(s_t,a_t)}}\right]$.
The first term is caused by the inconsistency between $\hat{T}$ and the dynamics induced by the optimized $q(\tau)$.
As described in Subsection~\ref{sec:rl_to_match}, this inconsistency only exists in environment with probabilistic dynamics.
Lemma~\ref{lemma:bound} theoretically show that such an inconsistency in probabilistic dynamics can be controlled by $\epsilon$.
Thus, applying a strict in-distribution constraint can reduce the dynamics inconsistency in the probabilistic case, but at a cost of limiting the parameter search space for trajectory optimization.
The second term 
represents the uncertainty of dynamics on the trajectory generated by $\pi$ and $\hat T$.
This uncertainty generally decreases as the size of dataset increases. 
Besides, this term can be further replaced by $\frac{2(1-\gamma^{L+1})}{1-\gamma}\sqrt{\frac{L\epsilon}{2}}+\mathbb E_{\pi_\beta, \hat T}\left[\sum_{t=0}^L\frac{\gamma^t C_{T,\delta}}{\sqrt{{N}(s_t,a_t)}}\right]$ (see Appendix~\ref{app:proof2} for the proof), where the two terms respectively represent the proximity of $\pi$ to behavior policy $\pi_\beta$ and the uncertainty of $\pi_\beta$.
Thus, by adopting a stricter in-distribution constraint, the error caused by dynamics uncertainty can be reduced toward the uncertainty of behavior policy, which can be regarded approximately as a lower bound of the feasible policy's uncertainty.


As for the trajectory cost estimation, the bound in Eq.~\eqref{eq:bound_cost} shows that the more in-distribution state-action pairs the trajectory $\tau$ contains, the more accurate its cumulative cost estimation will be. Therefore, employing the constraint in Eq.~\eqref{eq:traj_con} induces better constraint satisfaction.
Having Lemma~\ref{lemma:bound}, the performance guarantee for the optimal trajectory distribution $q^*_{b}(\tau)$ can be easily derived as:

\begin{proposition}
\label{thm:optimal_guarantee}
    Denoting $\pi_{q^*_b}$ as the policy induced by the constrained optimal trajectory distribution $q^*_b$,
    for any policy $\pi$ derived from the trajectory distribution that simultaneously satisfies the in-distribution constraint~\eqref{eq:offlinerl_traj_cons} and the safety constraint~\eqref{eq:offlinerl_safe_cons}, the following inequality holds with probability at least $1-\delta$:
    \begin{equation}
        \begin{aligned}\label{eq:return_bound}
            J_T(\pi_{q^*_b})\geq  J_T(\pi)-&\frac{4R_{m}}{1-\gamma} \bigg[(1-\gamma^{L+1})\sqrt{\frac{L\epsilon}{2}}\\
            &+\mathbb E_{\pi, \hat T}\left[\sum_{t=0}^L\frac{\gamma^t C_{T,\delta}}{\sqrt{{N}(s_t,a_t)}}\right]\bigg],
        \end{aligned}
    \end{equation}
    where the first term $(1-\gamma^{L+1})\sqrt{\frac{L\epsilon}{2}}$ vanishes in  environments with deterministic dynamics.
   For each trajectory $\tau$ generated from $q^*_b$, the episodic cost of $\tau$ is bounded with probability at least $1-\delta$ as follow:
    \begin{equation}
        \begin{aligned}\label{eq:cost_bound}
            C(\tau)\leq b+\sum_{t=0}^L\frac{\gamma^t C_{c,\delta}}{\sqrt{{N}(\tau_{s_t},\tau_{a_t})}}.
        \end{aligned}
    \end{equation}
\end{proposition}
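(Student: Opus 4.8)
The plan is to obtain both inequalities of Proposition~\ref{thm:optimal_guarantee} directly from Lemma~\ref{lemma:bound}, using the optimality of $q^*_b$ (Theorem~\ref{thm:q^*_b}) as the only extra ingredient for the reward bound and the support of $q^*_b$ for the cost bound. Throughout I would abbreviate the per-policy error of Eq.~\eqref{eq:bound_return} as
\[
B(\pi') := \frac{2R_m}{1-\gamma}\left[(1-\gamma^{L+1})\sqrt{\tfrac{L\epsilon}{2}} + \mathbb{E}_{\pi',\hat T}\Big[\sum_{t=0}^L \tfrac{\gamma^t C_{T,\delta}}{\sqrt{N(s_t,a_t)}}\Big]\right],
\]
so that Eq.~\eqref{eq:bound_return} reads $|J_{T_{q}}(\pi')-J_T(\pi')|\le B(\pi')$.

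For the reward bound in Eq.~\eqref{eq:return_bound} I would sandwich the true return $J_T(\pi_{q^*_b})$ between on-distribution returns via two applications of Lemma~\ref{lemma:bound}, closing the middle gap with optimality. First, applying Eq.~\eqref{eq:bound_return} to $\pi_{q^*_b}$ gives $J_T(\pi_{q^*_b}) \ge J_{T_{q^*_b}}(\pi_{q^*_b}) - B(\pi_{q^*_b})$. Second, since $q^*_b$ maximizes $\mathbb{E}_{\tau\sim q}[R(\tau)]$ over all trajectory distributions satisfying the safety constraint~\eqref{eq:offlinerl_safe_cons} and the in-distribution constraint~\eqref{eq:offlinerl_traj_cons}, and since the distribution $q$ inducing $\pi$ is feasible by hypothesis, I have $J_{T_{q^*_b}}(\pi_{q^*_b}) = \mathbb{E}_{\tau\sim q^*_b}[R(\tau)] \ge \mathbb{E}_{\tau\sim q}[R(\tau)] = J_{T_q}(\pi)$. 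Third, Eq.~\eqref{eq:bound_return} applied to $\pi$ gives $J_{T_q}(\pi) \ge J_T(\pi) - B(\pi)$. Chaining the three steps yields $J_T(\pi_{q^*_b}) \ge J_T(\pi) - B(\pi) - B(\pi_{q^*_b})$, and the coefficient $\frac{4R_m}{1-\gamma}$ in Eq.~\eqref{eq:return_bound} is exactly the trace of these two invocations of the lemma.

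The delicate step—and the one I expect to be the main obstacle—is reconciling the two error terms into the single comparator $\pi$ written in Eq.~\eqref{eq:return_bound}, since $B(\pi_{q^*_b})$ is naturally a visitation expectation under $q^*_b$ rather than under $\pi$. To handle this I would exploit that both $\pi$ and $\pi_{q^*_b}$ satisfy the in-distribution constraint $D_{KL}(\cdot\,\|\,p_{\pi_\beta})\le\epsilon$: by the alternative bound noted after Lemma~\ref{lemma:bound}, any feasible policy's term $\mathbb{E}_{\pi',\hat T}[\cdot]$ can be replaced by $\frac{2(1-\gamma^{L+1})}{1-\gamma}\sqrt{\tfrac{L\epsilon}{2}} + \mathbb{E}_{\pi_\beta,\hat T}[\cdot]$, with Pinsker's inequality on the per-step marginals converting the KL budget $\epsilon$ into this visitation discrepancy. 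This provides a uniform control of $B(\pi_{q^*_b})$ by the same feasible-policy quantity that bounds $B(\pi)$, collapsing $B(\pi)+B(\pi_{q^*_b})$ into the stated form; care is needed to track how the extra $\epsilon$-dependent slack is absorbed so that the coefficient remains $\frac{4R_m}{1-\gamma}$.

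For the cost bound in Eq.~\eqref{eq:cost_bound} I would argue directly from the support of $q^*_b$, with no optimality argument required. By the zeroing construction of Theorem~\ref{thm:q^*_b}, $q^*_b$ places zero mass on constraint-violating trajectories, so every $\tau$ drawn from $q^*_b$ satisfies the empirical-side constraint $\hat C(\tau) \le b$ with probability one. Invoking the per-trajectory concentration bound Eq.~\eqref{eq:bound_cost}, which holds with probability at least $1-\delta$, gives $C(\tau) \le \hat C(\tau) + \sum_{t=0}^L \frac{\gamma^t C_{c,\delta}}{\sqrt{N(\tau_{s_t},\tau_{a_t})}}$, and substituting $\hat C(\tau)\le b$ yields Eq.~\eqref{eq:cost_bound} at once. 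The only bookkeeping care here is to share the high-probability event of Eq.~\eqref{eq:bound_cost} with the reward estimate so that a single $\delta$, possibly via a union bound, governs both conclusions of the proposition.
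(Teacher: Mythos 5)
Your proposal follows essentially the same route as the paper's proof: two invocations of Lemma~\ref{lemma:bound} sandwiching the optimality inequality $\mathbb{E}_{\tau\sim q^*_b}[R(\tau)]\geq\mathbb{E}_{\tau\sim q}[R(\tau)]$ (which is where the factor $\frac{4R_m}{1-\gamma}=2\cdot\frac{2R_m}{1-\gamma}$ comes from), and, for the cost, the observation that $q^*_b$ places no mass on trajectories with $\hat C(\tau)>b$ combined with the concentration bound \eqref{eq:bound_cost}. One comparison is worth making explicit. The step you flag as delicate --- that the error term from the first invocation of the lemma is naturally $B(\pi_{q^*_b})$, an expectation under $\pi_{q^*_b}$, while the stated bound \eqref{eq:return_bound} carries only $\mathbb{E}_{\pi,\hat T}[\cdot]$ --- is a genuine issue, but the paper does not perform the reconciliation you sketch: it defines a single quantity $\sigma_1$ with the expectation taken under $\pi$ and silently uses it to bound both $|J_T(\pi_{q^*_b})-J_{T_q}(\pi_{q^*_b})|$ and $|J_{T_q}(\pi)-J_T(\pi)|$, i.e., it implicitly identifies $B(\pi_{q^*_b})$ with $B(\pi)$ without justification. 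Your awareness of this is to your credit; however, note that your proposed repair via the behavior-policy form of the bound would not literally recover \eqref{eq:return_bound} either, since replacing $\mathbb{E}_{\pi_{q^*_b},\hat T}[\cdot]$ by $\frac{2(1-\gamma^{L+1})}{1-\gamma}\sqrt{\frac{L\epsilon}{2}}+\mathbb{E}_{\pi_\beta,\hat T}[\cdot]$ leaves an extra $\epsilon$-dependent slack with a $\frac{1}{1-\gamma}$ factor and an expectation under $\pi_\beta$ rather than under $\pi$; the cleanest rigorous statement would carry both policies' uncertainty terms (or the behavior-policy surrogate) in the bound. In short, your argument matches the paper's, and the residual imprecision you identify is present in the paper's own proof rather than introduced by you; your cost-bound argument is identical to the paper's and needs no union-bound care beyond Assumption~\ref{asm:concentration} being a single joint event.
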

\begin{proof}
    Please refer to Appendix \ref{app:proof2}, Proposition~\ref{thm:optimal_guarantee_2}.
\end{proof}
\vskip -0.1in
In the case of deterministic dynamics, $\pi_{q_b}^*$ can be almost as good as any policy derived from the trajectory distribution that satisfies both the in-distribution constraint and the safety constraint, provided that uncertainty of dataset is small, which generally corresponds to having sufficient samples in the datase.
In the case of probabilistic dynamics, the performance gap between $\pi_{q_b}^*$ and other feasible policies can be controlled by the in-distribution constraint.
Eq.~\eqref{eq:cost_bound} suggests that when the trajectories generated from $q^*_b(\tau)$ are in-distribution, the upper bound of the cumulative cost is more likely to be controlled.



\section{Experiments}
\label{sec:experiments}
\subsection{Simulation Experiments}
\begin{figure}[t]
\begin{center}
\centerline{\includegraphics[width=\columnwidth]{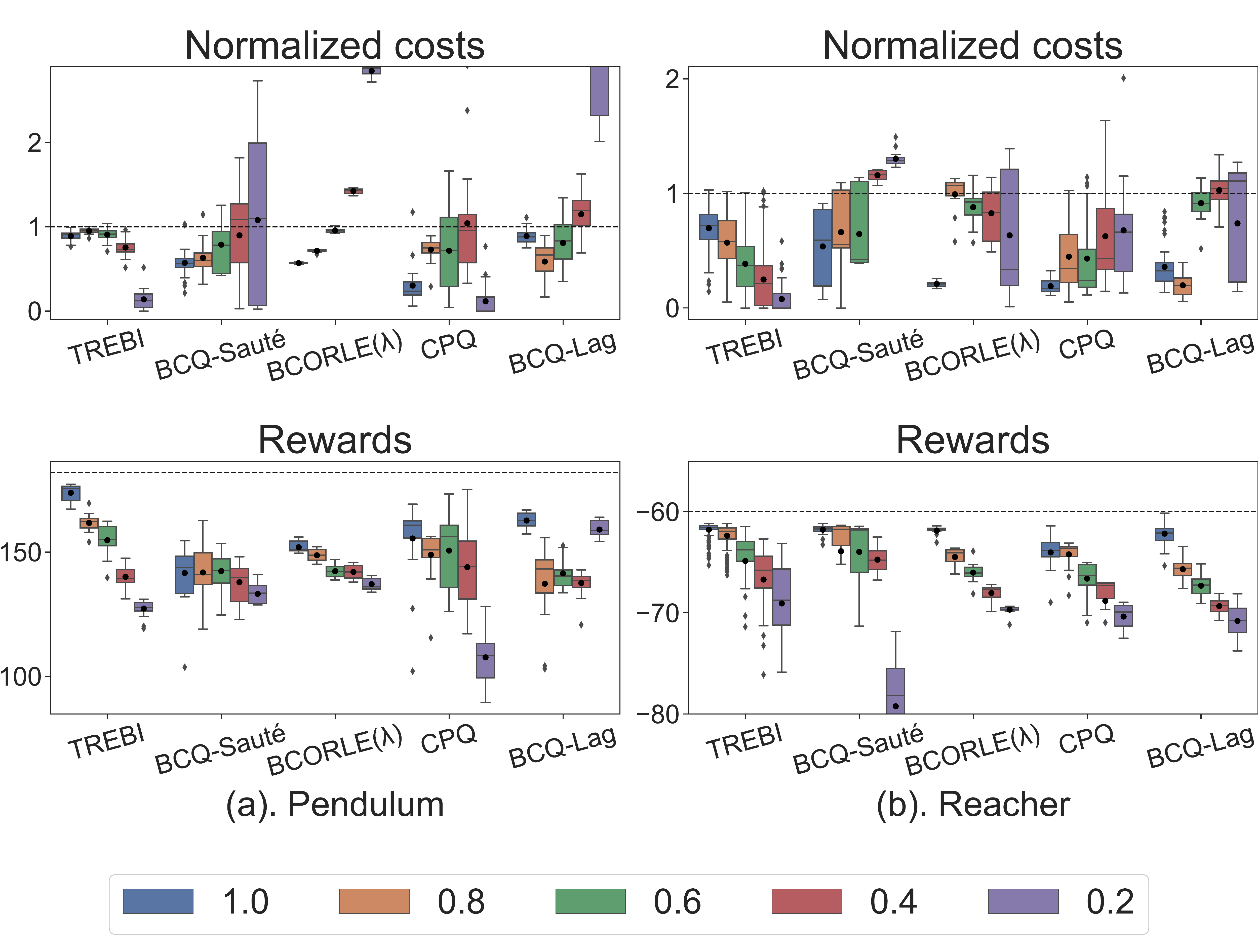}}
\caption{The results on Pendulum and Reacher with regard to the normalized episodic costs (the top row) and the episodic rewards (the bottom row), under the constraints w.r.t. five different budgets (bars with different colors).
The constraint w.r.t. a budget is satisfied only when the value of the normalized cost is less than one.
The dashed lines in the cost plot and the reward plot  indicate the normalized episodic cost threshold and the expected episodic reward of the converged unconstrained policy trained online, respectively.
Note that for the convenience of display, some of the boxes are not completely shown in the figure, e.g., the reward of BCQ-Sauté on Reacher with budget $0.2$.
}
\label{fig:gym env}
\end{center}
\vskip -0.28in
\end{figure}

\label{sec:sim_exp}
\textbf{Environments and Datasets}
\quad We first validate TREBI's capability in solving real-time budget constraint problems on two OpenAI Gym tasks with additional safety constraints (Pendulum swing-up and Reacher)~\cite{sootla2022saute}, three MuJoCo tasks (Hopper-v2, HalfCheetah-v2, Walker2d-v2)~\cite{todorov2012mujoco} with speed limit~\cite{zhang2020first,yang2022cup} and two Bullet-Safety-Gym tasks (SafetyCarCircle-v0, SafetyBallReach-v0)~\cite{gronauer2022bullet}. 
For Pendulum swing-up, Reacher and two Bullet-Safety-Gym tasks, we train an unconstrained SAC policy online and take its replay buffer as the offline dataset.
For MuJoCo tasks, we test on three types of datasets in the D4RL benchmark~\cite{fu2020d4rl}, i.e., medium, medium-replay and medium-expert, to verify TREBI's capability of utilizing datasets with various levels of quality. 
Note that the offline data in our experiments are collected by \emph{unconstrained} behavior policies. More environmental introduction and implementation details can be found in Appendix~\ref{app:environment_detail}.

\begin{figure*}[ht!]
\begin{center}
\centerline{\includegraphics[width=\textwidth]{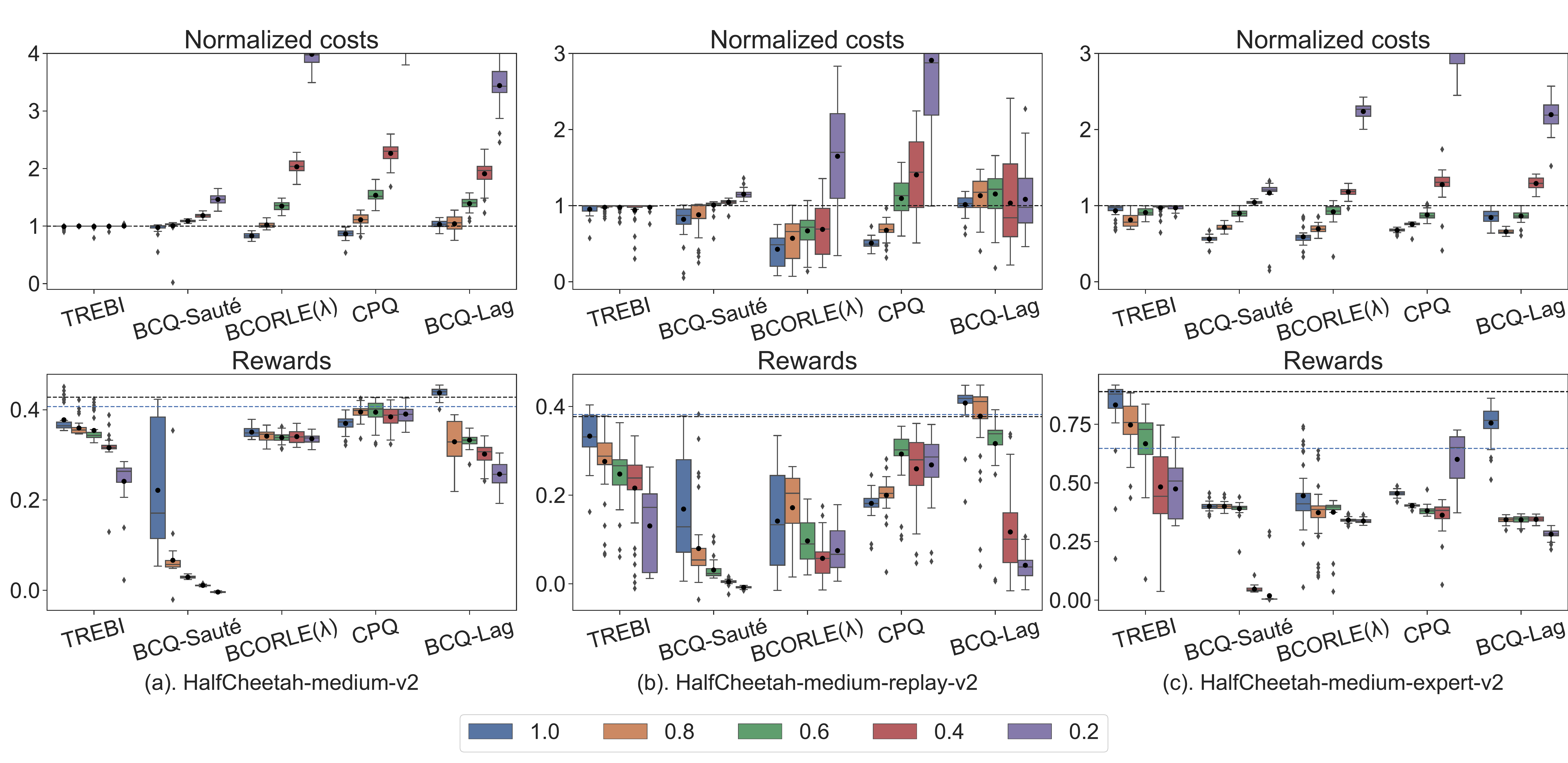}}
\caption{Results on HalfCheetah using three different types of datasets. 
The black dashed lines and the blue dashed lines in the reward plots indicate the expected episodic rewards of the unconstrained Diffuser and BCQ respectively.
} 
\label{fig:halfcheetah}
\end{center}
\vskip -0.25in
\end{figure*}

\begin{figure}[ht!]
\begin{center}
\centerline{\includegraphics[width=\columnwidth]{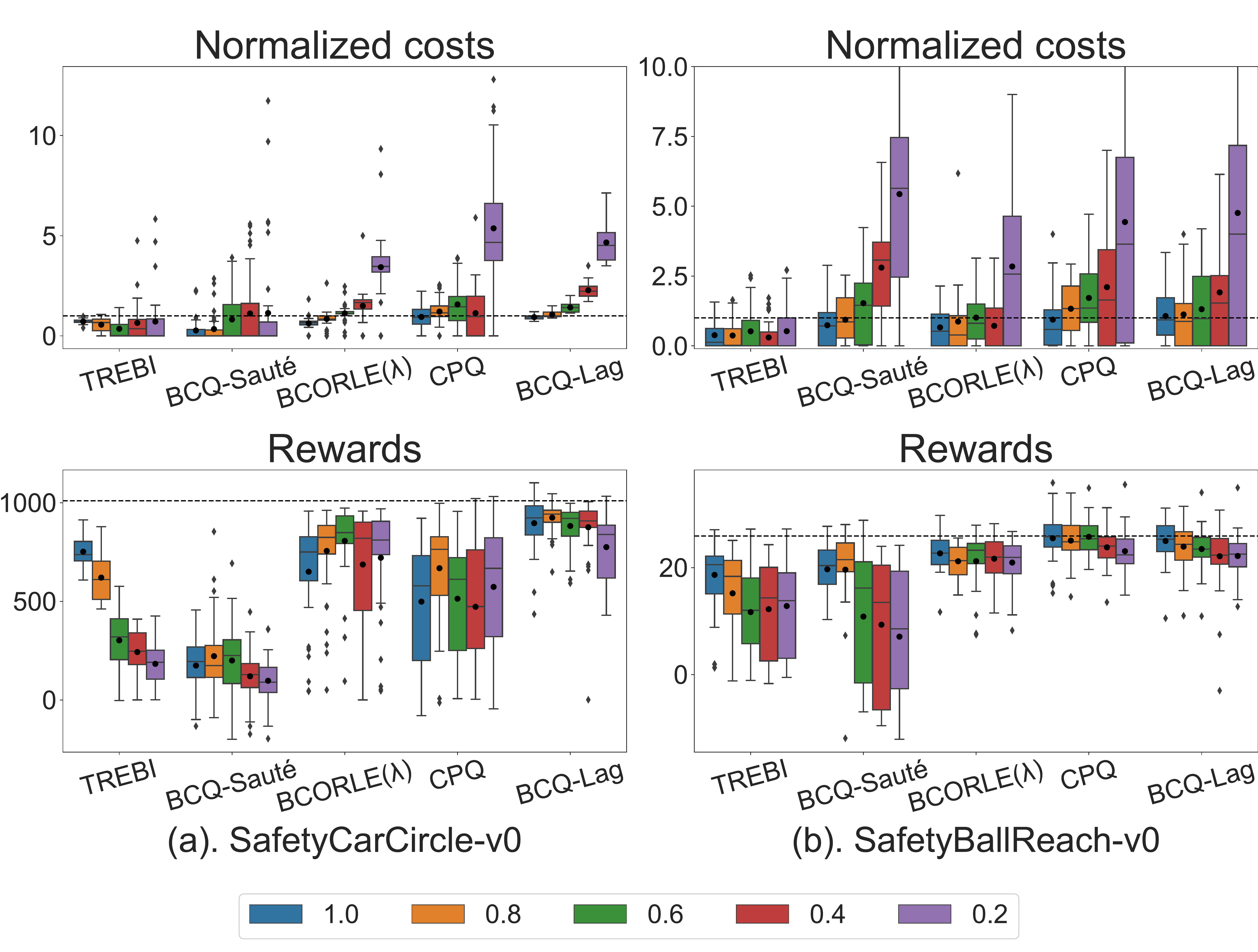}}
\caption{The results on SafetyCarCircle and SafetyBallReach.
The dashed lines in reward plots indicate the expected episodic reward of the behavior policy in the final iteration of online training.
}
\label{fig:bulletsafetygym}
\end{center}
\vskip -0.28in
\end{figure}

\textbf{Baselines}
$\quad$We compare TREBI~\footnote{Implementation details can be found in Appendix~\ref{app:trebi_imple}.} with: 1) BCQ-Lagrangian, which combines BCQ~\cite{fujimoto2019off} with the Lagrangian approach; 
2) CPQ~\cite{xu2022constraints}, which considers out-of-distribution actions as unsafe and updates the reward critic using only safe state-action pairs; 
3) BCORLE($\lambda$)~\cite{zhang2021bcorle}, which merges Lagrangian multiplier $\lambda$ into the state and finds an optimal value of $\lambda$ for a given budget by bisection search; 
4) BCQ-Sauté, which combines BCQ with Sauté~\cite{sootla2022saute} by augmenting the state space with the remaining safety budget and assigning large penalty values to the reward when the remaining safety budget is less than 0.
Note that BCORLE($\lambda$) and BCQ-Sauté have the potential to handle different budgets, as the most of possible budgets have been taken into account during policy training by state augmentation. 
\textbf{Evaluation Protocols}
$\quad$The approaches are evaluated on real-time budgets varying across 5 different values, i.e., $0.2,0.4,0.6,0.8$ and $1.0$ times of the maximum budget respectively\footnote{In Appendix~\ref{app:uniform_sampled_budget}, we present the results of additional experiments where the ratios to the maximum budget are sampled from interval $[0,1]$ rather than 5 fixed values.}. 
The maximum budgets for each tasks are detailed in Appendix~\ref{app:hyper-parameters}.
We normalize the episodic cost of a trajectory by dividing it with the corresponding budget, resulting in a unified constraint budget of one.
We adopt the box-and-whisker plots to show the results of constraint satisfaction in the trajectory level. 
Concretely, the box shows the median, $q_3$ and $q_1$ quartiles of the distributions, and whiskers depict the error bounds $1.5(q_3-q_1)$ and outliers, while the mean is marked by a black dot.
Such indicators serve as crucial criteria for the validation of constraint satisfaction with probability one.
Note that among the baselines, BCQ-Lagrangian and CPQ are only applicable to fix-budget settings by design, while BCORLE($\lambda$) and BCQ-Sauté can handle different safety budgets without retraining.
Therefore, BCQ-Lagrangian and CPQ are retrained for each budget before they are evaluated, while BCORLE($\lambda$), BCQ-Sauté and TREBI are trained only once before evaluation for each budget.


\textbf{Comparative Evaluations}
$\quad$ Figures~\ref{fig:gym env}, \ref{fig:halfcheetah} and \ref{fig:bulletsafetygym} show the results for Pendulum, Reacher, HalfCheetah and two BulletSafetyGym tasks.
Due to space limit, the results on other MuJoCo tasks are placed in Appendix~\ref{app:full_mujoco_results}.
Overall, TREBI incurs less constraint violation compared to the baselines and achieves the competitive performance on the episodic reward.
Importantly, TREBI achieves a high level of per-trajectory constraint satisfaction on most of the tasks and budgets, which is consistent with our theoretical setting. 
In contrast, the baselines not only show large variances in the episodic costs, but also suffer from a high probability of serious constraint violation especially when the budget is low.
Moreover, the consistent superior performance across the three different types of MuJoCo datasets also serves as a proof for TREBI's capability in utilizing data with various levels of quality.

\begin{figure}[t]
\begin{center}
\centerline{\includegraphics[width=\columnwidth]{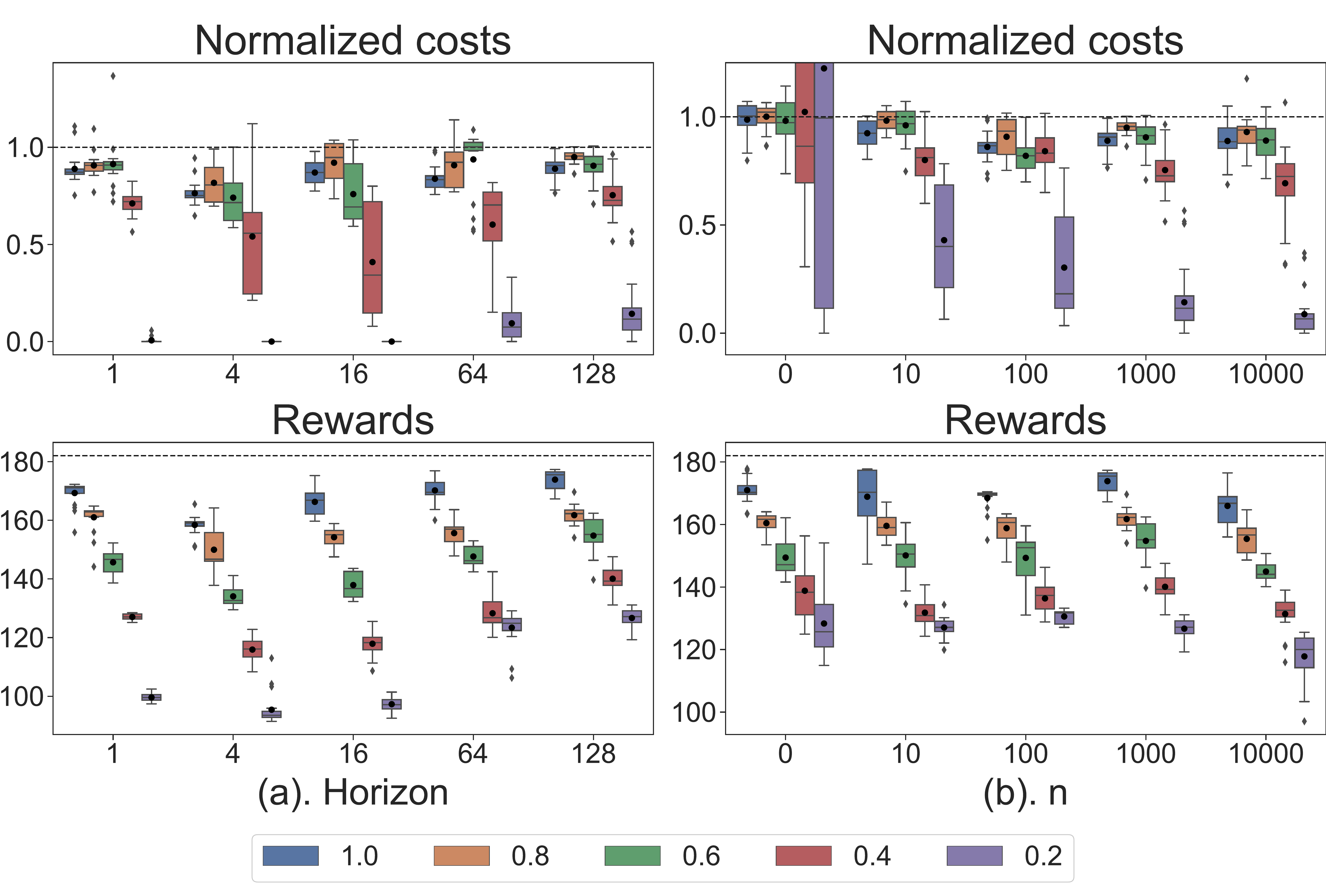}}
\caption{
Ablation studies on Pendulum. 
} 
\label{fig:ablation}
\end{center}
\vskip -0.4in
\end{figure}

\textbf{Ablation 1}
\quad It is often impractical to sample full-length episodes in planning due to the computational complexity. Hence, following the implementation of Diffuser, we sample and optimize the trajectories within only a fixed horizon. 
The effect induced by different horizon settings are given in Figure~\hyperref[fig:ablation]{\ref*{fig:ablation}(a) }, showing a positive trend (higher reward \& better constraint satisfaction) as the horizon increases.

\textbf{Ablation 2}
\quad
In Eq.~\eqref{eq:approx_p=pn}, the parameter  $n$ is introduced to avoid numerical issues of the guiding process.
Figure~\hyperref[fig:ablation]{\ref*{fig:ablation}(b)} provides the ablation result of $n$.
Roughly speaking, when $n$ varies within a relatively low-value range, a larger $n$ leads to better constraint satisfaction and higher rewards.
However, when $n$ comes to an extremely large value, e.g., $10000$, the above two metrics instead become worse, which verifies that a large $n$ can degrade the smoothness of the guiding distribution $h(\tau)$ and thus has a detrimental effect on the modified reverse process~\cite{sohl2015deep}.

\subsection{Real-world Environments}
\label{sec:real_world_exp}


We further evaluate TREBI in a real-world advertising bidding scenario w.r.t. the optimized cost-per-mille (oCPM).

\textbf{Environment}
$\quad$In the advertising bidding scenario, a sequence of advertisements (ads) slots is going to be presented to users browsing the site, and the advertisers would bid for these slots to display their advertisements.
Advertisers assign the target ROI (return on investment) and their maximum total payments for the advertisements.
Then, the policy trained by the auto-bidding algorithm helps the advertisers bid for each slot.
The advertisers consume their budgets and obtain the revenue only when their ads are successfully displayed, and the policy needs to ensure the average ROI to be higher than the assigned target.
Thus, the objective is maximizing the summation of the payments of all advertisers while satisfying the ROI constraint.
In this scenario, a major challenge is to respond to the ROI targets set by each advertiser, which would change once in a day.
Due to the high risk of online training, obtaining a safe policy using merely offline data is promising.
Appendix~\ref{app:bid} provides more detailed information of this scenario.

\begin{table}[t]
\caption{Results in the oCPM advertising bidding task. 
}
\label{tab:bid}
\vskip 0.1in
\begin{center}
\begin{small}
\begin{tabular}{lccr}
\toprule
Algorithm &   Relative Improvement &  Violation rate \\
\midrule
\textbf{TREBI}    & \textbf{7.15\%} & \textbf{14.69\%}   \\
BCORLE($\lambda$)   & 1.55\% & 18.66\%   \\
BCQ-Sauté  & -2.55\% & 21.49\%   \\
CPQ     & -0.76\% & 20.75\%   \\
BCQ-Lag   & 0.00\% & 22.94\%   \\
\bottomrule
\end{tabular}
\end{small}
\end{center}
\vskip -0.15in
\end{table}

\textbf{Dataset and Baselines}
\quad The offline dataset is collected from a real-world ads bidding platform, including 1938151000 bidding records of 42376 advertisers sampled from a one-month bidding history.
The baselines adopted in this experiment are the same as those in Subsection~\ref{sec:real_world_exp}.

\textbf{Evaluation Protocol}
\quad We deploy the algorithms on the real-world ad bidding platform for 10 days, conducting evaluations on the 485904618 bidding records randomly sampled from 29660 advertisers.
As explained in Subsection~\ref{sec:sim_exp}, BCORLE($\lambda$) and BCQ-Sauté are trained only once by augmenting the samples w.r.t. the real-time budgets.
As for BCQ-Lagrangian and CPQ, since the budget varies from 20 to around 2000 while the online deployment requires real-time responses to the budgets, it is impractical to retrain these two baselines on each budget.
Thus, we train BCQ-Lagrangian and CPQ only once using the mean of budget in the dataset as the fixed budget.

\textbf{Comparative Evaluation}
\quad Table~\ref{tab:bid} reports the total pay improvement relative to BCQ-Lagrangian and the violation rate, which is the proportion of trajectories violating ROI constraints during evaluation.
Due to the extremely large scale of the bidding platform and the dramatic variation of the real-time budgets, none of the algorithms can achieve the $100\%$ constraint satisfaction. 
However, compared to the baselines, TREBI achieves a higher episodic reward and generates fewer trajectories violating the constraint, which demonstrates the potential of TREBI to solve the real-time budget constraint problem in real-world applications. 

\section{Conclusion}
In this paper, we proposed TREBI, a novel solution to the real-time budget constraint problem under the offline setting.
We view the safe offline RL problem from a trajectory optimization perspective and solve it by a planning method, which enables the strict constraint satisfaction in the trajectory level as well as efficient utilization of unconstrained offline dataset. We provide an analysis on the performance guarantee for TREBI theoretically, and verify its capability empirically in addressing the real-time budget constraint problems in a wide range of simulation and real-world tasks.  

Inheriting from the Diffuser method, our method would be more computationally expensive during inference than non-planning algorithms, due to the slow iterative trajectory generation of the diffusion model.
In this regard, several improvements can be made to reduce the time complexity, such as warm-starting the diffusion process~\cite{janner2022planning} and reducing the control frequency (referred to Appendix~\ref{app:control_freq}). 
In the future, we plan to study the technology for accelerating diffusion planning and explore other trajectory optimization frameworks to further improve the efficiency of TREBI.

\section*{Acknowledgements}
We gratefully acknowledge support from the National Natural Science Foundation of China (No. 62076259), the Fundamental and Applicational Research Funds of Guangdong province (No. 2023A1515012946), and the Fundamental Research Funds for the Central Universities-Sun Yat-sen University.
This research was supported by Meituan. 


\bibliography{example_paper}

\begin{thebibliography}{55}
\providecommand{\natexlab}[1]{#1}
\providecommand{\url}[1]{\texttt{#1}}
\expandafter\ifx\csname urlstyle\endcsname\relax
  \providecommand{\doi}[1]{doi: #1}\else
  \providecommand{\doi}{doi: \begingroup \urlstyle{rm}\Url}\fi

\bibitem[Achiam et~al.(2017)Achiam, Held, Tamar, and
  Abbeel]{achiam2017constrained}
Achiam, J., Held, D., Tamar, A., and Abbeel, P.
\newblock Constrained policy optimization.
\newblock In \emph{International Conference on Machine Learning}, pp.\  22--31.
  PMLR, 2017.

\bibitem[As et~al.(2022)As, Usmanova, Curi, and Krause]{as2022constrained}
As, Y., Usmanova, I., Curi, S., and Krause, A.
\newblock Constrained policy optimization via bayesian world models.
\newblock \emph{arXiv preprint arXiv:2201.09802}, 2022.

\bibitem[Auer et~al.(2008)Auer, Jaksch, and Ortner]{auer2008near}
Auer, P., Jaksch, T., and Ortner, R.
\newblock Near-optimal regret bounds for reinforcement learning.
\newblock \emph{Advances in Neural Information Processing Systems}, 21, 2008.

\bibitem[Botev et~al.(2013)Botev, Kroese, Rubinstein, and
  L’Ecuyer]{botev2013cross}
Botev, Z.~I., Kroese, D.~P., Rubinstein, R.~Y., and L’Ecuyer, P.
\newblock The cross-entropy method for optimization.
\newblock In \emph{Handbook of Statistics}, volume~31, pp.\  35--59. Elsevier,
  2013.

\bibitem[Brockman et~al.(2016)Brockman, Cheung, Pettersson, Schneider,
  Schulman, Tang, and Zaremba]{brockman2016openai}
Brockman, G., Cheung, V., Pettersson, L., Schneider, J., Schulman, J., Tang,
  J., and Zaremba, W.
\newblock Openai gym.
\newblock \emph{arXiv preprint arXiv:1606.01540}, 2016.

\bibitem[Camacho \& Alba(2013)Camacho and Alba]{camacho2013model}
Camacho, E.~F. and Alba, C.~B.
\newblock \emph{Model predictive control}.
\newblock Springer science \& business media, 2013.

\bibitem[Chen et~al.(2021)Chen, Lu, Rajeswaran, Lee, Grover, Laskin, Abbeel,
  Srinivas, and Mordatch]{chen2021decision}
Chen, L., Lu, K., Rajeswaran, A., Lee, K., Grover, A., Laskin, M., Abbeel, P.,
  Srinivas, A., and Mordatch, I.
\newblock Decision transformer: Reinforcement learning via sequence modeling.
\newblock \emph{Advances in Neural Information Processing Systems},
  34:\penalty0 15084--15097, 2021.

\bibitem[Chow et~al.(2017)Chow, Ghavamzadeh, Janson, and Pavone]{chow2017risk}
Chow, Y., Ghavamzadeh, M., Janson, L., and Pavone, M.
\newblock Risk-constrained reinforcement learning with percentile risk
  criteria.
\newblock \emph{The Journal of Machine Learning Research}, 18\penalty0
  (1):\penalty0 6070--6120, 2017.

\bibitem[Chua et~al.(2018)Chua, Calandra, McAllister, and Levine]{chua2018deep}
Chua, K., Calandra, R., McAllister, R., and Levine, S.
\newblock Deep reinforcement learning in a handful of trials using
  probabilistic dynamics models.
\newblock \emph{Advances in Neural Information Processing Systems}, 31, 2018.

\bibitem[De~Boer et~al.(2005)De~Boer, Kroese, Mannor, and
  Rubinstein]{de2005tutorial}
De~Boer, P.-T., Kroese, D.~P., Mannor, S., and Rubinstein, R.~Y.
\newblock A tutorial on the cross-entropy method.
\newblock \emph{Annals of Operations Research}, 134:\penalty0 19--67, 2005.

\bibitem[Ding et~al.(2020)Ding, Zhang, Basar, and Jovanovic]{ding2020natural}
Ding, D., Zhang, K., Basar, T., and Jovanovic, M.
\newblock Natural policy gradient primal-dual method for constrained markov
  decision processes.
\newblock \emph{Advances in Neural Information Processing Systems},
  33:\penalty0 8378--8390, 2020.

\bibitem[Fu et~al.(2020)Fu, Kumar, Nachum, Tucker, and Levine]{fu2020d4rl}
Fu, J., Kumar, A., Nachum, O., Tucker, G., and Levine, S.
\newblock D4rl: Datasets for deep data-driven reinforcement learning.
\newblock \emph{arXiv preprint arXiv:2004.07219}, 2020.

\bibitem[Fujimoto \& Gu(2021)Fujimoto and Gu]{fujimoto2021minimalist}
Fujimoto, S. and Gu, S.~S.
\newblock A minimalist approach to offline reinforcement learning.
\newblock \emph{Advances in Neural Information Processing Systems},
  34:\penalty0 20132--20145, 2021.

\bibitem[Fujimoto et~al.(2019)Fujimoto, Meger, and Precup]{fujimoto2019off}
Fujimoto, S., Meger, D., and Precup, D.
\newblock Off-policy deep reinforcement learning without exploration.
\newblock In \emph{International Conference on Machine Learning}, pp.\
  2052--2062. PMLR, 2019.

\bibitem[Furuta et~al.(2021)Furuta, Matsuo, and Gu]{furuta2021generalized}
Furuta, H., Matsuo, Y., and Gu, S.~S.
\newblock Generalized decision transformer for offline hindsight information
  matching.
\newblock In \emph{International Conference on Learning Representations}, 2021.

\bibitem[Gronauer(2022)]{gronauer2022bullet}
Gronauer, S.
\newblock Bullet-safety-gym: A framework for constrained reinforcement
  learning.
\newblock 2022.

\bibitem[Gu et~al.(2022)Gu, Yang, Du, Chen, Walter, Wang, Yang, and
  Knoll]{gu2022review}
Gu, S., Yang, L., Du, Y., Chen, G., Walter, F., Wang, J., Yang, Y., and Knoll,
  A.
\newblock A review of safe reinforcement learning: Methods, theory and
  applications.
\newblock \emph{arXiv preprint arXiv:2205.10330}, 2022.

\bibitem[Hanna et~al.(2021)Hanna, Desai, Karnan, Warnell, and
  Stone]{hanna2021grounded}
Hanna, J.~P., Desai, S., Karnan, H., Warnell, G., and Stone, P.
\newblock Grounded action transformation for sim-to-real reinforcement
  learning.
\newblock \emph{Machine Learning}, 110\penalty0 (9):\penalty0 2469--2499, 2021.

\bibitem[Ho et~al.(2020)Ho, Jain, and Abbeel]{ho2020denoising}
Ho, J., Jain, A., and Abbeel, P.
\newblock Denoising diffusion probabilistic models.
\newblock \emph{Advances in Neural Information Processing Systems},
  33:\penalty0 6840--6851, 2020.

\bibitem[Janner et~al.(2019)Janner, Fu, Zhang, and Levine]{janner2019trust}
Janner, M., Fu, J., Zhang, M., and Levine, S.
\newblock When to trust your model: Model-based policy optimization.
\newblock \emph{Advances in Neural Information Processing Systems}, 32, 2019.

\bibitem[Janner et~al.(2021)Janner, Li, and Levine]{janner2021offline}
Janner, M., Li, Q., and Levine, S.
\newblock Offline reinforcement learning as one big sequence modeling problem.
\newblock \emph{Advances in Neural Information Processing Systems},
  34:\penalty0 1273--1286, 2021.

\bibitem[Janner et~al.(2022)Janner, Du, Tenenbaum, and
  Levine]{janner2022planning}
Janner, M., Du, Y., Tenenbaum, J., and Levine, S.
\newblock Planning with diffusion for flexible behavior synthesis.
\newblock In \emph{International Conference on Machine Learning}, pp.\
  9902--9915. PMLR, 2022.

\bibitem[Kidambi et~al.(2020)Kidambi, Rajeswaran, Netrapalli, and
  Joachims]{kidambi2020morel}
Kidambi, R., Rajeswaran, A., Netrapalli, P., and Joachims, T.
\newblock Morel: Model-based offline reinforcement learning.
\newblock \emph{Advances in Neural Information Processing Systems},
  33:\penalty0 21810--21823, 2020.

\bibitem[Kumar et~al.(2019)Kumar, Fu, Soh, Tucker, and
  Levine]{kumar2019stabilizing}
Kumar, A., Fu, J., Soh, M., Tucker, G., and Levine, S.
\newblock Stabilizing off-policy q-learning via bootstrapping error reduction.
\newblock \emph{Advances in Neural Information Processing Systems}, 32, 2019.

\bibitem[Kumar et~al.(2020)Kumar, Zhou, Tucker, and
  Levine]{kumar2020conservative}
Kumar, A., Zhou, A., Tucker, G., and Levine, S.
\newblock Conservative q-learning for offline reinforcement learning.
\newblock \emph{Advances in Neural Information Processing Systems},
  33:\penalty0 1179--1191, 2020.

\bibitem[Le et~al.(2019)Le, Voloshin, and Yue]{le2019batch}
Le, H., Voloshin, C., and Yue, Y.
\newblock Batch policy learning under constraints.
\newblock In \emph{International Conference on Machine Learning}, pp.\
  3703--3712. PMLR, 2019.

\bibitem[Lee et~al.(2022)Lee, Paduraru, Mankowitz, Heess, Precup, Kim, and
  Guez]{lee2022coptidice}
Lee, J., Paduraru, C., Mankowitz, D.~J., Heess, N., Precup, D., Kim, K.-E., and
  Guez, A.
\newblock Coptidice: Offline constrained reinforcement learning via stationary
  distribution correction estimation.
\newblock In \emph{International Conference on Learning Representations}, 2022.

\bibitem[Levine(2018)]{levine2018reinforcement}
Levine, S.
\newblock Reinforcement learning and control as probabilistic inference:
  Tutorial and review.
\newblock \emph{arXiv preprint arXiv:1805.00909}, 2018.

\bibitem[Levine et~al.(2020)Levine, Kumar, Tucker, and Fu]{levine2020offline}
Levine, S., Kumar, A., Tucker, G., and Fu, J.
\newblock Offline reinforcement learning: Tutorial, review, and perspectives on
  open problems.
\newblock \emph{arXiv preprint arXiv:2005.01643}, 2020.

\bibitem[Liu et~al.(2020)Liu, Zhou, Chen, Zhong, Hebert, and
  Zhao]{liu2020constrained}
Liu, Z., Zhou, H., Chen, B., Zhong, S., Hebert, M., and Zhao, D.
\newblock Constrained model-based reinforcement learning with robust
  cross-entropy method.
\newblock \emph{arXiv preprint arXiv:2010.07968}, 2020.

\bibitem[Liu et~al.(2022{\natexlab{a}})Liu, Cen, Isenbaev, Liu, Wu, Li, and
  Zhao]{liu2022constrained}
Liu, Z., Cen, Z., Isenbaev, V., Liu, W., Wu, S., Li, B., and Zhao, D.
\newblock Constrained variational policy optimization for safe reinforcement
  learning.
\newblock In \emph{International Conference on Machine Learning}, pp.\
  13644--13668. PMLR, 2022{\natexlab{a}}.

\bibitem[Liu et~al.(2022{\natexlab{b}})Liu, Guo, Cen, Zhang, Tan, Li, and
  Zhao]{liu2022robustness}
Liu, Z., Guo, Z., Cen, Z., Zhang, H., Tan, J., Li, B., and Zhao, D.
\newblock On the robustness of safe reinforcement learning under observational
  perturbations.
\newblock \emph{arXiv preprint arXiv:2205.14691}, 2022{\natexlab{b}}.

\bibitem[Liu et~al.(2023)Liu, Guo, Yao, Cen, Yu, Zhang, and
  Zhao]{liu2023constrained}
Liu, Z., Guo, Z., Yao, Y., Cen, Z., Yu, W., Zhang, T., and Zhao, D.
\newblock Constrained decision transformer for offline safe reinforcement
  learning.
\newblock \emph{arXiv preprint arXiv:2302.07351}, 2023.

\bibitem[Lyu et~al.(2022)Lyu, Ma, Li, and Lu]{lyu2022mildly}
Lyu, J., Ma, X., Li, X., and Lu, Z.
\newblock Mildly conservative q-learning for offline reinforcement learning.
\newblock \emph{Advances in Neural Information Processing Systems},
  35:\penalty0 1711--1724, 2022.

\bibitem[Nagabandi et~al.(2020)Nagabandi, Konolige, Levine, and
  Kumar]{nagabandi2020deep}
Nagabandi, A., Konolige, K., Levine, S., and Kumar, V.
\newblock Deep dynamics models for learning dexterous manipulation.
\newblock In \emph{Conference on Robot Learning}, pp.\  1101--1112. PMLR, 2020.

\bibitem[Peng et~al.(2018)Peng, Andrychowicz, Zaremba, and Abbeel]{peng2018sim}
Peng, X.~B., Andrychowicz, M., Zaremba, W., and Abbeel, P.
\newblock Sim-to-real transfer of robotic control with dynamics randomization.
\newblock In \emph{International Conference on Robotics and Automation}, pp.\
  3803--3810. IEEE, 2018.

\bibitem[Rigter et~al.(2022)Rigter, Lacerda, and Hawes]{rigter2022rambo}
Rigter, M., Lacerda, B., and Hawes, N.
\newblock Rambo-rl: Robust adversarial model-based offline reinforcement
  learning.
\newblock \emph{Advances in Neural Information Processing Systems}, 2022.

\bibitem[Silver et~al.(2016)Silver, Huang, Maddison, Guez, Sifre, Van
  Den~Driessche, Schrittwieser, Antonoglou, Panneershelvam, Lanctot,
  et~al.]{silver2016mastering}
Silver, D., Huang, A., Maddison, C.~J., Guez, A., Sifre, L., Van Den~Driessche,
  G., Schrittwieser, J., Antonoglou, I., Panneershelvam, V., Lanctot, M.,
  et~al.
\newblock Mastering the game of go with deep neural networks and tree search.
\newblock \emph{Nature}, 529\penalty0 (7587):\penalty0 484--489, 2016.

\bibitem[Sohl-Dickstein et~al.(2015)Sohl-Dickstein, Weiss, Maheswaranathan, and
  Ganguli]{sohl2015deep}
Sohl-Dickstein, J., Weiss, E., Maheswaranathan, N., and Ganguli, S.
\newblock Deep unsupervised learning using nonequilibrium thermodynamics.
\newblock In \emph{International Conference on Machine Learning}, pp.\
  2256--2265. PMLR, 2015.

\bibitem[Sootla et~al.(2022)Sootla, Cowen-Rivers, Jafferjee, Wang, Mguni, Wang,
  and Ammar]{sootla2022saute}
Sootla, A., Cowen-Rivers, A.~I., Jafferjee, T., Wang, Z., Mguni, D.~H., Wang,
  J., and Ammar, H.
\newblock Saut{\'e} rl: Almost surely safe reinforcement learning using state
  augmentation.
\newblock In \emph{International Conference on Machine Learning}, pp.\
  20423--20443. PMLR, 2022.

\bibitem[Stooke et~al.(2020)Stooke, Achiam, and Abbeel]{stooke2020responsive}
Stooke, A., Achiam, J., and Abbeel, P.
\newblock Responsive safety in reinforcement learning by pid lagrangian
  methods.
\newblock In \emph{International Conference on Machine Learning}, pp.\
  9133--9143. PMLR, 2020.

\bibitem[Sutton \& Barto(2018)Sutton and Barto]{sutton2018reinforcement}
Sutton, R.~S. and Barto, A.~G.
\newblock \emph{Reinforcement Learning: An Introduction}.
\newblock MIT press, 2018.

\bibitem[Tessler et~al.(2019)Tessler, Mankowitz, and Mannor]{tessler2018reward}
Tessler, C., Mankowitz, D.~J., and Mannor, S.
\newblock Reward constrained policy optimization.
\newblock In \emph{International Conference on Learning Representations}, 2019.

\bibitem[Todorov et~al.(2012)Todorov, Erez, and Tassa]{todorov2012mujoco}
Todorov, E., Erez, T., and Tassa, Y.
\newblock Mujoco: A physics engine for model-based control.
\newblock In \emph{International Conference on Intelligent Robots and Systems},
  pp.\  5026--5033. IEEE, 2012.

\bibitem[Vinyals et~al.(2019)Vinyals, Babuschkin, Czarnecki, Mathieu, Dudzik,
  Chung, Choi, Powell, Ewalds, Georgiev, et~al.]{vinyals2019grandmaster}
Vinyals, O., Babuschkin, I., Czarnecki, W.~M., Mathieu, M., Dudzik, A., Chung,
  J., Choi, D.~H., Powell, R., Ewalds, T., Georgiev, P., et~al.
\newblock Grandmaster level in starcraft ii using multi-agent reinforcement
  learning.
\newblock \emph{Nature}, 575\penalty0 (7782):\penalty0 350--354, 2019.

\bibitem[Wu et~al.(2019)Wu, Tucker, and Nachum]{wu2019behavior}
Wu, Y., Tucker, G., and Nachum, O.
\newblock Behavior regularized offline reinforcement learning.
\newblock \emph{arXiv preprint arXiv:1911.11361}, 2019.

\bibitem[Xu et~al.(2022{\natexlab{a}})Xu, Zhan, and Zhu]{xu2022constraints}
Xu, H., Zhan, X., and Zhu, X.
\newblock Constraints penalized q-learning for safe offline reinforcement
  learning.
\newblock In \emph{Proceedings of the AAAI Conference on Artificial
  Intelligence}, volume~36, pp.\  8753--8760, 2022{\natexlab{a}}.

\bibitem[Xu et~al.(2022{\natexlab{b}})Xu, Liu, Huang, Ding, Cen, Li, and
  Zhao]{xu2022trustworthy}
Xu, M., Liu, Z., Huang, P., Ding, W., Cen, Z., Li, B., and Zhao, D.
\newblock Trustworthy reinforcement learning against intrinsic vulnerabilities:
  Robustness, safety, and generalizability.
\newblock \emph{arXiv preprint arXiv:2209.08025}, 2022{\natexlab{b}}.

\bibitem[Xu et~al.(2021)Xu, Liang, and Lan]{xu2021crpo}
Xu, T., Liang, Y., and Lan, G.
\newblock Crpo: A new approach for safe reinforcement learning with convergence
  guarantee.
\newblock In \emph{International Conference on Machine Learning}, pp.\
  11480--11491. PMLR, 2021.

\bibitem[Yang et~al.(2022)Yang, Ji, Dai, Zhang, Li, and Pan]{yang2022cup}
Yang, L., Ji, J., Dai, J., Zhang, Y., Li, P., and Pan, G.
\newblock Cup: A conservative update policy algorithm for safe reinforcement
  learning.
\newblock \emph{arXiv preprint arXiv:2202.07565}, 2022.

\bibitem[Yu et~al.(2020)Yu, Thomas, Yu, Ermon, Zou, Levine, Finn, and
  Ma]{yu2020mopo}
Yu, T., Thomas, G., Yu, L., Ermon, S., Zou, J.~Y., Levine, S., Finn, C., and
  Ma, T.
\newblock Mopo: Model-based offline policy optimization.
\newblock \emph{Advances in Neural Information Processing Systems},
  33:\penalty0 14129--14142, 2020.

\bibitem[Yu et~al.(2021)Yu, Kumar, Rafailov, Rajeswaran, Levine, and
  Finn]{yu2021combo}
Yu, T., Kumar, A., Rafailov, R., Rajeswaran, A., Levine, S., and Finn, C.
\newblock Combo: Conservative offline model-based policy optimization.
\newblock \emph{Advances in Neural Information Processing Systems},
  34:\penalty0 28954--28967, 2021.

\bibitem[Zhang et~al.(2020)Zhang, Vuong, and Ross]{zhang2020first}
Zhang, Y., Vuong, Q., and Ross, K.
\newblock First order constrained optimization in policy space.
\newblock \emph{Advances in Neural Information Processing Systems},
  33:\penalty0 15338--15349, 2020.

\bibitem[Zhang et~al.(2021)Zhang, Tang, Yang, An, Tang, Xi, Li, and
  Xiong]{zhang2021bcorle}
Zhang, Y., Tang, B., Yang, Q., An, D., Tang, H., Xi, C., Li, X., and Xiong, F.
\newblock Bcorle ($\lambda$): An offline reinforcement learning and evaluation
  framework for coupons allocation in e-commerce market.
\newblock \emph{Advances in Neural Information Processing Systems},
  34:\penalty0 20410--20422, 2021.

\bibitem[Zheng et~al.(2018)Zheng, Zhang, Zheng, Xiang, Yuan, Xie, and
  Li]{zheng2018drn}
Zheng, G., Zhang, F., Zheng, Z., Xiang, Y., Yuan, N.~J., Xie, X., and Li, Z.
\newblock Drn: A deep reinforcement learning framework for news recommendation.
\newblock In \emph{Proceedings of the 2018 World Wide Web Conference}, pp.\
  167--176, 2018.

\end{thebibliography}
\bibliographystyle{icml2023}

\newpage
\appendix
\onecolumn
\section{Proofs and Derivations}
\setcounter{theorem}{0}

\subsection{Proof of Theorem~\ref{thm:q^*_b}}
\label{app:The Proof of Theorem 2}
\begin{theorem}[4.1 in the main paper]
\label{thm:q^*_b_2}
    If $\int_{C(\tau)\leq b}p_{\pi_\beta}(\tau)d\tau\geq e^{-\epsilon}$ holds, there exists an optimal trajectory distribution for problem 
    \begin{align}
        &\max_{q(\tau)} \mathbb{E}_{\tau\sim q(\tau)}[R(\tau)]\label{eq:offlinerl_oj_2}\\
        \text{s.t. }&\int_{C(\tau)\leq b}q(\tau)d\tau=1 \label{eq:offlinerl_safe_cons_2}\\
        & D_{KL}(q(\tau)||p_{\pi_\beta}(\tau))\leq \epsilon,\label{eq:offlinerl_traj_cons_2}
    \end{align}
    which has the form of 
    \begin{equation}\label{eq:q^*_b}
            q^*_b(\tau)=
            \begin{cases} 
            p_{\pi_\beta}(\tau)\exp(\alpha R(\tau))/Z & \text{if }C(\tau)\leq b;\\
            0                           & \text{otherwise},
            \end{cases}
    \end{equation}
    where $\alpha$ depends on $\epsilon$ and $b$, and $Z=\int_{C(\tau)\leq b}p_{\pi_\beta}(\tau)\exp(\alpha R(\tau))d\tau$ is a constant normalizer to make sure that $q^*_b(\tau)$ is a valid distribution.
\end{theorem}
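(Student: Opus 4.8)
The plan is to treat this as a convex program—a linear objective $\mathbb{E}_{\tau\sim q}[R(\tau)]$ maximized over the convex feasible set cut out by the normalization, support, and KL constraints—and to pin down the optimizer by a Lagrangian/Gibbs stationarity argument, then certify its global optimality by a direct variational inequality that sidesteps any appeal to strong duality. The form $p_{\pi_\beta}(\tau)\exp(\alpha R(\tau))/Z$ is exactly the exponential tilt one expects, so the real work is (i) showing the feasible set is nonempty precisely when condition \eqref{eq:optimal_q_condition} holds, (ii) producing the correct multiplier $\alpha$, and (iii) verifying optimality.

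First I would dispose of the safety constraint: since $q$ is a probability distribution, $\int_{C(\tau)\leq b}q\,d\tau=1$ forces $q$ to vanish off the safe set $S:=\{\tau:C(\tau)\leq b\}$, so the problem reduces to maximizing $\mathbb{E}_q[R]$ over distributions supported on $S$ subject only to $D_{KL}(q\|p_{\pi_\beta})\leq\epsilon$. Writing $P_S:=\int_S p_{\pi_\beta}\,d\tau$ and $q_0:=p_{\pi_\beta}\mathbb{I}(\tau\in S)/P_S$, a short computation gives $D_{KL}(q\|p_{\pi_\beta})=D_{KL}(q\|q_0)-\log P_S\geq-\log P_S$ for every $q$ supported on $S$, so $q_0$ is the $I$-projection of $p_{\pi_\beta}$ onto $S$ and attains the minimal KL value $-\log P_S$. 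The hypothesis $P_S\geq e^{-\epsilon}$ is exactly $-\log P_S\leq\epsilon$, i.e. it guarantees that $q_0$ is feasible and that the smallest attainable KL does not exceed the budget $\epsilon$; this is where the condition enters.

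Next I would introduce the candidate family $q_\alpha(\tau)\propto p_{\pi_\beta}(\tau)\exp(\alpha R(\tau))$ on $S$, which is the Gibbs stationary point of the Lagrangian $\mathbb{E}_q[R]-\tfrac{1}{\alpha}D_{KL}(q\|p_{\pi_\beta})$, and choose $\alpha\geq 0$ so that the KL constraint binds, $D_{KL}(q_\alpha\|p_{\pi_\beta})=\epsilon$. Using the cumulant identities $\frac{d}{d\alpha}\log Z(\alpha)=\mathbb{E}_{q_\alpha}[R]$ and $D_{KL}(q_\alpha\|p_{\pi_\beta})=\alpha\mathbb{E}_{q_\alpha}[R]-\log Z(\alpha)$, one obtains $\frac{d}{d\alpha}D_{KL}(q_\alpha\|p_{\pi_\beta})=\alpha\,\mathrm{Var}_{q_\alpha}[R]\geq 0$, so the KL grows continuously and monotonically from $-\log P_S$ at $\alpha=0$. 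Since $-\log P_S\leq\epsilon$, the intermediate value theorem supplies an $\alpha\geq 0$ with $D_{KL}(q_\alpha\|p_{\pi_\beta})=\epsilon$, and this $q_\alpha$ is the claimed $q^*_b$.

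Finally I would certify global optimality directly. For any feasible $q$, writing $\alpha R=\log(q^*_b/p_{\pi_\beta})+\log Z$ on $S$ and splitting $\mathbb{E}_q[\log(q^*_b/p_{\pi_\beta})]$ yields the exact identity $\alpha(\mathbb{E}_{q^*_b}[R]-\mathbb{E}_q[R])=\epsilon-D_{KL}(q\|p_{\pi_\beta})+D_{KL}(q\|q^*_b)$. Both terms on the right are nonnegative—the first because $q$ is feasible, the second because it is a KL divergence—so $\mathbb{E}_{q^*_b}[R]\geq\mathbb{E}_q[R]$ whenever $\alpha>0$ (the degenerate case $\alpha=0$ forces $\epsilon=-\log P_S$, where $q_0$ is the unique feasible point and optimality is trivial). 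I expect the main obstacle to be the third step: rigorously establishing existence of the multiplier $\alpha$ and that the KL constraint is active at the optimum—namely the continuity and monotonicity of $\alpha\mapsto D_{KL}(q_\alpha\|p_{\pi_\beta})$ together with the differentiation-under-the-integral justifying the cumulant identities—since the optimality certificate itself, once $\alpha$ is in hand, reduces to the nonnegativity of two divergences.
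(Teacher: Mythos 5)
Your proposal is correct and arrives at the same feasibility threshold and the same exponential-tilt form as the paper, but by a genuinely different route in its two key steps. The paper argues via necessary conditions: it first proves an I-projection lemma (Lemma~\ref{lemma:lemma1}) by the Euler--Lagrange equation, uses a limiting argument ($P_c\rightarrow 1$) to make sense of the KL divergence for distributions putting vanishing mass on unsafe trajectories, then writes the Lagrangian with multipliers $(\lambda,\mu)$ for both constraints, invokes KKT stationarity to obtain $q^*\propto p_{\pi_\beta}e^{\alpha R}$ on the safe set, and pins down $\alpha$ and $Z$ only implicitly through the system $Z=\int_{C(\tau)\leq b}p_{\pi_\beta}e^{\alpha R}d\tau$, $\alpha\,\mathbb{E}_{q^*}[R]-\log Z=\epsilon$; it neither verifies that this stationary point is a global maximum nor that the implicit system is solvable. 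You replace all three ingredients: writing $P_S=\int_{C(\tau)\leq b}p_{\pi_\beta}(\tau)d\tau$ and $q_0=p_{\pi_\beta}\,\mathbb{I}(C(\tau)\leq b)/P_S$, the decomposition $D_{KL}(q\|p_{\pi_\beta})=D_{KL}(q\|q_0)-\log P_S$ yields the feasibility condition in one line (subsuming both Lemma~\ref{lemma:lemma1} and the paper's limiting argument, since you restrict supports from the outset); the cumulant identities plus the intermediate value theorem give an actual existence proof for the multiplier $\alpha$, which the paper only posits; and the exact identity $\alpha\bigl(\mathbb{E}_{q^*_b}[R]-\mathbb{E}_q[R]\bigr)=\bigl(\epsilon-D_{KL}(q\|p_{\pi_\beta})\bigr)+D_{KL}(q\|q^*_b)$ certifies global optimality against every feasible $q$, which is strictly more than the paper's stationarity argument delivers. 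One caveat applies to both proofs equally: your IVT step requires $\epsilon\leq\sup_{\alpha\geq 0}D_{KL}(q_\alpha\|p_{\pi_\beta})$. If the set of reward-maximizing safe trajectories has positive mass $p_M$ under $p_{\pi_\beta}$ and $\epsilon>-\log p_M$, then the KL constraint never binds along the Gibbs path, no finite $\alpha$ exists, and the optimum is instead the conditional of $p_{\pi_\beta}$ on that argmax set (the $\alpha\rightarrow\infty$ limit); the paper's KKT argument (asserting $\mu\neq 0$ and interior stationarity) silently excludes the same degenerate case, so this is a shared limitation of the theorem as stated rather than a defect specific to your argument.
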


Before presenting the proof of Theorem~\ref{thm:q^*_b}, we first prove a lemma that will be used later:

\begin{lemma}\label{lemma:lemma1}
    Assuming that $x$ is a continuous variable, $p(x)$ and $q(x)$ are functions w.r.t. $x$, $\mathcal A \subseteq \mathcal{X} $ where $\mathcal{X}$ is the domain of $x$ by definition, the optimization problem
    \begin{align}
        \min_{q}\; & J(q) = \int_{\mathcal{A}}q(x)\log\frac{q(x)}{p(x)},\label{eq:lemma1_ob}
        \\
    \text{s.t. }& \int_{\mathcal{A}}q(x) = 1\label{eq:lemma1_cons}.
    \end{align}
    have the solution: $q^{*}(x) = \frac{p(x)}{\int_{\mathcal{A}}p(x)dx}$.
\end{lemma}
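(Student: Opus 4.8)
The plan is to reduce this constrained minimization to the well-known non-negativity of the KL divergence (Gibbs' inequality), which sidesteps any calculus-of-variations machinery and directly pins down both the minimizer and the optimal value. I would begin by introducing the normalizing constant $Z = \int_{\mathcal{A}} p(x)\, dx$ together with the renormalized density $\bar p(x) = p(x)/Z$, which is a genuine probability density supported on $\mathcal{A}$ (this implicitly uses $0 < Z < \infty$). The purpose of this step is that $\bar p$ is exactly the candidate solution $q^{*}$, so the goal reduces to showing that $q = \bar p$ attains the minimum.

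Next I would rewrite the objective \eqref{eq:lemma1_ob} by substituting $p(x) = Z\,\bar p(x)$ inside the logarithm, yielding
\begin{equation*}
J(q) = \int_{\mathcal{A}} q(x)\log\frac{q(x)}{\bar p(x)}\,dx \;-\; \log Z \int_{\mathcal{A}} q(x)\,dx.
\end{equation*}
At this point I would invoke the feasibility constraint \eqref{eq:lemma1_cons}, namely $\int_{\mathcal{A}} q(x)\,dx = 1$, so that the second term collapses to the constant $-\log Z$ while the first term is precisely $D_{KL}(q\|\bar p)$. Thus $J(q) = D_{KL}(q\|\bar p) - \log Z$ for every feasible $q$.

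Finally I would apply Gibbs' inequality, $D_{KL}(q\|\bar p)\ge 0$ with equality if and only if $q = \bar p$ almost everywhere. This immediately gives $J(q)\ge -\log Z$ with the bound tight exactly at $q = \bar p = p/Z$, establishing that $q^{*}(x) = p(x)/\int_{\mathcal{A}} p(x)\,dx$ is the unique minimizer.

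The computation itself is routine; the only point requiring genuine care is the implicit regularity of $p$, namely that $Z = \int_{\mathcal{A}} p(x)\,dx$ is finite and strictly positive so that $\bar p$ is well defined and the logarithm manipulations are licit. As an alternative route, a Lagrange multiplier $\lambda$ for the normalization constraint yields the stationarity condition $\log(q/p) + 1 + \lambda = 0$, hence $q \propto p$; however, that approach would additionally require verifying a second-order/convexity condition (convexity of $t\mapsto t\log t$) to confirm the stationary point is a global minimum, which the Gibbs-inequality argument avoids entirely. For this reason I expect the Gibbs-inequality formulation to be the cleanest and I would present it as the main line of proof.
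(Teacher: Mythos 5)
Your proof is correct, and it takes a genuinely different route from the paper's. The paper proceeds by calculus of variations: it forms the Lagrangian $L = \int_{\mathcal{A}}q(x)\log\frac{q(x)}{p(x)} + \lambda q(x)$, invokes the Euler--Lagrange equation to obtain the stationarity condition $\log\frac{q(x)}{p(x)}+1+\lambda=0$, hence $q\propto p$, and then fixes the constant via the normalization constraint --- exactly the ``alternative route'' you sketch and set aside. As you note, that argument establishes only first-order stationarity; the paper bridges the gap by asserting that a minimum point ``obviously'' exists, rather than verifying convexity or a second-order condition. Your decomposition $J(q) = D_{KL}(q\,\Vert\,\bar p) - \log Z$ with $Z=\int_{\mathcal{A}}p(x)\,dx$ and $\bar p = p/Z$, followed by Gibbs' inequality, is a global argument: it delivers the minimizer, its a.e.\ uniqueness, and the optimal value $J(q^*)=-\log Z$ in one stroke. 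That optimal value is not incidental --- the paper needs it immediately after the lemma, where it computes $D_{KL}(q^*\Vert p_{\pi_\beta}) = -\log\int_{C(\tau)\leq b}p_{\pi_\beta}\,d\tau$ by a separate calculation to derive the existence condition \eqref{eq:optimal_q_condition} of Theorem~\ref{thm:q^*_b}; your proof yields it as a free byproduct. What the paper's Lagrangian approach buys is consistency: the same multiplier/KKT machinery is reused in the proof of Theorem~\ref{thm:q^*_b} itself, where the objective contains the reward term $\exp(\alpha R(\tau))$ and the multiplier cannot be eliminated so cheaply, so the lemma's proof doubles as a warm-up for that computation, whereas your argument is special to the pure-KL objective. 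The only caveats, which you already flag, are the standing assumptions $0<Z<\infty$ and $p,q\geq 0$, needed for $\bar p$ to be a density and for Gibbs' inequality to apply; the paper's proof is equally silent on these.
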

\begin{proof}
We first construct the Lagrange function as:
\begin{equation}
    \begin{aligned}
        L(x,q,\dot{q}) = \int_{\mathcal{A}}q(x)\log\frac{q(x)}{p(x)} + \lambda q(x).
        \nonumber
    \end{aligned}
\end{equation}
It is obvious that the original problem has a minimum point $q^{*}$. Thus, there exists a constant multiplier $\lambda$ such that $q^{*}$ satisfies the Euler-Lagrange Equation:
\begin{equation}
    \begin{aligned}
        \frac{\partial{L}}{\partial q} - \frac{d}{dx}\frac{\partial{L}}{\partial{ \dot q}} = 0,
        \nonumber
    \end{aligned}
\end{equation}
which yields $\frac{\partial{L}}{\partial q}=0$ and thus
\begin{equation}
    \begin{aligned}
        \frac{\partial{L}}{\partial q}=\log\frac{q(x)}{p(x)}+1+\lambda=0
\\
\Longrightarrow\;
q(x) = p(x)\cdot \exp(-(1+\lambda)).
        \nonumber
    \end{aligned}
\end{equation}
Since $\int_{\mathcal{A}}q(x)dx=1$, we can easily get $\exp(1+\lambda) = \int_{\mathcal{A}}p(x)dx$. 
As a result, the minimum point can be expressed by $q^{*}(x) = \frac{p(x)}{\int_{\mathcal{A}}p(x)dx}$.
In the case when x is a discrete variable,  we can easily get a similar solution, i.e., $p^{*}(x) = \frac{p(x)}{\sum_{x\in A} q(x)dx}$.
\end{proof}


Now we are ready to prove Theorem~\ref{thm:q^*_b}:

\begin{proof}
We first assume 
that $\int_{C(\tau)\leq b}q(\tau)d\tau$ is arbitrarily closed to 1 rather than directly equals 1 in our derivation, by which we avoid the occurrence of $p(\tau)=0$ in the calculation of the KL divergence.
We denote $\int_{C(\tau)\leq b}q(\tau)d\tau$ as $P_c$. 
When $P_c\rightarrow 1$, the probability of constrain-violating trajectory is tend to 0. Thus, 
\begin{equation}
    \begin{aligned}
        \lim_{P_c\rightarrow 1}\int_{C(\tau)> b}q(\tau)\log \frac{q(\tau)}{p_{\pi_\beta}(\tau)} d\tau=\int_{C(\tau)> b}\lim_{q(\tau)\rightarrow 0}q(\tau)\log q(\tau)-q(\tau) \log p_{\pi_\beta}(\tau) = 0.
        \nonumber
    \end{aligned}
\end{equation}

When $P_c \rightarrow 1$, the divergence of distribution is
\begin{equation}
    \begin{aligned}
    &\lim_{P_c\rightarrow 1} D_{KL}(q(\tau)||p_{\pi_\beta}(\tau))\\
    =&\lim_{P_c\rightarrow 1}\left(\int_{C(\tau)> b}q(\tau)\log \frac{q(\tau)}{p_{\pi_\beta}(\tau)} d\tau+\int_{C(\tau)\leq b}q(\tau)\log \frac{q(\tau)}{p_{\pi_\beta}(\tau)} d\tau\right)\\
    =&\int_{C(\tau)\leq b}q(\tau)\log \frac{q(\tau)}{p_{\pi_\beta}(\tau)} d\tau.
    \nonumber
    \end{aligned}
\end{equation}
Then, we study the existence condition of the solution to problem \eqref{eq:offlinerl_oj_2}\eqref{eq:offlinerl_safe_cons_2}\eqref{eq:offlinerl_traj_cons_2}.
We first derive the trajectory distribution closest to the behavior distribution only under the safety constraint Eq.~\eqref{eq:offlinerl_safe_cons_2}, i.e.,
\begin{equation}
    \begin{aligned}
    \min_{q(\tau)}\ &D_{KL}(q(\tau)||p_{\pi_\beta}(\tau))\\
    =&\int_{C(\tau)\leq b}q(\tau)\log \frac{q(\tau)}{p_{\pi_\beta}(\tau)} d\tau\\
    \text{s.t.} & \int_{C(\tau)\leq b}q(\tau)d\tau=1
    \nonumber
    \end{aligned}
\end{equation}
Replacing $p(x),q(x),\mathcal A$ with $p_{\pi_\beta}(\tau),q(\tau),\{\tau|C(\tau)\leq b\}$, respectively, we apply Lemma~\ref{lemma:lemma1} to get the optimal distribution $q^{*}(\tau) = \frac{p_{\pi_\beta}}{\int_{C(\tau)\leq b} p_{\pi_\beta}d\tau}$, which is the closest distribution to $p_{\pi_\beta}(\tau)$ under the safety constraint. 
There exists a distribution $q(\tau)$  satisfying both the safety constraint~\eqref{eq:offlinerl_safe_cons_2} and the in-distribution constraint~\eqref{eq:offlinerl_traj_cons_2} only when the divergence between $p_{\pi_\beta}(\tau)$ and the closest safe distribution $q^{*}(\tau)$ is less than $\epsilon$:
\begin{equation}
    \begin{aligned}
    D_{KL}(q^{*}(\tau)||p_{\pi_\beta}(\tau))&=\int_{C(\tau)\leq b}q(\tau)\log \frac{p_{\pi_\beta}(\tau)}{p_{\pi_\beta}(\tau)\int_{C(\tau)\leq b} p_{\pi_\beta}d\tau} d\tau\\
    &=-\log \int_{C(\tau)\leq b} p_{\pi_\beta}d\tau \leq \epsilon,
    \nonumber
    \end{aligned}
\end{equation}
which results in:
\begin{equation}
    \begin{aligned}
    \int_{C(\tau)\leq b} p_{\pi_\beta}d\tau\geq e^{-\epsilon}.\label{eq:assumtion1}
    \end{aligned}
\end{equation}

Next, to solve the optimization problem \eqref{eq:offlinerl_oj_2}\eqref{eq:offlinerl_safe_cons_2}\eqref{eq:offlinerl_traj_cons_2}, we construct the Lagrange function as:
\begin{equation}
    \begin{aligned}
        L(\tau,q,\dot{q}) =& \mathbb{E}_{\tau\sim q(\tau)}[R(\tau)]-\lambda\left( \int_{C(\tau)\leq b}q(\tau)d\tau-1\right)-\mu \left(D_{KL}(q(\tau)||p_{\pi_\beta}(\tau))-\epsilon\right)\\
        =&\int_{\tau\sim q(\tau)}\left[R(\tau)-\mu\left(\log q(\tau)-\log p_{\pi_\beta}(\tau)\right)\right]d\tau-\lambda \int_{C(\tau)\leq b}q(\tau)d\tau+\lambda+\mu\epsilon. 
        \nonumber
    \end{aligned}
\end{equation}
If the inequality~\eqref{eq:assumtion1} holds strictly (i.e., the Slater’s condition holds), we get $\frac{\partial{L}}{\partial q}=0$ from Karush-Kuhn-Tucker (KKT) conditions, i.e.,
\begin{equation}
    \begin{aligned}\label{eq:offline_safe_rl_lag}
        \frac{\partial{L}}{\partial q}=R(\tau)-\mu (\log q(\tau)+1-\log p_{\pi_\beta}(\tau))-\lambda=0.
    \end{aligned}
\end{equation}
Provided that $R(\tau)$ is not a constant function, we have $\mu\neq 0$ in Eq.~\eqref{eq:offline_safe_rl_lag}.
Solving Eq.~\eqref{eq:offline_safe_rl_lag} yields:
\begin{equation}
    \begin{aligned}\label{eq:q*}
        q^*(\tau)=
            \begin{cases} 
            \frac{p_{\pi_\beta}(\tau)\exp(\alpha R(\tau))}{Z} & \text{if }C(\tau)\leq b;\\
            0                           & \text{otherwise},
            \end{cases}
    \end{aligned}
\end{equation}
where $Z=\exp(1+\frac{\lambda}{\mu})$ and $\alpha=1/\mu$.
To exclude the Lagrangian multipliers $\lambda$ and $\mu$ from $Z$ and $\alpha$, we introduce the KKT conditions:
    \begin{align}
        \mu\left(D_{KL}(q(\tau)||p_{\pi_\beta}(\tau))-\epsilon\right)&=0,\label{eq:kkt1}\\
        \int_{C(\tau)\leq b}q(\tau)d\tau=1.\label{eq:kkt2}&
    \end{align}
Solving  Eq.~\eqref{eq:q*}\eqref{eq:kkt1}\eqref{eq:kkt2}, we finally obtain the relationship between $a,Z$ and $\epsilon,b$:
\begin{equation}
    \begin{aligned}
        &Z=\int_{C(\tau)\leq b}p_{\pi_\beta}(\tau)\exp(\alpha R(\tau))d\tau,\\
        &\alpha\int_{C(\tau)\leq b} q^* R(\tau)-\log Z=\epsilon.
        \nonumber
    \end{aligned}
\end{equation}
\end{proof}

\subsection{The Probabilistic Inference Perspective}
\label{app:Probabilistic Inference Perspective}
\paragraph{The derivation of ELBO}
Recall that the objective of probabilistic inference is to minimize the discrepancy between the auxiliary trajectory distribution $q(\tau)$ and the optimal trajectory distribution of the behavior policy $p_{\pi_\beta}(\tau|O=1)$:
\begin{equation}
    \begin{aligned}
        &D_{KL}(q(\tau)||p_{\pi_\beta}(\tau|O=1))\\
        =&\mathbb{E}_{\tau \sim q}[\log q(\tau)]-\mathbb{E}_{\tau \sim q}[\log p_{\pi_\beta}(\tau|O=1)]\\
        =&\mathbb{E}_{\tau \sim q}[\log q(\tau)]-\mathbb{E}_{\tau \sim q}[\log p_{\pi_\beta}(\tau,O=1)]+\mathbb{E}_{\tau \sim q}[\log p_{\pi_\beta}(O=1)]\\
        =&\mathbb{E}_{\tau \sim q}[\log q(\tau)]-\mathbb{E}_{\tau \sim q}\left[\log [p_{\pi_\beta}(O=1|\tau)p_{\pi_\beta}(\tau)]\right]+\log p_{\pi_\beta}(O=1)\\
        =&\mathbb{E}_{\tau \sim q}[\log q(\tau)-\log \left(p_{\pi_\beta}(\tau)\exp(a'R(\tau))/Z'\right)]+\log p_{\pi_\beta}(O=1)\\
        =&-(-D_{KL}(q(\tau)||p_1(\tau)))+\log p_{\pi_\beta}(O=1)\\
        =&-J(q(\tau))+\log P_{\pi_\beta}(O=1),
    \end{aligned}
\end{equation}
where $J(q(\tau))=-D_{KL}(q(\tau)||p_1(\tau))$, $p_1(\tau)=p_{\pi_\beta}(\tau)\exp(\alpha' R(\tau))/Z'$ and $Z'$ is a constant normalizer. 
Since $\log p_{\pi_\beta}(O=1)$ is independent of $q(\tau)$, minimizing the discrepancy between $q(\tau)$ and $p_{\pi_\beta}(\tau|O=1)$ is equivalent to maximizing $J(q(\tau))$.

\paragraph{Trajectory optimization from the probabilistic inference perspective}
Recall the optimization problem derived from the probabilistic inference perspective:
\begin{align}
    \max_{q(\tau)} J(q(\tau))=-D_{KL}(q(\tau)||p_1(\tau))\nonumber\\
    \Pi_b = \{q(\tau):\int_{C(\tau)\leq b}q(\tau)d\tau=1\},\nonumber
\end{align}
where $p_1(\tau)\propto p_{\pi_\beta}(\tau)\exp(\alpha' R(\tau))$.
By directly applying Lemma~\ref{lemma:lemma1}, we replace $\mathcal{A},p(x)$ with $\{\tau|C(\tau)\leq b\},p_1(\tau)$ respectively and obtain 
\begin{equation}
    \begin{aligned}\label{eq:inference_res}
        q^*(\tau)=
            \begin{cases} 
            \frac{p_{\pi_\beta}(\tau)\exp(\alpha' R(\tau))}{Z} & \text{if }C(\tau)\leq b;\\
            0                           & \text{otherwise},
            \end{cases}
            \nonumber
    \end{aligned}
\end{equation}
which has the same form as $q^*$ in Eq.~\eqref{eq:q*} except that $\alpha'$ is a preset hyper-parameter and does not change with budget $b$ and divergence threshold $\epsilon$. 

\paragraph{Difference between $\alpha$ and $\alpha'$}
In Theorem~\ref{thm:q^*_b}, $\alpha$ controls the degree of policy improvement over the behavior policy, and is determined by divergence threshold $\epsilon$ and budget $b$.
However, in Eq.~\eqref{eq:inference_res} which is derived from the probabilistic inference perspective, $\alpha'$ is a preset hyper-parameter independent of $\epsilon$ and $b$. 
This is because the in-distribution constraint does not appear in the probabilistic inference process.
Without the in-distribution constraint, $\alpha'$, which determines the degree of optimality and the distribution's deviation from the behavior policy, can be arbitrarily controlled. 
Since in Theorem~\ref{thm:q^*_b}  the exact dependence of $\alpha$ w.r.t a certain budget is hard to derive analytically, we set $\alpha$ to a fixed value in pratice for different budgets, which can still yield good performances in various tasks, as shown in the empirical results.

\subsection{Proof of Lemma~\ref{lemma:bound} and Proposition~\ref{thm:optimal_guarantee}}
\label{app:proof2}

First, we denote $J_T(\pi)$ as the expected return of $\pi$ under the environment with transition dynamics $T$. Then, we denote $\pi_\beta, \hat T,\hat r,\hat c$ as empirical behavior policy, transition dynamics, reward function and cost function, respectively, which are all estimated from the offline dataset. Let $T$, $r$, $c$ be the transition dynamics, reward and cost in real environment. Let $\pi$ and $T_q$ be the learned policy and transition dynamics induced from the trajectory distribution $q(\tau)$ by:
\begin{equation}    
\begin{aligned}        
    \pi(a|s)&\propto\int q(\tau|\tau_{s_0}=s, \tau_{a_0}=a)d\tau\\        
    T_q(s'|s,a)&\propto\int q(\tau|\tau_{s_0}=s, \tau_{a_0}=a,\tau_{s_1}=s')d\tau.    
\end{aligned}
\end{equation}
For the sake of simplicity in theoretical derivations, we will ignore the sampling error of the reward function and only consider the sampling error of the cost function, which is critical in the constrained problem. 
However, the sampling error of the reward function can also be easily incorporated into the theoretical results.
Besides, we also ignore the difference in initial state distributions $p(s_0)$ across different MDPs and assume the empirical cost function $\hat c$ can be accurately modeled from offline dataset.

\begin{lemma}[4.2 in the main paper]
\label{lemma:bound_2}
    Let $\pi$ be a policy derived from $q(\tau)$.
    Denote $J_{T_q}(\pi)=\mathbb E_{\tau\sim q(\tau)}[R(\tau)]$ as the expected return of $q(\tau)$ and $J_T(\pi)$ as the expected return of $\pi$ under the real environment $T$. Let $C(\tau)$ and $\hat C(\tau)$ be the episodic cost of trajectory $\tau$ under the $T$ and empirical dynamics $\hat T$, respectively.
    With probability at least $1-\delta$, the gap between $J_{T_q}(\pi)$ and $J_T(\pi)$ is bounded by:
    \begin{equation}
        \begin{aligned}
            |J_{T_q}(\pi)- J_{T}(\pi)|\leq  \frac{2R_{m}}{1-\gamma}\bigg[(1-\gamma^{L+1})\sqrt{\frac{L\epsilon}{2}}+\mathbb E_{\pi, \hat T}\left[\sum_{t=0}^L\frac{\gamma^t C_{T,\delta}}{\sqrt{{N}(s_t,a_t)}}\right]\bigg],
            \nonumber
        \end{aligned}
    \end{equation}
    where the first term $(1-\gamma^{L+1})\sqrt{\frac{L\epsilon}{2}}$ vanishes in deterministic environment,
    and for each feasible trajectory $\tau$,
    \begin{equation}
        \begin{aligned}
            |C(\tau)-\hat C(\tau)|\leq \sum_{t=0}^L\frac{\gamma^t C_{c,\delta}}{\sqrt{{N}(\tau_{s_t},\tau_{a_t})}},
            \nonumber
        \end{aligned}
    \end{equation}
   where $C_{T,\delta},C_{c,\delta}$ are constants depending on the concentration properties of $T(s'|s,a),c(s,a)$, respectively, and $\delta\in(0,1)$. $N(s,a)$ is the counts for each state-action pair $(s,a)$ in the offline dataset.
\end{lemma}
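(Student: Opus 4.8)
The plan is to bound the two quantities separately, using a standard simulation-lemma-style telescoping argument together with a concentration inequality on the empirical transition kernel and an application of Pinsker's inequality to convert the KL in-distribution constraint into a total-variation bound on the trajectory dynamics.

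First I would decompose the return gap by inserting the intermediate quantity $J_{\hat T}(\pi)$, the expected return of $\pi$ under the empirical dynamics $\hat T$, and apply the triangle inequality:
\begin{equation}
    |J_{T_q}(\pi) - J_T(\pi)| \leq |J_{T_q}(\pi) - J_{\hat T}(\pi)| + |J_{\hat T}(\pi) - J_T(\pi)|. \nonumber
\end{equation}
The first difference $|J_{T_q}(\pi) - J_{\hat T}(\pi)|$ captures the inconsistency between $\hat T$ and the dynamics $T_q$ induced by the optimized $q(\tau)$; it is the term that will be controlled by $\epsilon$ and should vanish in the deterministic case. I would bound it by noting that both returns can be written as expectations over trajectory distributions $q(\tau)$ and $p_{\pi_\beta}(\tau)$-like objects sharing the same policy, so their difference is governed by the discrepancy between the two trajectory distributions, which the in-distribution constraint \eqref{eq:offlinerl_traj_cons} bounds in KL. Pinsker's inequality then gives a total-variation bound of the form $\sqrt{\epsilon/2}$ per-timestep contribution; summing the discounted per-step contributions over the horizon $L$ and multiplying by the reward magnitude $R_m/(1-\gamma)$ yields the $(1-\gamma^{L+1})\sqrt{L\epsilon/2}$ term, with the factor $\sqrt{L}$ arising from distributing the trajectory-level KL budget across the $L$ transitions. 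The second difference $|J_{\hat T}(\pi) - J_T(\pi)|$ is a pure model-error term with a fixed policy $\pi$; here I would apply the classical telescoping simulation lemma to express the gap as a discounted sum of per-step transition errors $\|T(\cdot|s_t,a_t) - \hat T(\cdot|s_t,a_t)\|$ under the on-policy occupancy measure $\mathbb{E}_{\pi,\hat T}[\cdot]$, and then invoke a concentration bound (e.g. an $\ell_1$ deviation bound for empirical multinomials, which supplies the constant $C_{T,\delta}$ and the $1/\sqrt{N(s_t,a_t)}$ scaling) that holds with probability at least $1-\delta$. This produces exactly the second bracketed term.

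For the cost bound \eqref{eq:bound_cost}, the argument is more direct since $C(\tau)$ and $\hat C(\tau)$ are evaluated along the \emph{same} fixed trajectory $\tau$: writing $C(\tau) - \hat C(\tau) = \sum_{t=0}^L \gamma^t (c(\tau_{s_t},\tau_{a_t}) - \hat c(\tau_{s_t},\tau_{a_t}))$ and applying the per-pair concentration inequality for the empirical cost at each state-action pair gives the termwise bound $\gamma^t C_{c,\delta}/\sqrt{N(\tau_{s_t},\tau_{a_t})}$, whose sum is the claimed result. The main obstacle I anticipate is the first term: carefully justifying that the trajectory-level KL constraint of magnitude $\epsilon$ translates into a per-timestep total-variation budget that accumulates only as $\sqrt{L\epsilon}$ rather than $L\sqrt{\epsilon}$ requires a chain-rule decomposition of the KL over the $L$ transitions combined with a Cauchy--Schwarz (or concavity-of-$\sqrt{\cdot}$) step to pull the sum of square-roots inside a single square-root, and one must also handle the subtlety that $T_q$ and $\hat T$ may disagree while $\pi$ is held fixed, so the decomposition must isolate the dynamics discrepancy from the policy. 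Establishing that this contribution genuinely vanishes under deterministic dynamics (where $q$ consistent with $\hat T$ forces zero KL on the transition factors) is the delicate conceptual point tying the bound back to the discussion in Subsection~\ref{sec:rl_to_match}.
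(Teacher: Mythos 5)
Your proposal is correct and follows essentially the same route as the paper's proof: the same triangle-inequality decomposition through $J_{\hat T}(\pi)$, the same treatment of the $T_q$-versus-$\hat T$ term via the chain-rule decomposition of the trajectory-level KL, Pinsker, and a Cauchy--Schwarz step to obtain the $\sqrt{L\epsilon/2}$ scaling (the paper's Lemma~\ref{lem:epsilon bound}), the same simulation-lemma-plus-concentration argument for the $\hat T$-versus-$T$ term, and the same termwise concentration bound for the episodic cost. The subtleties you flag --- isolating the dynamics discrepancy from the fixed policy and the vanishing of the first term under deterministic dynamics --- are exactly the points the paper's proof handles.
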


Before presenting the proof of Lemma~\ref{lemma:bound_2}, we first introduce the assumption of concentration properties from~\cite{auer2008near,kumar2020conservative} and a modified lemma from~\cite{janner2019trust}. We introduce the assumption of concentration properties for the transition dynamics, and cost function:

\begin{assumption}\label{asm:concentration}
    $\forall s,a\in \mathcal{D}$, the following relationships hold with probability at least $ 1-\delta$:
    \begin{equation}
        \begin{aligned}
            \frac{1}{2}\Vert   T(s'|s,a)-\hat T(s'|s,a) \Vert_1&\leq \frac{C_{T,\delta}}{\sqrt{{N}(s,a)}},\\
            |c(s,a)-\hat c(s,a)|&\leq \frac{C_{c,\delta}}{\sqrt{{N}(s,a)}},
        \end{aligned}
    \end{equation}
    where $C_{T,\delta},C_{c,\delta}$ are constants dependent on $T,c$ respectively and $\delta\in (0,1)$. 
\end{assumption}

Then we provide a lemma modified from Lemma B.3 in \cite{janner2019trust}.
\begin{lemma}\label{lem:return bound}
    The return gap between $\pi_1$ in environment $T_1$ and $\pi_2$ in environment $T_2$ can be bounded as:
    \begin{equation}
        \begin{aligned}
            &|J_{T_1}(\pi)-J_{T_2}(\pi)|\\
            =&\left|\mathbb{E}_{\pi,T_1}[\sum_t \gamma^t r(s,a)]-\mathbb{E}_{\pi,T_2}[\sum_t \gamma^t r(s,a)]\right|\\
            \leq &R_{m}\sum_t \sum_{s,a}\gamma^t|T_1^t(s,a)-T_2^t(s,a)|\\
            \leq &2 R_{m}\sum_t \gamma^t \sum_{t'=0}^t\left(  \mathbb E_{s,a\sim p^{t'}_1(s,a)}\left[D_{TV}(T_1(\cdot|s,a)||T_2(\cdot|s,a))\right]+\mathbb E_{s\sim p^{t'}_1(s)}\left[D_{TV}(\pi_1(\cdot|s)||\pi_2(\cdot|s))\right]\right)
        \end{aligned}
    \end{equation}
    where $p^{t'}_1$ and $p^{t'}_2$ are state/state-action visitation distributions w.r.t $T_1$ and $T_2$, respectively.
\end{lemma}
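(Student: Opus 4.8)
The plan is to establish the stated chain of inequalities in the order they appear, treating the general case where \emph{both} the policy and the dynamics differ between the two settings (this is what the right-hand side genuinely requires, despite the shorthand single $\pi$ on the left). First I would write each expected return as a discounted sum over the time-indexed state-action occupancy measures, $J_{T_i}(\pi_i) = \sum_t \gamma^t \sum_{s,a} T_i^t(s,a)\, r(s,a)$, so that the return gap becomes $\left|\sum_t \gamma^t \sum_{s,a}(T_1^t(s,a) - T_2^t(s,a))\, r(s,a)\right|$. Bounding $r$ by $R_m$ in absolute value and pulling it outside the summation immediately yields the first displayed inequality, $R_m \sum_t \gamma^t \sum_{s,a}|T_1^t(s,a) - T_2^t(s,a)|$. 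This part is mechanical and uses nothing beyond $|r|\le R_m$ and the triangle inequality.

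The substance of the argument is converting the per-step occupancy discrepancy $\sum_{s,a}|T_1^t - T_2^t|$ into the accumulated total-variation terms. I would track the state marginals $p_i^t(s)$, which obey the forward recursion $p_i^{t+1}(s') = \sum_{s,a} p_i^t(s)\pi_i(a|s)T_i(s'|s,a)$. The key device is a three-term hybrid decomposition of the one-step difference: inside the summand I add and subtract the mixed quantities $p_1^t(s)\pi_1(a|s)T_2(s'|s,a)$ and $p_1^t(s)\pi_2(a|s)T_2(s'|s,a)$, which isolates a \emph{dynamics-mismatch} term, a \emph{policy-mismatch} term, and a \emph{propagated state-distribution} term. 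Taking the $L_1$ norm over $s'$ and using $\sum_{s'}T_2(\cdot|s,a)=1$ and $\sum_a \pi_2(a|s)=1$ collapses the first two into $2\,\mathbb E_{s,a\sim p_1^t}[D_{TV}(T_1\|T_2)]$ and $2\,\mathbb E_{s\sim p_1^t}[D_{TV}(\pi_1\|\pi_2)]$ respectively, while the third reproduces $\|p_1^t - p_2^t\|_1$. This gives the recursion $\|p_1^{t+1} - p_2^{t+1}\|_1 \le \|p_1^t - p_2^t\|_1 + 2\mathbb E_{s,a\sim p_1^t}[D_{TV}(T_1\|T_2)] + 2\mathbb E_{s\sim p_1^t}[D_{TV}(\pi_1\|\pi_2)]$.

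With the common initial distribution $p_1^0 = p_2^0$ assumed in the setup, unrolling this recursion telescopes the state-marginal gap into $\sum_{t'=0}^{t-1}$ of the two TV terms. I would then relate the state-action gap to the state gap by one further application of the same add-and-subtract trick, $\sum_{s,a}|T_1^t - T_2^t| \le \|p_1^t - p_2^t\|_1 + 2\mathbb E_{s\sim p_1^t}[D_{TV}(\pi_1\|\pi_2)]$, and substitute the unrolled bound; since every summand is nonnegative I can freely extend the index range to $\sum_{t'=0}^{t}$ and append the nonnegative dynamics term at $t'=t$ to obtain the paper's symmetric form. Plugging this into the first inequality and factoring out $2R_m$ produces exactly the claimed bound. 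The hard part is solely the occupancy-propagation recursion: getting the hybrid decomposition and the $L_1$/TV bookkeeping right so that the dynamics term is averaged over the joint $p_1^{t'}(s,a)$ while the policy term is averaged over the marginal $p_1^{t'}(s)$, matching the two expectations in the statement; the discounting and the reward bound are otherwise routine.
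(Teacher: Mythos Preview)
Your proposal is correct. The paper does not give its own proof of this lemma; it simply states it as ``a lemma modified from Lemma B.3 in \cite{janner2019trust}'' and defers to that reference. Your argument---expressing the return as a discounted sum over time-indexed occupancies, bounding by $R_m$, and then controlling $\|p_1^t-p_2^t\|_1$ via the hybrid add-and-subtract decomposition to obtain the recursive inequality that telescopes into the accumulated TV terms---is precisely the standard proof of this bound as it appears in Janner et al., so you are reproducing the cited argument rather than deviating from it.
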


\begin{lemma}\label{lem:epsilon bound}
    Suppose $\pi_1$ and $T_1$ are induced by trajectory distribution $p_1(\tau)$ and $\pi_2$ and $T_2$ are induced by trajectory distribution $p_2(\tau)$, if $D_{KL}(p_1(\tau)||p_2(\tau))\leq \epsilon$, we can obtain: 
    \begin{equation}
        \begin{aligned}
            \sum_{t}^L\left[ \mathbb E_{s,a\sim p^t_1(s,a)}\left[D_{TV}(p_1(\cdot|s,a)||p_2(\cdot|s,a))\right]\right]&\leq \sqrt{\frac{L\epsilon}{2}}\\
            \sum_{t}^L\left[ \mathbb E_{s\sim p^t_1(s)}\left[D_{TV}(\pi_1(\cdot|s)||\pi_2(\cdot|s))\right]\right]&\leq \sqrt{\frac{L\epsilon}{2}}
        \end{aligned}
    \end{equation}
    where $L$ is the maximum length of trajectory. 
\end{lemma}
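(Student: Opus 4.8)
The plan is to prove Lemma~\ref{lem:epsilon bound}, which reduces the total-variation sums over the marginal state and state-action visitation distributions to a single bound in terms of the trajectory-level KL divergence $\epsilon$. The key structural observation is that the per-timestep discrepancies between the conditionals $p_1(\cdot|s,a)$ and $p_2(\cdot|s,a)$ (and likewise between the policies $\pi_1(\cdot|s)$ and $\pi_2(\cdot|s)$) are all dominated by the full trajectory KL divergence through the chain rule of KL divergence. First I would factorize both trajectory distributions autoregressively, writing $p_j(\tau) = p(s_0)\prod_{t=0}^{L}\pi_j(a_t|s_t)\,T_j(s_{t+1}|s_t,a_t)$ for $j\in\{1,2\}$, and then invoke the chain rule to decompose $D_{KL}(p_1(\tau)\|p_2(\tau))$ into a sum of expected per-step conditional KL divergences over the marginals $p_1^t$.

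Concretely, the chain rule yields
\begin{equation}
    \begin{aligned}
        D_{KL}(p_1(\tau)\|p_2(\tau)) = \sum_{t=0}^{L}\mathbb E_{s\sim p_1^t(s)}\!\left[D_{KL}(\pi_1(\cdot|s)\|\pi_2(\cdot|s))\right] + \sum_{t=0}^{L}\mathbb E_{s,a\sim p_1^t(s,a)}\!\left[D_{KL}(T_1(\cdot|s,a)\|T_2(\cdot|s,a))\right].\nonumber
    \end{aligned}
\end{equation}
Since both summands are nonnegative, each one is individually bounded above by $\epsilon$. From there I would apply Pinsker's inequality, $D_{TV}(P\|Q)\leq\sqrt{\tfrac12 D_{KL}(P\|Q)}$, to convert each conditional KL term into a TV term pointwise in $s$ (or $(s,a)$), and then pass the inequality through the expectation and the sum. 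The remaining obstacle is that the summand after Pinsker is $\mathbb E[\sqrt{D_{KL}}]$ rather than $\sqrt{\mathbb E[D_{KL}]}$, so I would invoke Jensen's inequality (concavity of the square root) to pull the expectation inside the root, and then apply the Cauchy--Schwarz (or power-mean) inequality across the $L+1$ timesteps to turn $\sum_t\sqrt{x_t}$ into $\sqrt{(L+1)\sum_t x_t}\leq\sqrt{(L+1)\epsilon}$; matching the stated bound $\sqrt{L\epsilon/2}$ requires tracking the constant $\tfrac12$ from Pinsker together with the timestep count, so I would keep the factors explicit to land on the claimed form.

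The step I expect to be the main obstacle is the interchange of the expectation with the square root: a naive pointwise application of Pinsker gives $\mathbb E_{s\sim p_1^t}[\sqrt{\tfrac12 D_{KL}(\pi_1(\cdot|s)\|\pi_2(\cdot|s))}]$ per timestep, and the two applications of Jensen (once inside the expectation over states, once across the sum over timesteps) must be ordered so that no slack accumulates incoherently with the stated constant. I would therefore first bound the summed quantity $\sum_t\mathbb E_{s\sim p_1^t}[D_{TV}(\cdots)]$ by $\sum_t\sqrt{\tfrac12\,\mathbb E_{s\sim p_1^t}[D_{KL}(\cdots)]}$ (Jensen inside the state-expectation), and then bound that sum of roots by $\sqrt{\tfrac{L+1}{2}}\sqrt{\sum_t\mathbb E_{s\sim p_1^t}[D_{KL}(\cdots)]}\leq\sqrt{\tfrac{L+1}{2}}\sqrt{\epsilon}$ via Cauchy--Schwarz, using the chain-rule decomposition above to identify $\sum_t\mathbb E_{s\sim p_1^t}[D_{KL}(\cdots)]\leq\epsilon$. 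The symmetric argument for the transition-dynamics term is identical, with $T$ and $(s,a)$-marginals in place of $\pi$ and $s$-marginals, completing both inequalities in the lemma.
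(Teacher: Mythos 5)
Your proposal is correct and takes essentially the same route as the paper's proof: a chain-rule decomposition of the trajectory-level KL into expected per-step conditional KLs (policy and dynamics terms, each nonnegative and hence individually bounded by $\epsilon$), followed by Pinsker's inequality pointwise, Jensen to exchange the expectation with the square/square-root, and a Cauchy--Schwarz (power-mean) step across timesteps. The $L$ versus $L+1$ constant you flag is real but harmless, and the paper's own proof carries the same slack, since its step $\frac{1}{L}\sum_{t}^{L} x_t^2 \geq \bigl[\frac{1}{L}\sum_{t}^{L} x_t\bigr]^2$ is only exact when the sum contains $L$ terms rather than $L+1$.
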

\begin{proof}
    \begin{equation}
        \begin{aligned}
            \epsilon &\geq D_{KL}(p_1(\tau)||p_2(\tau))\\
            &=\sum_{\tau} p(\tau) \log\frac{p_1(s_0)\prod_{t=0}^L\pi_1(a_t|s_t)p_1(s_{t+1}|s_t,a_t)}{p_2(s_0)\prod_{t=0}^L\pi_2(a_t|s_t)p_2(s_{t+1}|s_t,a_t)} \\
            &\geq  \sum_{t}^L\mathbb E_{s,a\sim p^t_1(s,a)}\left[D_{KL}(p_1(\cdot|s,a)||p_2(\cdot|s,a))\right]+\sum_{t}^L\mathbb E_{s\sim p^t_1(s)}\left[D_{KL}(\pi_1(\cdot|s)||\pi_2(\cdot|s))\right]\\
            &\geq  \sum_{t}^L\mathbb E_{s,a\sim p^t_1(s,a)}\left[2D_{TV}^2(p_1(\cdot|s,a)||p_2(\cdot|s,a))\right]+\sum_{t}^L\mathbb E_{s\sim p^t_1(s)}\left[2D_{TV}^2(\pi_1(\cdot|s)||\pi_2(\cdot|s))\right]\\
            &\geq  2\sum_{t}^L\left[ \mathbb E_{s,a\sim p^t_1(s,a)}\left[D_{TV}(p_1(\cdot|s,a)||p_2(\cdot|s,a))\right]\right]^2 +2\sum_{t}^L\left[ \mathbb E_{s\sim p^t_1(s)}\left[D_{TV}(\pi_1(\cdot|s)||\pi_2(\cdot|s))\right]\right]^2
        \end{aligned}
    \end{equation}
    Thus,
    \begin{equation}
        \begin{aligned}
            \epsilon &\geq 2\sum_{t}^L\left[ \mathbb E_{s,a\sim p^t_1(s,a)}\left[D_{TV}(p_1(\cdot|s,a)||p_2(\cdot|s,a))\right]\right]^2\\
            &\geq 2L \frac{1}{L}\sum_{t}^L\left[ \mathbb E_{s,a\sim p^t_1(s,a)}\left[D_{TV}(p_1(\cdot|s,a)||p_2(\cdot|s,a))\right]\right]^2\\
            &\geq 2L \left[\frac{1}{L}\sum_{t}^L \mathbb E_{s,a\sim p^t_1(s,a)}\left[D_{TV}(p_1(\cdot|s,a)||p_2(\cdot|s,a))\right]\right]^2\\
        \end{aligned}
    \end{equation}
    Then, we have $\sum_{t}^L\left[ \mathbb E_{s,a\sim p^t_1(s,a)}\left[D_{TV}(p_1(\cdot|s,a)||p_2(\cdot|s,a))\right]\right]\leq \sqrt{\frac{L\epsilon}{2}}$.
    Similarly, we can obtain $\sum_{t}^L\left[ \mathbb E_{s\sim p^t_1(s)}\left[D_{TV}(\pi_1(\cdot|s)||\pi_2(\cdot|s))\right]\right]\leq \sqrt{\frac{L\epsilon}{2}}$.
\end{proof}

Now we are ready to present the proof of Lemma~\ref{lemma:bound_2}:
\begin{proof}
    First, we consider the return gap of $\pi$ between learned dynamics $T_q$ and empirical dynamics $\hat T$.
    As discussed in Section~\ref{sec:rl_to_match}, the $T_q$ and $\hat T$ is consistent in deterministic dynamics, which means
    \begin{equation}
        |J_{T_q}(\pi)- J_{\hat T}(\pi)|=0.
    \end{equation}
    In the case of probabilistic dynamics, since we constraint the trajectory divergence between empirical behavior and target policy ($D_{KL}(q(\tau)||p_{\pi_\beta}(\tau))\leq \epsilon$), we apply \ref{lem:epsilon bound} and \ref{lem:return bound} and get 
    \begin{align}
        &|J_{T_q}(\pi)- J_{\hat T}(\pi)|\nonumber\\
        =&\left|\mathbb{E}_{\pi,T_q}[\sum_{t=0}^L \gamma^t \hat r(s,a)]-\mathbb{E}_{\pi,\hat T}[\sum_{t=0}^L \gamma^t \hat r(s,a)]\right|\nonumber\\
        \leq & 2  R_{m}\sum_{t=0}^L \gamma^t \sum_{t'=0}^t\left(  \mathbb E_{s,a\sim p^{t'}_{\hat T}(s,a)}\left[D_{TV}(T_q(\cdot|s,a)||\hat T(\cdot|s,a))\right]\right)\\
        \leq & 2  R_{m}\sum_{t=0}^L \gamma^t \sqrt{\frac{L\epsilon}{2}}=\frac{2R_m(1-\gamma^{L+1})}{1-\gamma}\sqrt{\frac{L\epsilon}{2}}.
    \end{align}
    

    Then, we consider the return gap of $\pi$ between $\hat T$ and $T$:
    \begin{align}
        &|J_{T}(\pi)- J_{\hat T}(\pi)|\nonumber\\
        =&\left|\mathbb{E}_{\pi,T}[\sum_{t=0}^L \gamma^t r(s,a)]-\mathbb{E}_{\pi,\hat T}[\sum_{t=0}^L \gamma^t r(s,a)]\right|\\ 
        \leq & 2 R_{m}\sum_{t=0}^L \gamma^t \sum_{t'=0}^t\left(  \mathbb E_{s,a\sim p_{\hat T}^{t'}(s,a)}\left[D_{TV}(T(\cdot|s,a)||\hat T(\cdot|s,a))\right]\right)\\
        = &  \frac{2R_{m}}{1-\gamma}\sum_{t'}^L \gamma^{t'}\mathbb E_{s,a\sim p_{\hat T}^{t'}(s,a)}\left[D_{TV}(T(\cdot|s,a)||\hat T(\cdot|s,a))\right]\label{eq:2bound} \\
        \leq & \frac{2R_{m}}{1-\gamma}\mathbb E_{\pi, \hat T}\left[\sum_{t=0}^L\frac{\gamma^t C_{T,\delta}}{\sqrt{{N}(s_t,a_t)}}\right].
    \end{align}
    Thus,
    \begin{align}
        &|J_{T_q}(\pi)- J_{T}(\pi)|\\
        \leq&|J_{T_q}(\pi)- J_{\hat T}(\pi)|+|J_{\hat T}(\pi)- J_{T}(\pi)|\\
        \leq & \frac{2R_m}{1-\gamma}\left[(1-\gamma^{L+1})\sqrt{\frac{L\epsilon}{2}}+\mathbb E_{\pi, \hat T}\left[\sum_{t=0}^L\frac{\gamma^t C_{T,\delta}}{\sqrt{{N}(s_t,a_t)}}\right]\right].
        \nonumber
    \end{align}
    where the first term vanishes in deterministic dynamics.

    Then, we further prove $|J_{T}(\pi)- J_{\hat T}(\pi)|$ can be bound by $\frac{2R_{m}}{1-\gamma} \left[\frac{2(1-\gamma^{L+1})}{1-\gamma}\sqrt{\frac{L\epsilon}{2}}+\mathbb E_{\pi_\beta, \hat T}\left[\sum_{t=0}^L\frac{\gamma^t C_{T,\delta}}{\sqrt{{N}(s_t,a_t)}}\right]\right]$. \qquad\qquad\qquad\qquad 
    After denoting the uncertainty reward $u(s,a)=D_{TV}(T(\cdot|s,a)||\hat T(\cdot|s,a))$ which is less than $1$, the term $\sum_{t'}^L \gamma^{t'}\mathbb E_{s,a\sim p_{\hat T}^{t'}(s,a)}\left[D_{TV}(T(\cdot|s,a)||\hat T(\cdot|s,a))\right]$ can be seen as the cumulative uncertainty reward of $\pi$ and $\hat T$, and we denote it as $U_{\hat T}(\pi)$ for brevity.
    Therefore, we can bound the uncertainty return gap between behavior policy $\pi_\beta$ and learned policy $\pi$:
    \begin{align}
        &|U_{\hat T}(\pi)- U_{\hat T}(\pi_\beta)|\nonumber\\
        =&\left|\mathbb{E}_{\pi,\hat T}[\sum_{t=0}^L \gamma^t u(s_t,a_t)]-\mathbb{E}_{\pi_\beta,\hat T}[\sum_{t=0}^L \gamma^t u(s_t,a_t)]\right|\nonumber\\
        \leq & 2 \sum_{t=0}^L \gamma^t \sum_{t'=0}^t\left( \mathbb E_{s\sim p^t_1(s)}\left[D_{TV}(\pi(\cdot|s)||\pi_\beta(\cdot|s))\right] \right)\\
        \leq & 2 \sum_{t=0}^L \gamma^t \sqrt{\frac{L\epsilon}{2}}=\frac{2(1-\gamma^{L+1})}{1-\gamma}\sqrt{\frac{L\epsilon}{2}}.
    \end{align}
    After upper bounding $U_{\hat T}(\pi)$ by $\frac{2(1-\gamma^{L+1})}{1-\gamma}\sqrt{\frac{L\epsilon}{2}}+U_{\hat T}(\pi_\beta)$, we can obtain the return gap of $\pi$ between $\hat T$ and $T$.
    \begin{align}
        &|J_{\hat T}(\pi)- J_{T}(\pi)|\\
        \leq & \frac{2R_{m}}{1-\gamma} U_{\hat T}(\pi)\\
        \leq &  \frac{2R_{m}}{1-\gamma} \left[\frac{2(1-\gamma^{L+1})}{1-\gamma}\sqrt{\frac{L\epsilon}{2}}+U_{\hat T}(\pi_\beta)\right].
    \end{align}    
    Then, there is:
    \begin{align}
        &|J_{T_q}(\pi)- J_{T}(\pi)|\\
        \leq&|J_{T_q}(\pi)- J_{\hat T}(\pi)|+|J_{\hat T}(\pi)- J_{T}(\pi)|\\
        \leq & \frac{2R_m(1-\gamma^{L+1})}{1-\gamma}\sqrt{\frac{L\epsilon}{2}}+\frac{2R_{m}}{1-\gamma} \left[\frac{2(1-\gamma^{L+1})}{1-\gamma}\sqrt{\frac{L\epsilon}{2}}+\mathbb E_{\pi_\beta, \hat T}\left[\sum_{t=0}^L\frac{\gamma^t C_{T,\delta}}{\sqrt{{N}(s_t,a_t)}}\right]\right].
        \nonumber
    \end{align}
    where the term $\frac{2R_m(1-\gamma^{L+1})}{1-\gamma}\sqrt{\frac{L\epsilon}{2}}$ vanishes in deterministic dynamics.
    For the error bound of the cost, we note that the probability of all constraint-violating trajectories ($\hat C(\tau)> b$) is zero for $\pi$ on both $\hat T$ and $T_q$, since the empirical cost can be accurately modeled even in $T_q$ under our assumption. Therefore, the error only comes from the cost function divergence between the empirical and real environments. For each trajectory $\tau$, we utilize the concentration properties to obtain that
    \begin{align}
        |C(\tau)-\hat C(\tau)|\leq\sum_{t}\gamma^t|c(\tau_{s_t},\tau_{a_t})-\hat c(\tau_{s_t},\tau_{a_t})|\leq \sum_{t=0}^L\frac{\gamma^t C_{c,\delta}}{\sqrt{{N}(\tau_{s_t},\tau_{a_t})}}. \nonumber
    \end{align}
    Finally, we prove the proposition~\ref{lemma:bound_2}.
\end{proof}

\begin{proposition}[4.3 in the main paper]
\label{thm:optimal_guarantee_2}
    Denoting $\pi_{q^*_b}$ as the policy induced by the constrained optimal trajectory distribution $q^*_b$,
    for any policy $\pi$ derived from the trajectory distribution that simultaneously satisfies the in-distribution constraint~\eqref{eq:offlinerl_traj_cons} and safety constraint~\eqref{eq:offlinerl_safe_cons}, the following inequality holds with probability at least $1-\delta$:
    \begin{equation}
        \begin{aligned}\label{eq:return_bound_2}
            J_T(\pi_{q^*_b})\geq  J_T(\pi)-\frac{4R_{m}}{1-\gamma} \bigg[(1-\gamma^{L+1})\sqrt{\frac{L\epsilon}{2}}+\mathbb E_{\pi, \hat T}\left[\sum_{t=0}^L\frac{\gamma^t C_{T,\delta}}{\sqrt{{N}(s_t,a_t)}}\right]\bigg]
        \end{aligned}
    \end{equation}
    where the first term $(1-\gamma^{L+1})\sqrt{\frac{L\epsilon}{2}}$ vanishes in deterministic environment.
    And for each trajectory $\tau$ generated from $q^*_b$, the episodic cost of $\tau$ is bounded with probability at least $1-\delta$ as follow:
    \begin{equation}
        \begin{aligned}\label{eq:cost_bound_2}
            C(\tau)\leq b+\sum_{t=0}^L\frac{\gamma^t C_{c,\delta}}{\sqrt{{N}(\tau_{s_t},\tau_{a_t})}}.
        \end{aligned}
    \end{equation}
\end{proposition}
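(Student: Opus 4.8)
The plan is to obtain both inequalities as consequences of Lemma~\ref{lemma:bound_2} together with the optimality of $q^*_b$ established in Theorem~\ref{thm:q^*_b_2}. The reward bound \eqref{eq:return_bound_2} is a sandwiching argument that sends ``on-distribution'' returns $J_{T_q}$ to true returns $J_T$ on both sides of the optimality comparison, while the cost bound \eqref{eq:cost_bound_2} follows directly from the per-trajectory feasibility built into the support of $q^*_b$ combined with the cost-concentration estimate in Lemma~\ref{lemma:bound_2}.

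For the reward bound, first I would record the one fact available for free from Theorem~\ref{thm:q^*_b_2}: since $\pi$ is derived from some feasible $q(\tau)$ satisfying both \eqref{eq:offlinerl_safe_cons} and \eqref{eq:offlinerl_traj_cons}, and $q^*_b$ maximizes $\mathbb E_{\tau\sim q}[R(\tau)]$ over precisely this feasible set, we have $J_{T_{q^*_b}}(\pi_{q^*_b})=\mathbb E_{\tau\sim q^*_b}[R(\tau)]\geq\mathbb E_{\tau\sim q}[R(\tau)]=J_{T_q}(\pi)$. Writing $B(\pi')=\frac{2R_m}{1-\gamma}[(1-\gamma^{L+1})\sqrt{L\epsilon/2}+\mathbb E_{\pi',\hat T}[\sum_t\gamma^t C_{T,\delta}/\sqrt{N(s_t,a_t)}]]$ for the Lemma~\ref{lemma:bound_2} bound, I would apply that lemma twice: to $\pi_{q^*_b}$, giving $J_T(\pi_{q^*_b})\geq J_{T_{q^*_b}}(\pi_{q^*_b})-B(\pi_{q^*_b})$, and to $\pi$, giving $J_{T_q}(\pi)\geq J_T(\pi)-B(\pi)$. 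Chaining these with the optimality inequality yields $J_T(\pi_{q^*_b})\geq J_T(\pi)-B(\pi)-B(\pi_{q^*_b})$, which is the factor-of-two that produces the $\frac{4R_m}{1-\gamma}$ coefficient of \eqref{eq:return_bound_2}, with the $\sqrt{L\epsilon/2}$ term dropping out in the deterministic case exactly as in Lemma~\ref{lemma:bound_2}.

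For the cost bound, the key observation is structural rather than analytic. By the form \eqref{eq:q^*_b} of $q^*_b$, the distribution assigns zero probability to every trajectory whose empirical cost exceeds $b$, so $\hat C(\tau)\leq b$ holds for every $\tau$ in its support. I would then invoke the cost part of Lemma~\ref{lemma:bound_2}, namely $C(\tau)-\hat C(\tau)\leq\sum_{t=0}^L\gamma^t C_{c,\delta}/\sqrt{N(\tau_{s_t},\tau_{a_t})}$, and simply add the feasibility bound $\hat C(\tau)\leq b$ to conclude $C(\tau)\leq b+\sum_{t=0}^L\gamma^t C_{c,\delta}/\sqrt{N(\tau_{s_t},\tau_{a_t})}$ with probability at least $1-\delta$, which is \eqref{eq:cost_bound_2}.

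The main obstacle I expect is the visitation distribution appearing in the two applications of Lemma~\ref{lemma:bound_2}: applied to $\pi_{q^*_b}$ the bound naturally carries $\mathbb E_{\pi_{q^*_b},\hat T}[\cdots]$, whereas the target \eqref{eq:return_bound_2} is written purely in terms of $\mathbb E_{\pi,\hat T}[\cdots]$, and in general one cannot dominate $B(\pi_{q^*_b})$ by $B(\pi)$. To collapse $B(\pi)+B(\pi_{q^*_b})$ into a single term cleanly, I would instead use the behavior-policy form of the dynamics-uncertainty bound derived inside the proof of Lemma~\ref{lemma:bound_2} (the variant with $\frac{2(1-\gamma^{L+1})}{1-\gamma}\sqrt{L\epsilon/2}+\mathbb E_{\pi_\beta,\hat T}[\cdots]$), which is \emph{identical} for every feasible policy because it depends only on $\pi_\beta$; this makes the two copies of the bound literally the same quantity and removes the mismatch. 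A secondary point to verify is that Theorem~\ref{thm:q^*_b_2} genuinely certifies $q^*_b$ as feasible, so that the optimality comparison is legitimate; this holds under condition \eqref{eq:optimal_q_condition} and the Slater assumption used there.
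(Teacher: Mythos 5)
Your proposal follows essentially the same route as the paper's proof: the reward bound comes from the optimality comparison $J_{T_{q^*_b}}(\pi_{q^*_b})\geq J_{T_q}(\pi)$ sandwiched between two applications of Lemma~\ref{lemma:bound_2}, and the cost bound comes from $\hat C(\tau)\leq b$ on the support of $q^*_b$ plus the cost-concentration inequality. Both of those pieces match the paper exactly.

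The one place you diverge is instructive: the ``main obstacle'' you flag is real, and the paper's own proof silently steps over it. The paper defines a single quantity $\sigma_1$ containing $\mathbb E_{\pi,\hat T}[\cdots]$ and uses it as the error bound for \emph{both} policies, writing $J_T(\pi_{q^*_b})\geq J_{T_q}(\pi_{q^*_b})-\sigma_1$; but Lemma~\ref{lemma:bound_2} applied to $\pi_{q^*_b}$ produces an uncertainty term $\mathbb E_{\pi_{q^*_b},\hat T}[\cdots]$, which, as you correctly note, is not in general dominated by $\mathbb E_{\pi,\hat T}[\cdots]$. So the proposition as literally stated (with only the $\mathbb E_{\pi,\hat T}$ term) is not fully justified by the paper's argument either. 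Your repair --- replacing both error terms by the behavior-policy form $\frac{2(1-\gamma^{L+1})}{1-\gamma}\sqrt{L\epsilon/2}+\mathbb E_{\pi_\beta,\hat T}[\cdots]$, which is policy-independent across the feasible set --- is sound and closes the gap, but be aware that it proves a \emph{variant} of the proposition: the final bound is then expressed through $\pi_\beta$ and carries the extra $\frac{2(1-\gamma^{L+1})}{1-\gamma}\sqrt{L\epsilon/2}$ inflation, rather than the $\mathbb E_{\pi,\hat T}$ form in Eq.~\eqref{eq:return_bound_2}. In short: your proof is at least as rigorous as the paper's, at the price of a slightly weaker-looking (but honest) constant; the paper obtains the cleaner form only by implicitly conflating the two visitation distributions.
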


\begin{proof}
Let $\sigma_1=\frac{2R_m}{1-\gamma}\left[(1-\gamma^{L+1})\sqrt{\frac{L\epsilon}{2}}+\mathbb E_{\pi, \hat T}\left[\sum_{t=0}^L\frac{\gamma^t C_{T,\delta}}{\sqrt{{N}(s_t,a_t)}}\right]\right]$. Recall that $q_b^*$ is the optimal distribution, while $\pi_{q_b^*}$ enjoys the theoretical property of $q_b^*$. Hence, for any $\pi$ derived from the trajectory distribution that satisfies both the in-distribution constraint and safety constraint, we have $J_{T_q}(\pi_{q_b^*})\geq  J_{T_q}(\pi)$. Then we utilize the Lemma~\ref{lemma:bound_2} to obtain
\begin{equation}
    \begin{aligned}
        J_T(\pi_{q^*_b})\geq & J_{T_q}(\pi_{q^*_b})-\sigma_1\geq J_{T_q}(\pi) - \sigma_1  \geq J_T(\pi) - 2\sigma_1 \\
        \geq & J_T(\pi) - \frac{4R_m}{1-\gamma}\left[(1-\gamma^{L+1})\sqrt{\frac{L\epsilon}{2}}+\mathbb E_{\pi, \hat T}\left[\sum_{t=0}^L\frac{\gamma^t C_{T,\delta}}{\sqrt{{N}(s_t,a_t)}}\right]\right].
        \nonumber
    \end{aligned}
\end{equation}
Besides, $\hat C(\tau)\leq b$ is true for any trajectory $\tau$ generated from $q^*_b$, thus, by using the Lemma~\ref{lemma:bound_2} we can obtain 
\begin{equation}
    \begin{aligned}
        C(\tau)\leq \hat C(\tau)+\sum_{t=0}^L\frac{\gamma^t C_{c,\delta}}{\sqrt{{N}(\tau_{s_t},\tau_{a_t})}}\leq b+\sum_{t=0}^L\frac{\gamma^t C_{c,\delta}}{\sqrt{{N}(\tau_{s_t},\tau_{a_t})}}.
        \nonumber
    \end{aligned}
\end{equation}

\end{proof}

\section{Details of Simulation Environments and Datasets}
\label{app:environment_detail}
\paragraph{Pendulum and Reacher}
In the Pendulum swing-up task, the agent tries to keep the pendulum upright and balanced under the constraint of keeping away from unsafe angles. 
In the Reacher task, the robotic arm needs to move towards the goal while avoiding the unsafe region. 
The implementation of the Pendulum swing-up task follows Sauté~\cite{sootla2022saute} without additional modification.
The single Pendulum swing-up is taken from the classic control library in the Open AI Gym~\cite{brockman2016openai}.
The cost function is defined as:
\begin{equation}
    \begin{aligned}
        c=\begin{cases}
            1-\frac{|\theta-\delta|}{50} &\text{if }-25\leq \theta \leq 75,\\
            0 & \text{otherwise},
        \end{cases}
        \nonumber
    \end{aligned}
\end{equation}
where $\theta$ is the angle of the pole deviation from the upright position and $\delta=25$. Such a cost function is designed to create a trade-off between keeping upright and staying away from the unsafe degree i.e., $\delta$. 
The cost function of Reacher is designed as:
\begin{equation}
    \begin{aligned}
        c=\begin{cases}
            100-200\Vert x-x_{\text{unsafe}}\Vert_2 & \text{if }\Vert x-x_{\text{unsafe}}\Vert_2 \leq 0.5\\
            0 & \text{otherwise},
        \end{cases}
        \nonumber
    \end{aligned}
\end{equation}
where $x_{\text{unsafe}}$ is set to $(0.5,0.5,0)$, following the implementation of Sauté.
Refer to~\cite{sootla2022saute} for more details.
To collect the offline dataset, we train an unconstrained SAC policy online until convergence and take its replay buffer as the offline dataset, which contains $2e4$ samples.

\paragraph{MuJoCo Environments}
In MuJoCo tasks~\cite{todorov2012mujoco}, following the constraint setting of previous works~\cite{zhang2020first,yang2022cup}, the goal of the agent is to control the robot to keep it balanced and go forward with the cost of the moving speed. 
We use the MuJoCo environments provided by OpenAI Gym~\cite{brockman2016openai}
and refer to FOCOPS~\cite{zhang2020first} to design the cost function: $c=|v|$, where $v$ is the agent's velocity.  
We utilize three types of offline datasets (medium, medium-replay and medium version )  from D4RL~\cite{fu2020d4rl} for each environment. 

\paragraph{Bullet-Safety-Gym Environments}
Bullet-Safety-Gym~\cite{gronauer2022bullet} contains a set of safe RL tasks that are widely used to assess safe RL algorithms.
In the SafetyCarCircle task, the goal of a car agent is to move on a circle while remaining in the safety zone. 
The cost in this task is defined as $c=\mathbb{I}(|x|>x_\text{lim})$. 
In the SafetyBallReach task, a ball agent is controlled to move towards a goal while avoiding collisions with obstacles. 
To collect the offline dataset, we train an unconstrained SAC policy online for $1e6$ steps (one training step for one data point) and take its replay buffer as the offline dataset, which contains $1e6$ samples.

\paragraph{Relationship between reward and cost}
As shown by our experimental results, the reward drops when enforcing stricter constraints.
This is because there exists various degrees of conflict between the task objective and the safety constraint in the above environments (e.g., in MuJoCo tasks, the cost function is also an important component of reward function). 
As a result, the optimal solution would probably be located at some edge area of the feasible policy space. 
When the constraint is set stricter, the edge of feasible space will be pushed inward and the reward of the optimal policy is likely to shrink. 
In other words, these tasks have a non-empty tempting policy class which lies in outside of the feasible policy space and has higher reward than the optimal policy, as formally defined as Temptation in the mentioned work~\cite{liu2022robustness}.

\section{Real-world Advertising Bidding Scenario}
\label{app:bid}
\paragraph{Problem Formulation}
In the real-world oCPM advertising bidding scenario, advertiser $s$ provides the target ROI $Roi_s$ and the maximum total pay $P_{\max}$ at the beginning of each day.
At time step $t$, the advertising system bids $pay_{s,t}$ for advertiser $s$.
If this bid is successful, the advertiser expends $pay_{s,t}$ and obtains revenue $gmv_{s,t}$, i.e., the gross merchandise value, from the displayed ads.
In this process, the system has to ensure that the average conversion is not less than the target $Roi_s$, i.e.,
\begin{equation}
    \begin{aligned}
        \frac{\sum_{t=0}^T gmv_{s,t}}{P_{\max}} > Roi_s\iff \sum_{t=0}^T -gmv_{s,t}< -Roi_s\cdot P_{\max}.
    \end{aligned}
\end{equation}
The optimization problem in this scenario can be described as:
\begin{equation}
    \begin{aligned}
        &\max \sum_{t=0}^{T} pay_{s,t}\\
        \text{s.t. }&\sum_{t=0}^T -gmv_{s,t}< -Roi_s\cdot P_{\max},\forall s
    \end{aligned}
\end{equation}

We formulate this problem as a Markov Decision Process (MDP). 
\begin{itemize}
\item A trajectory contains the record of one advertiser's successful bidding within one day.
\item The state contains the user context features, ads context features, cumulative revenue, cumulative pay, maximum total pay, target ROI, predicted click-through rate $pCTR$, predicted conversion rate $pCVR$ and predicted gross merchandise volume $pGMV$ of one advertiser.
\item The action is to output a weight $k\in [0.5,1.5]$, which controls the bidding by formula $bid=k\cdot \frac{pCTR\cdot pCVR\cdot pGMV}{Roi_s}$.
\item The reward is $pay_{s,t}$ for a successful bid and $0$ otherwise.
\item The cost is $-gmv_{s,t}$ for a successful bid and $0$ otherwise.
\item The constraint budget is $-Roi_s\cdot P_{\max}$.
\end{itemize}

\paragraph{Deployment Details}
In this task, the cost and the constraint budget are negative, which means that the constraint is violated at the starting stage of bidding and can only be satisfied over time.
Although Sauté assumes that the cost is positive, it still works in this scenario.
We modify Sauté by augmenting the state with the value $Roi_s\cdot P_{\max}-\sum_{t=0}^T gmv_{s,t}$ and adding an extra high reward to the state with $Roi_s\cdot P_{\max}-\sum_{t=0}^T gmv_{s,t}<0$.
In this way, the agent is encouraged to reach the target cumulative $gmv$ and maximize the total pay.
Besides, during evaluation, each advertiser switches among policies trained by different algorithms, in order to eliminate the differences between advertisers and make the online experiment fairer.

\paragraph{About the Data-collecting Policy}
The  dataset used in our experiment is collected by a set of constrained policies jointly.
The policy set includes BCORLE($\lambda$)~\cite{zhang2021bcorle} (60\% advertisers), PID~\cite{stooke2020responsive} (20\% advertisers) and CEM~\cite{de2005tutorial} (20\% advertisers).


\section{Implementation Details}
\paragraph{Implementation Details of TREBI}\label{app:trebi_imple}

The overall implementation of TREBI is based on Diffuser~\cite{janner2022planning} and can be found at \url{https://github.com/qianlin04/Safe-offline-RL-with-diffusion-model}.
The inference of TREBI requires three components that should be obtained during the training stage.
\textbf{The first component} is a U-Net style neural network, which mainly consists of one-dimensional temporal convolutions.
It models the mean $\mu_\theta(\tau^i,i)$ of Gaussian distribution $p_\theta=\mathcal{N}(\tau^{i-1};\mu_\theta(\tau^i,i),\Sigma)$ in the reverse process.
The input of this network is the trajectory $\tau^{i-1}$ obtained in the previous denoising steps, which can be seen as a special ``image'' where 1) the length of the trajectory is analogous to the size of the image and 2) the dimension of the concatenation of the state and the action is analogous to the channel of the image.
\textbf{The second and third components} are respectively the reward and the cost value estimation, i.e., $R(\tau)$ and $C(\tau)$, which form the objective function $h_{b,n}(\tau)$ in Eq.~\eqref{eq:pn=phn}.
They are modeled by neural networks that take $\tau^{i-1}$ as the input and output the value estimation.
These two components also contain U-Net style networks similar to the first component.
The loss of the reward/cost value estimation is the MSE loss between the prediction of a trajectory sampled from the dataset and the cumulative reward/cost of this trajectory.
In other words, the value estimation is learned by using the Monte Carlo method.
In addition, during planning multiple trajectories rather than one single trajectory are generated randomly in the reverse process, and the one with the highest cumulative reward among constraint-satisfying trajectories is selected to generate the action for real-environment execution.

\paragraph{Implementation Details of Baselines}

As for the baselines, we implement BCQ-Lagrangian, BCORLE($\lambda$) and BCQ-Sauté by introducing the Lagrange multiplier and the two kinds of state augmentations respectively on the open-source BCQ implementation.
Since there is currently no publicly available CPQ implementation, we reproduce CPQ and follow the hyper-parameters setting mentioned in paper~\cite{xu2022constraints}, unless the learning rate of $\alpha$ is set to 0.01 and the latent space threshold is set to the fixed percentile 0.75 of the latent KL loss of the whole batch dataset.
For BCORLE($\lambda$), the maximum lambda is set to 20 for all tasks.
For BCQ-Sauté, the maximum of initial budget is set to the maximum budget w.r.t. for each task and the penalty for the states violating the constraint is set to 1000.
For all actor-critic methods, the learning rate is $1e-5$ for the actor and $1e-3$ for the critic.
For all the BCQ-based methods, we fine-tune the perturbation range $\Phi$ among $\{0.15,0.10,0.05,0.015\}$.

\paragraph{Hyper-parameter Settings}
\label{app:hyper-parameters}
In all simulation environments, we evaluate each algorithm on 60 different trajectories for each possible budget.
All algorithms are trained for $1e6$ iterations and the results are obtained on 3 different random seeds.
The time interval of regenerating the trajectory for decision making (i.e., control frequency) is set to $1$ for all tasks.
The cost discount factor is set to $\gamma=1$ for Pendulum, Reacher and two Bullet-Safety-Gym task, and $\gamma=0.99$ for all MuJoCo tasks.
The max episode length is set to $200$ for Pendulum, $50$ for Reacher, $1000$ for all MuJoCo tasks, $500$ for SafetyCarCircle-v0 and $250$ for SafetyBallReach-v0.
For Pendulum, Reacher and two Bullet-Safety-Gym tasks, the maximum budget is set to the expected episodic cost of the unconstrained behavior policy. 
For MuJoCo tasks, the maximum budget is set to the expected episodic cost induced by Diffuser's unconstrained planning.
We provide the maximum budgets for all environments in Table~\ref{tab:max_budget}. 
The hyper-parameter $n$ in Eq.~\eqref{eq:gb} is set to $1000$ for Pendulum, Reacher, the Ads bidding task and $100$ for all MuJoCo tasks and Bullet-Safety-Gym tasks.
The length of trajectory is set to $128$ for Pendulum and the Ads bidding task, and $32$ for Reacher, all the MuJoCo tasks and Bullet-Safety-Gym tasks.
The hyper-parameter $\alpha$ in Eq.~\eqref{eq:gb} is set to $0.1$ for all tasks.

\begin{table}[t]
\caption{Maximum budgets for the used datasets.}
\label{tab:max_budget}
\begin{center}
\begin{small}
\begin{tabular}{lr}
\toprule
Dataset & Max budget\\
\midrule
Pendulum & 60 \\
Reacher & 14 \\
HalfCheetah-medium-v2 & 414 \\
HalfCheetah-medium-replay-v2 & 384 \\
HalfCheetah-medium-expert-v2 & 731 \\
Hopper-medium-v2 & 130 \\
Hopper-medium-replay-v2 & 140 \\
Hopper-medium-expert-v2 & 169 \\
Walker2d-medium-v2 & 160 \\
Walker2d-medium-replay-v2 & 160 \\
Walker2d-medium-expert-v2 & 234 \\
SafetyCarCircle-v0 & 150\\
SafetyBallReach-v0 & 35\\
\bottomrule
\end{tabular}
\end{small}
\end{center}
\end{table}

\section{Additional Results}

\paragraph{Results on Hopper and Walker2d}
\label{app:full_mujoco_results}

\begin{figure*}[ht!]
\begin{center}
\centerline{\includegraphics[width=\textwidth]{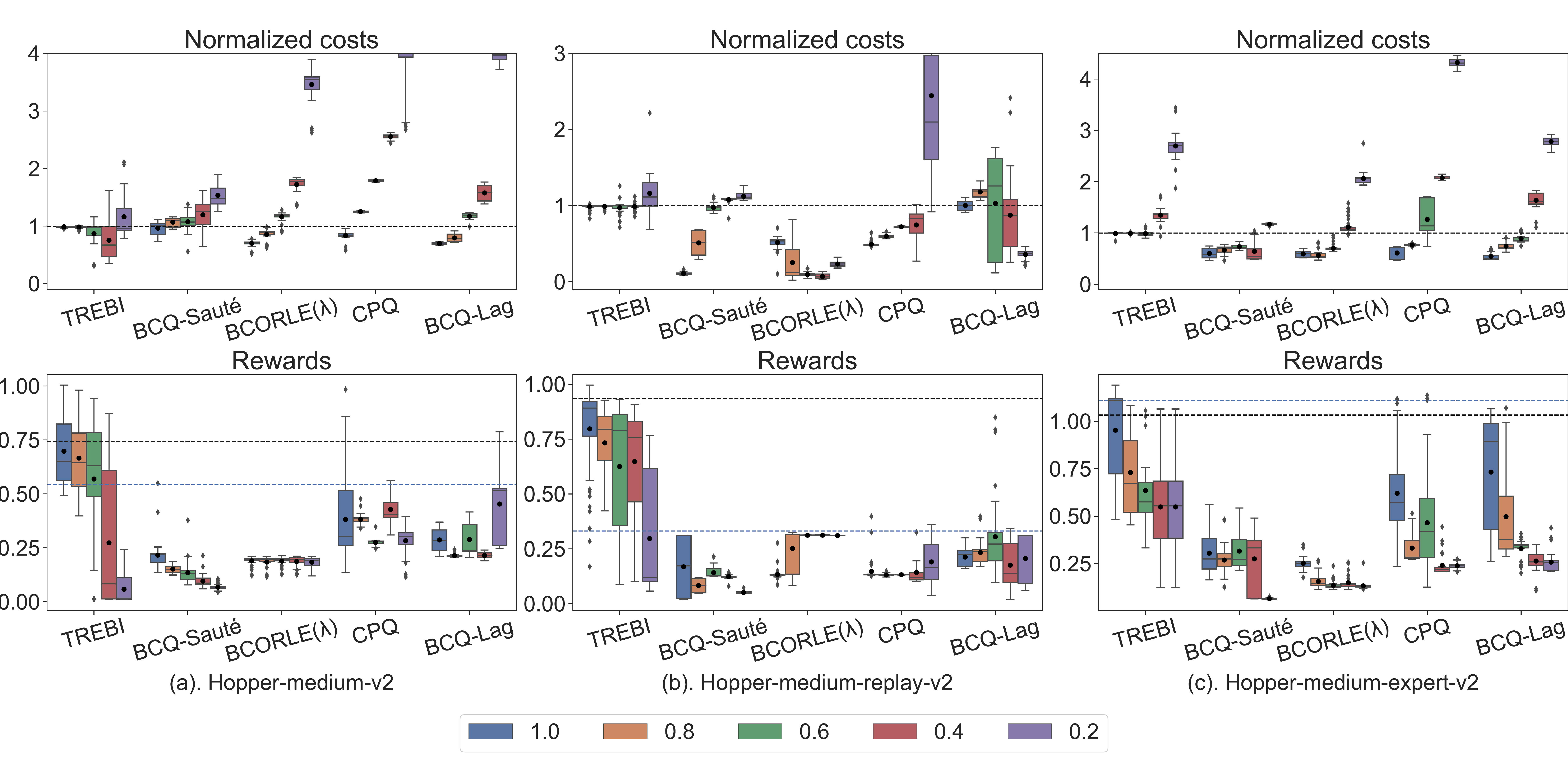}}
\caption{
    Results on Hopper tasks.
} \label{fig:hopper}
\end{center}
\end{figure*}

\begin{figure*}[ht!]
\begin{center}
\centerline{\includegraphics[width=\textwidth]{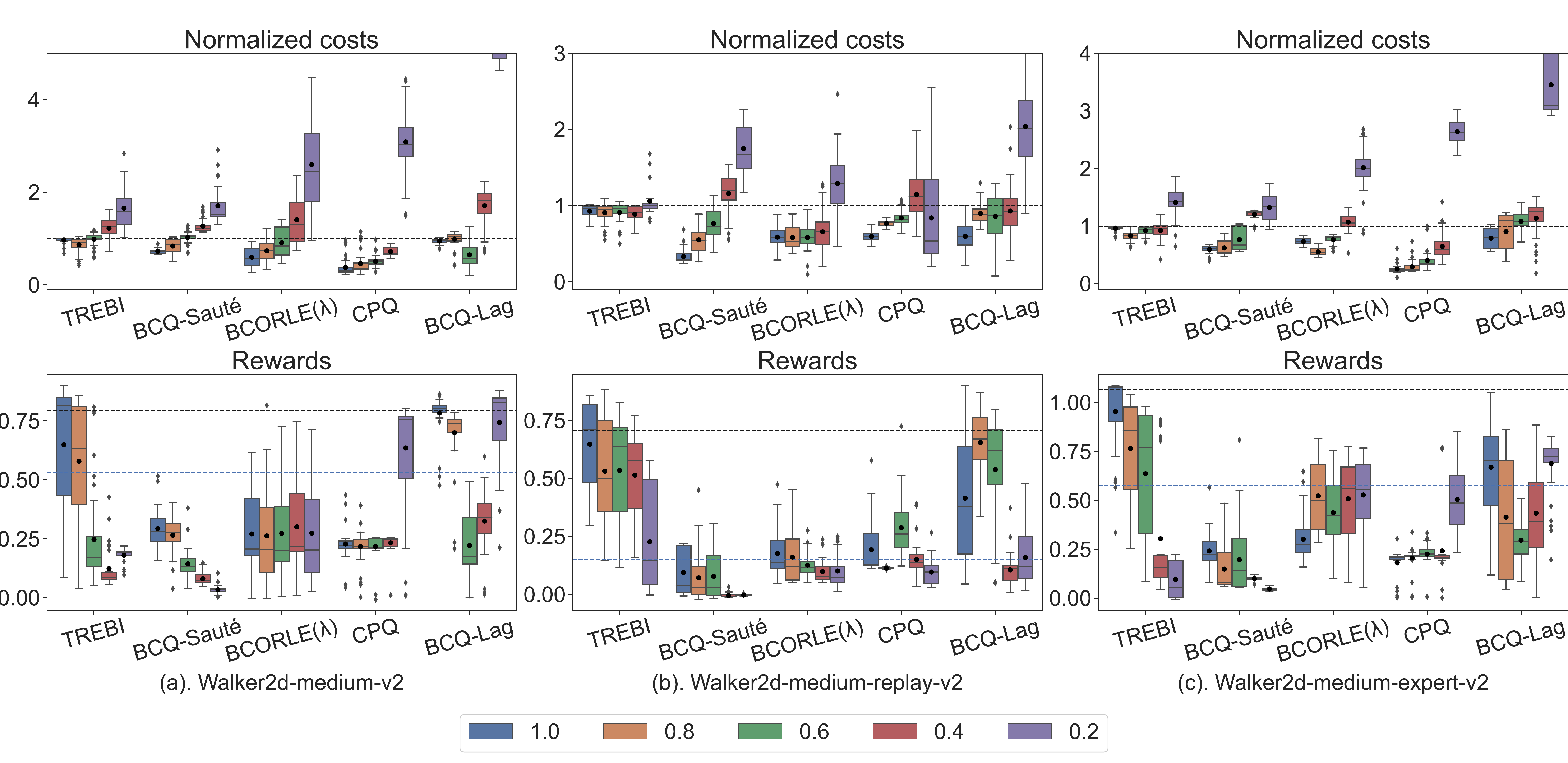}}
\caption{
    Results on Walker2d tasks.
} \label{fig:walker2d}
\end{center}
\end{figure*}

We present the performance of TREBI on Hopper and Walker2d datasets in Figure~\ref{fig:hopper} and \ref{fig:walker2d} respectively.
For all MuJoCo tasks with a budget ratio of $1.0,0.8,0.6$, TREBI achieves nearly perfect constraint satisfaction in the trajectory level.
In some tasks with a low budget ratio of $0.4, 0.2$, none of the evaluated approaches can achieve constraint satisfaction even in the sense of expectation.
One possible cause can be that the datasets of these tasks lack the trajectory samples that satisfy such extremely low budget constraints, hence satisfying the in-distribution constraint makes it almost impossible to learn safe policies for these low budgets.
Besides, TREBI demonstrates better low-budget constraint satisfaction on medium-replay datasets rather than medium and medium-expert datasets.
Note that the overall returns of the three types of datasets satisfies: medium-replay $<$ medium $<$ medium-expert.
While the reward and cost in MuJoCo tasks (and actually in many real-world tasks) have an approximately positive correlation, there are less low-cost trajectories in the medium and medium-expert datasets than in the medium-replay datasets, leading to a difficulty to learn safe policies for low-budget constraints.
Therefore, these empirical results agree with the condition of the dataset's quality in Theorem~\ref{thm:q^*_b}.

\paragraph{Control Frequency}
\label{app:control_freq}
\begin{figure*}[ht!]
\begin{center}
\centerline{\includegraphics[width=0.5\textwidth]{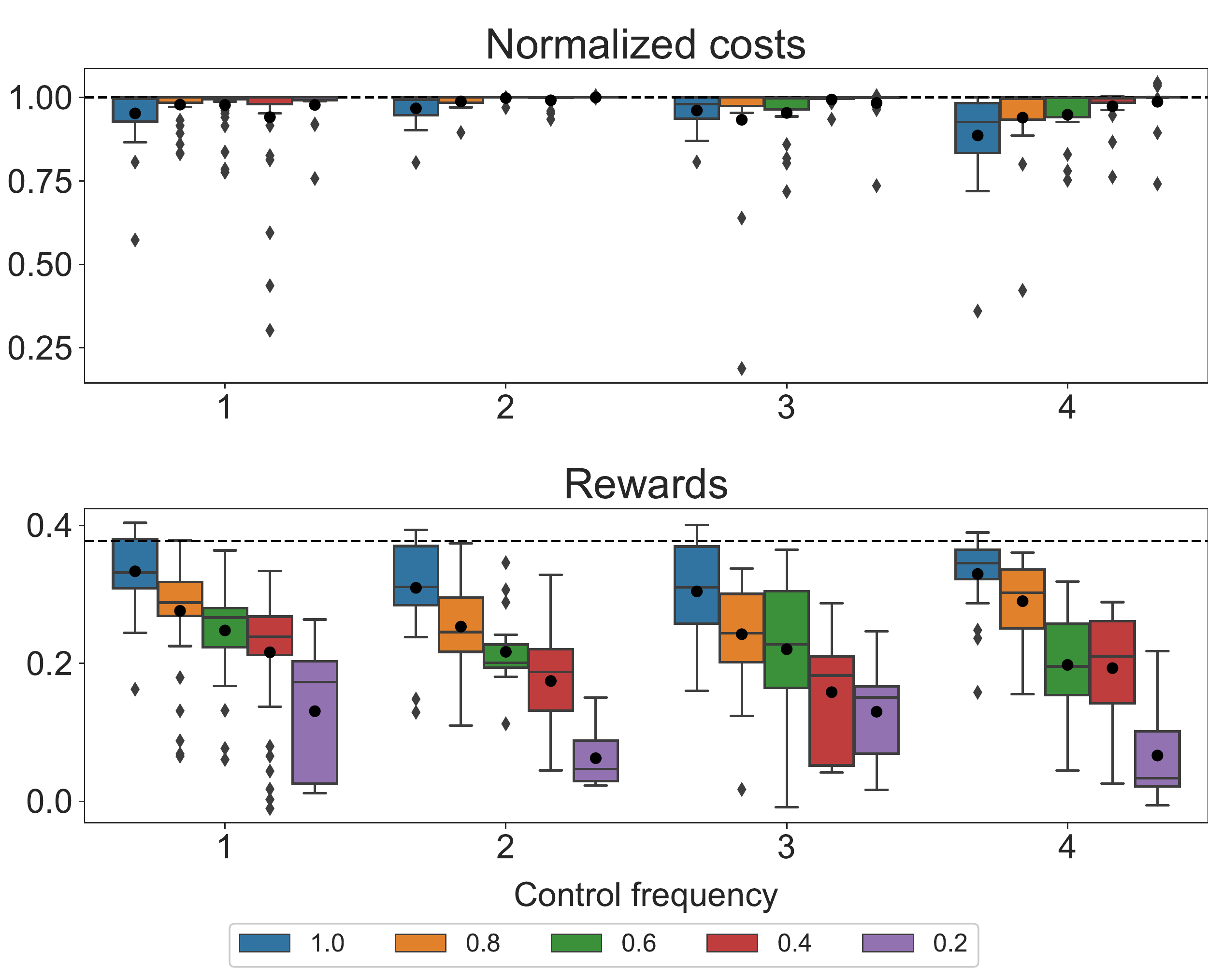}}
\caption{
    Results on HalfCheetah-medium-replay tasks with different constrol frequency.
} \label{fig:control_freq}
\end{center}
\end{figure*}
In our experiments, the planning process is performed at each step to ensure the best performance. 
Here in Figure~\ref{fig:control_freq}, we provide an empirical analysis of the control frequency (i.e., the number of step intervals for replanning)  in halfcheetah-medium-replay. 
We can see that decreasing the control frequency from per-1-step to per-4-step leads to slight reward declines and insignificant constraint violations in most budget settings. 
Thus it is possible to reduce the overall computational costs by decreasing the control frequency without too much performance degradation and constraint violation.

\paragraph{Budgets ratio sampled from $[0,1]$}
\label{app:uniform_sampled_budget}

To further validate the effectiveness of TREBI under randomly selected budgets, we do additional experiments in MuJoCo tasks and Bullet-Safety-Gym tasks where the ratios to maximum budgets are uniformly sampled from interval $[0,1]$. 
For each algorithm, we sample 100 safety budgets and demonstrate the performances under these budgets. 
Note that the fixed-budget baselines BCQ-Lagrangian and CPQ are not evaluated under this setting since they have to retrain for each new budget, which  leads to an enormous amount of computation. 
The results in  Figures~\ref{fig:halfcheetah_RS}-\ref{fig:BulletSafetyGym_RS} demonstrate that TREBI performs consistently well across different levels of budgets in various environments. 

\begin{figure*}[htbp]
  \centering
  \begin{minipage}[t]{1.0\linewidth}  
        \centerline{\includegraphics[width=\textwidth]{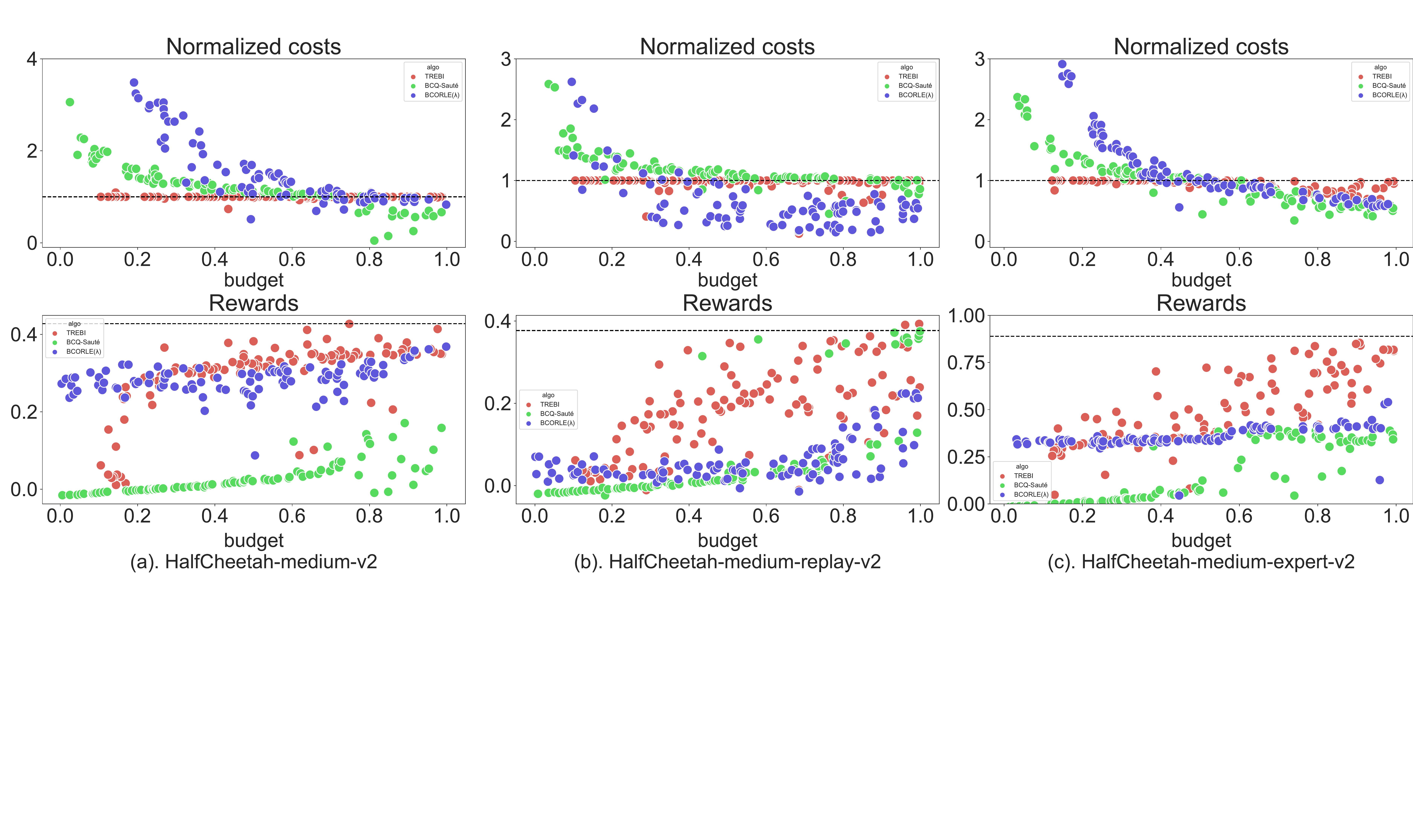}}
        \vskip -1.3in
        \caption{
            Results on HalfCheetah tasks with random budgets sampled from $[0,1]$. 
            The $x$-axis of a sample represents the given budget, which is set to $x$ times the maximum budget, and the $y$-axis represents the cost and reward under the given budget.
        } \label{fig:halfcheetah_RS}
  \end{minipage}
  \vskip -0.8in
  \begin{minipage}[t]{1.0\linewidth}
        \centerline{\includegraphics[width=\textwidth]{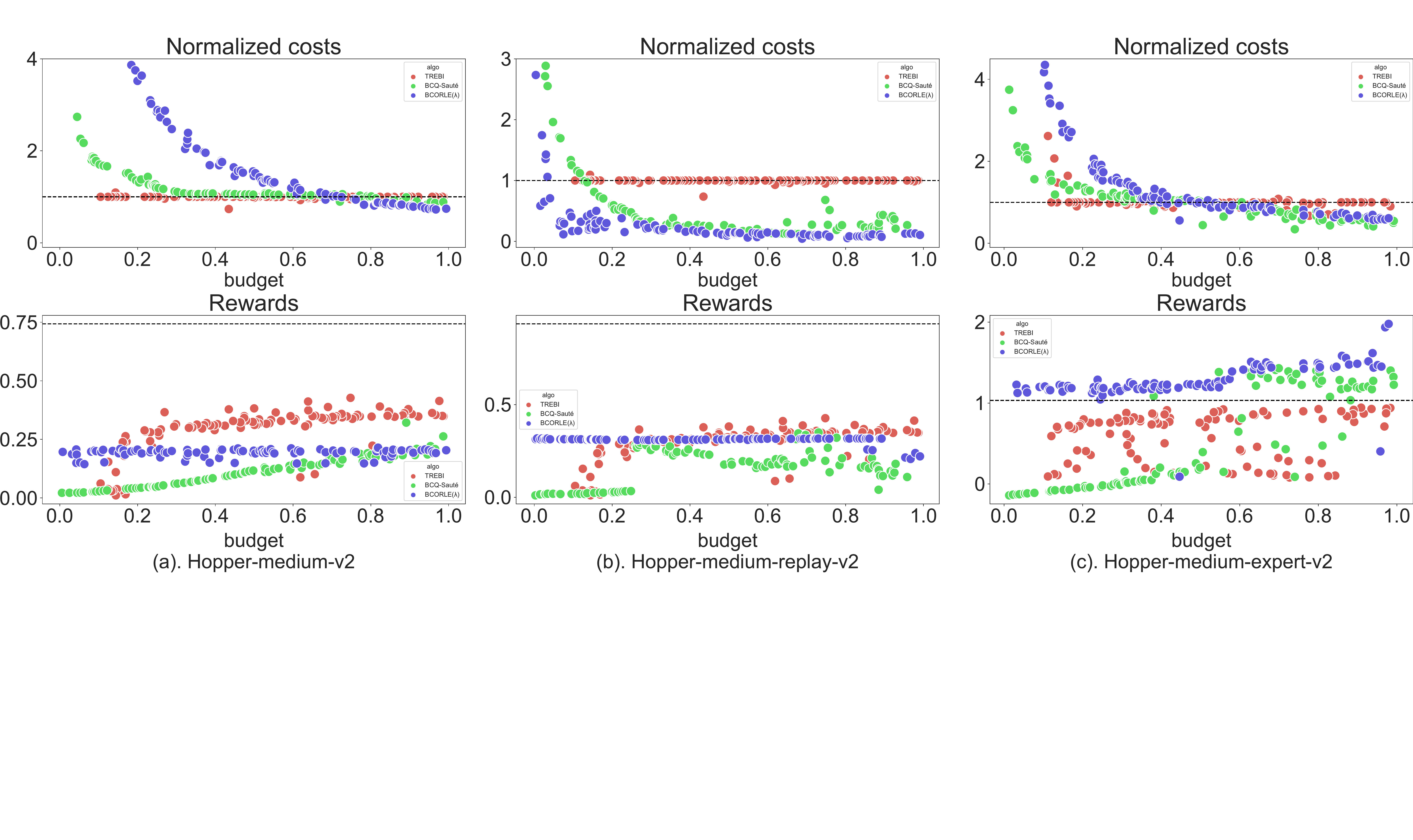}}
        \vskip -1.3in
        \caption{
            Results on Hopper tasks with random budgets sampled from $[0,1]$.
        } \label{fig:hopper_RS}
  \end{minipage}
\end{figure*}

\begin{figure*}[htbp]
    \centering
    \begin{minipage}[t]{1.0\linewidth}
        \centerline{\includegraphics[width=\textwidth]{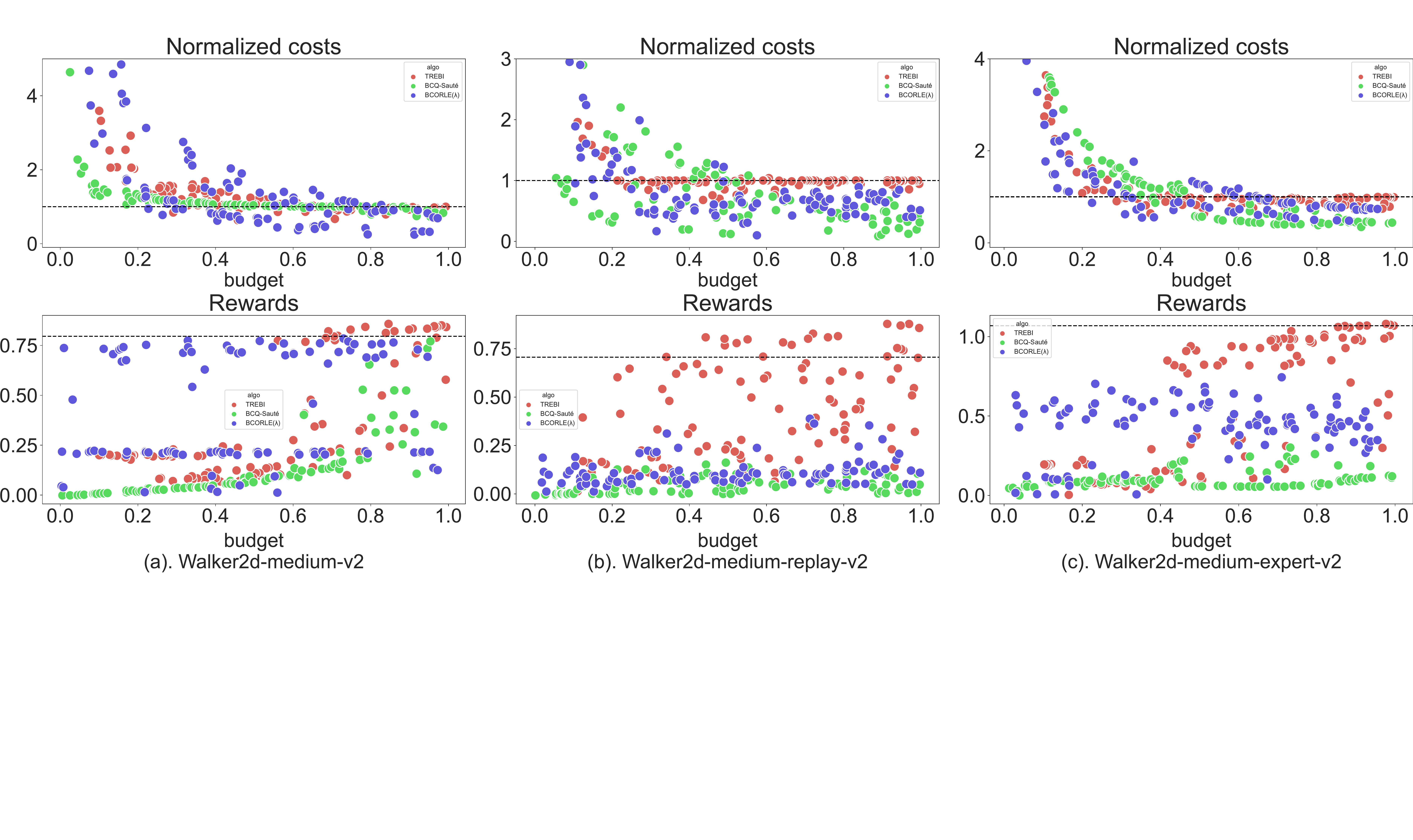}}
        \vskip -1.3in
        \caption{
            Results on Walker2d tasks with random budgets sampled from $[0,1]$.
        } \label{fig:walker2d_RS}
    \end{minipage}
    \vskip -0.8in
    \begin{minipage}[t]{0.66\linewidth}
        \centerline{\includegraphics[width=\textwidth]{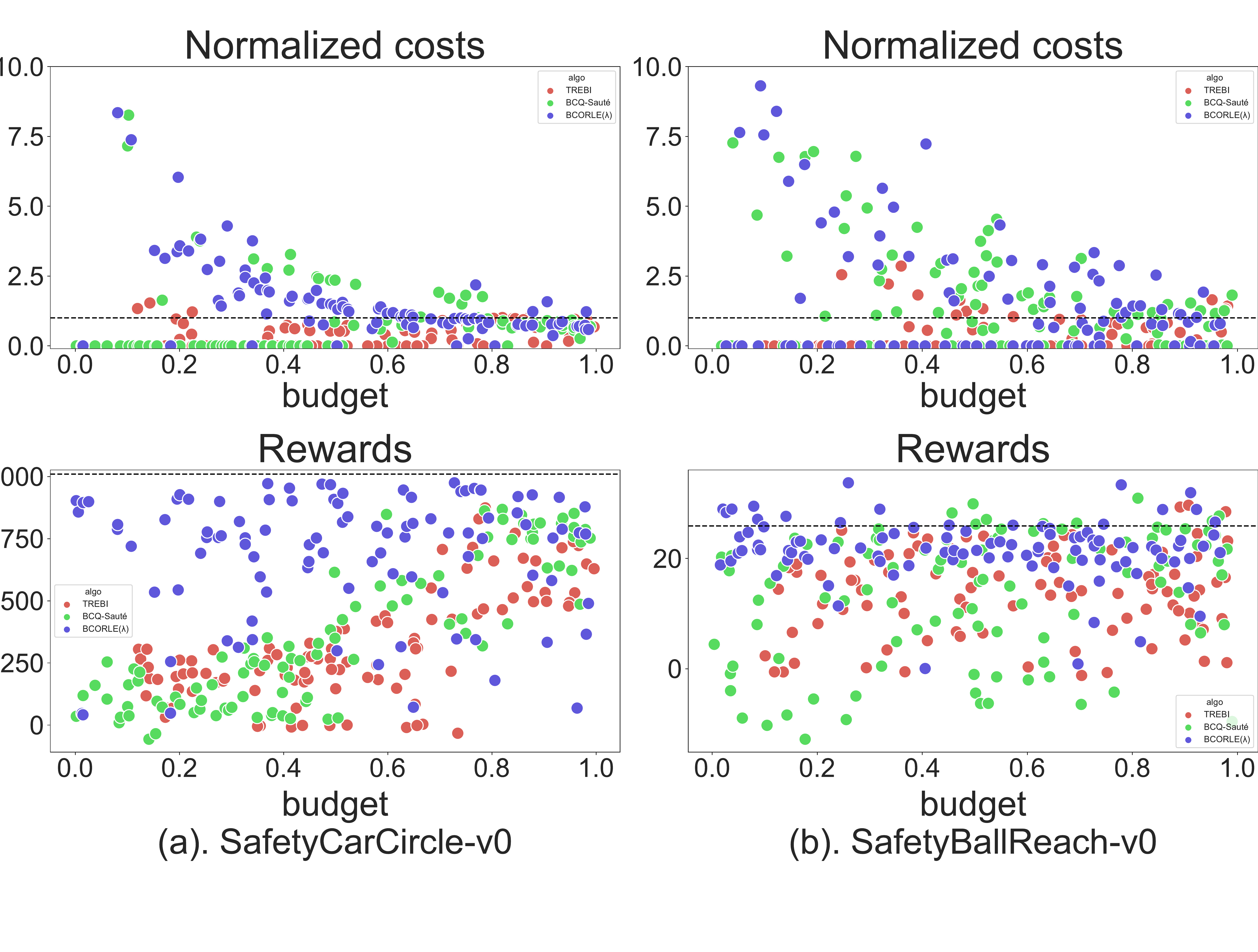}}
        \caption{
            Results on Bullet-Safety-Gym tasks with random budgets sampled from $[0,1]$.
        } \label{fig:BulletSafetyGym_RS}
    \end{minipage}
\end{figure*}

\end{document}